\DeclareMathOperator{\R}{\mathbb{R}}
\providecommand{\scal}[2]{\left\langle{#1},{#2}\right\rangle}
\newcommand{\be}{\begin{equation}}
\newcommand{\ee}{\end{equation}}
\newcommand{\bt}{\begin{theorem}}
\newcommand{\et}{\end{theorem}}
\newcommand{\bd}{\begin{definition}}
\newcommand{\ed}{\end{definition}}
\newcommand{\br}{\begin{remark}}
\newcommand{\er}{\end{remark}}
\newtheorem{theorem}{Theorem}
\newtheorem{definition}{Definition}
\newtheorem{conjecture}{Conjecture}
\newtheorem{lemma}{Lemma}
\newtheorem{proposition}{Proposition}
\newtheorem{remark}{Remark}
\newtheorem{corollary}{Corollary}
\newcommand{\Col}{\mathrm{Col}}
\newcommand{\Null}{\mathrm{Null}}
\newcommand{\vect}{\mathrm{vec}}
\newcounter{lastnote}
\title{Theory of Deep  Learning III: 
explaining the non-overfitting puzzle}
\author {Tomaso Poggio$^{\dagger, \ast}$, Kenji  Kawaguchi
  $^{\dagger \top}$, Qianli Liao$^{\dagger}$,
  Brando Miranda$^\dagger$,  Lorenzo Rosasco$^\dagger$\\
{\it with}\\Xavier Boix$^\dagger$,  Jack Hidary$^{\dagger \dagger}$,
Hrushikesh Mhaskar$^{\diamond}$,\\
\\
  \author
  \normalsize{$^\dagger$Center for Brains, Minds and Machines, MIT}\\
  \normalsize{$^{\dagger \top}$CSAIL, MIT}\\
  \normalsize{$^{\dagger \dagger}$GoogleX}\\
  \normalsize{$^{\top \top}$IIT}\\
  \normalsize{$^{\diamond}$Claremont Graduate
University}\\
 \normalsize{$^\ast$To whom correspondence should be addressed; E-mail:
  tp@ai.mit.edu} 
}
\date{}
\newcommand*{\titleAT}{\begingroup
  \newlength{\drop}
  \drop=0.05\textheight
  \begin{center}
  \includegraphics[scale=0.4]{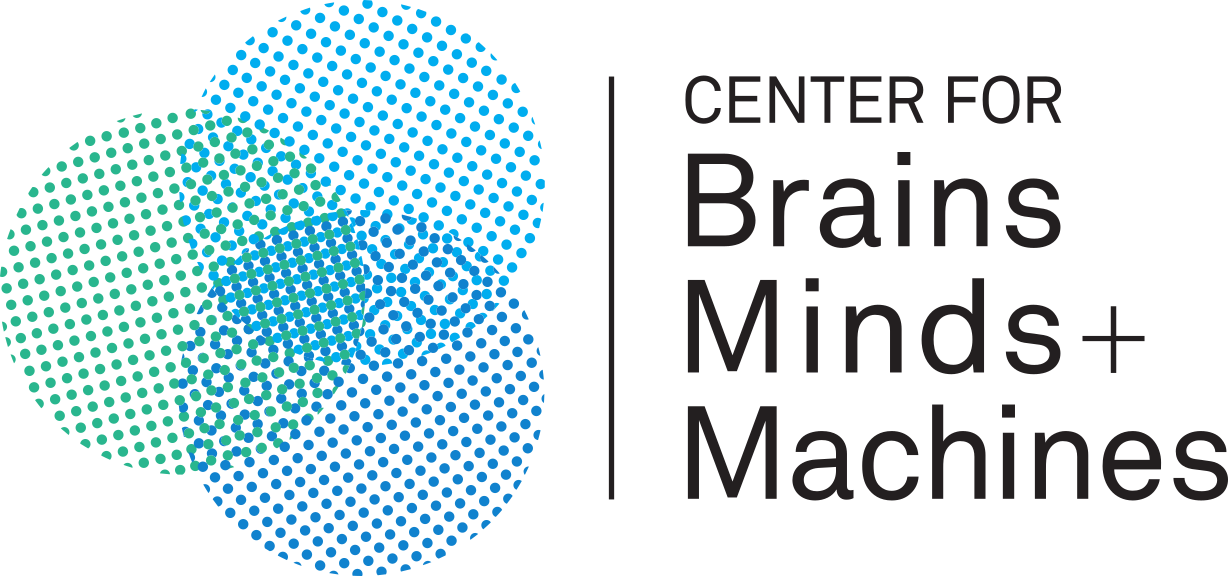} 
  \end{center} 
  \vspace{2pt}\vspace{-\baselineskip}

  \vspace{\drop}
  \textbf{\large{CBMM Memo No. \memonumber}}   \hfill    \textbf{\large{\memodate}} 

  \begin{center}
    \textbf{\huge{\memotitle}}\\
   \textbf{{by}}\\
    \vspace{0.4\drop}
    \large{\memoauthors}
  \end{center}
  \textbf{\large{\noindent Abstract}:} {\memoabstract}



  \begin{minipage}{.15\linewidth}
    \includegraphics[scale=0.1]{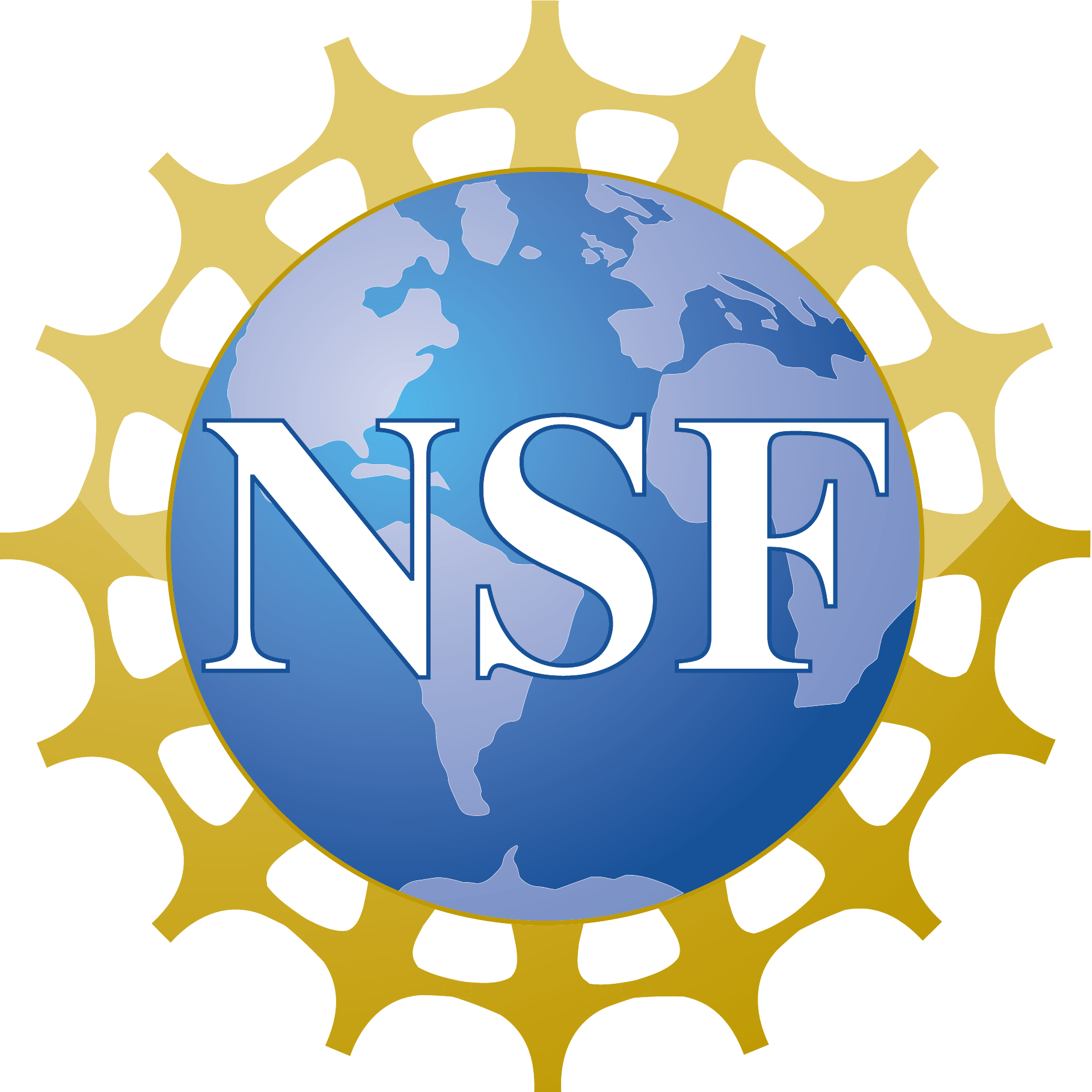}
  \end{minipage}
  \begin{minipage}{.84\linewidth}
    \textbf{{This work was supported by the Center for Brains, Minds and Machines (CBMM), funded by NSF STC award  CCF - 1231216.}}
  \end{minipage}
  \endgroup}
\begin{document}

\def\memonumber{073}
\def\memodate{\today}
\def\memotitle {Theory of Deep  Learning III:  explaining the non-overfitting puzzle}
\def\memoauthors{
  T. Poggio$^{\dagger}$, K. Kawaguchi $^{\dagger \top}$, Q. Liao$^{\dagger}$,  B. Miranda$^\dagger$,  L. Rosasco$^\dagger$\\   
   {\it with} \\
   X. Boix$^\dagger$,  J. Hidary$^{\dagger \dagger}$,  H. Mhaskar$^{\diamond}$,\\
  \small{$^\dagger$Center for Brains, Minds and Machines, MIT}\\
  \small{$^{\top}$CSAIL, MIT}\\
  \small{$^{\dagger \dagger}$Alphabet (Google) X}\\
  \small{$^{\diamond}$Claremont Graduate University}\\  

}

\normalsize \def\memoabstract{ A main puzzle of deep networks revolves
  around the absence of overfitting despite large overparametrization
  and despite the large capacity demonstrated by zero training error
  on randomly labeled data.  In this note, we show that the dynamics
  associated to gradient descent minimization of nonlinear networks is
  topologically equivalent, near the asymptotically stable minima of
  the empirical error, to  linear gradient system in a quadratic
  potential with a degenerate (for square loss)  or almost degenerate
  (for logistic or crossentropy loss)  Hessian. The
  proposition depends on the qualitative theory of dynamical
  systems and is supported by numerical results.

  Our main propositions extend to deep nonlinear networks two
  properties of gradient descent for linear networks, that have been
  recently established \cite{2017arXiv171010345S} to be key to their
  generalization properties:

\begin{enumerate}  
\item Gradient descent enforces a form of
  implicit regularization controlled by the number of iterations, and
  asymptotically converges to the minimum norm solution for
  appropriate initial conditions of gradient descent. This implies
  that there is usually an optimum early stopping that avoids
  overfitting of the loss. This property, valid for the square loss
  and many other loss functions,  is relevant especially for
  regression.
\item For classification, the asymptotic convergence to the minimum
  norm solution implies convergence to the maximum margin solution
  which guarantees good classification error for ``low noise''
  datasets. This property holds for loss functions such as the
  logistic and cross-entropy loss independently of the initial
  conditions.
\end{enumerate}  
The robustness to overparametrization has suggestive implications for
the robustness of the architecture of deep convolutional networks with
respect to the curse of dimensionality.}

\titleAT

\newpage     










\section{Introduction}

In the last few years, deep learning has been tremendously successful
in many important applications of machine learning. However, our
theoretical understanding of deep learning, and thus the ability of
developing principled improvements, has lagged behind. A satisfactory
theoretical characterization of deep learning is finally emerging. It
covers the following questions: 1) {\it representation power} --- what
types of functions can deep neural networks (DNNs) represent and under
which conditions can they be more powerful than shallow models; 2)
{\it optimization} of the empirical loss --- can we characterize the
minima obtained by stochastic gradient descent (SGD) on the non-convex
empirical loss encountered in deep learning? 3) {\it generalization}
--- why do the deep learning models, despite being highly
over-parameterized, still predict well? Whereas there are satisfactory
answers to the the first two questions (see for reviews
\cite{Theory_I,Theory_II} and references therein), the third question
is still triggering a number of papers (see among others
\cite{Hardt2016, NeyshaburSrebro2017, Sapiro2017, 2017arXiv170608498B,
  Musings2017}) with a disparate set of partial answers.  In
particular, a recent paper titled ``Understanding deep learning
requires rethinking
generalization''\cite{DBLP:journals/corr/ZhangBHRV16} claims that the
predictive properties of deep networks require a new approach to
learning theory. This paper is motivated by observations and
experiments by several authors, in part described in
\cite{Musings2017}, where a more complex set of arguments is used to
reach similar conclusions to this paper. It shows that the
generalization properties of linear networks described in
\cite{2017arXiv171010345S} and \cite{RosascoRecht2017} can be extended to deep
networks.

Using the classical theory of  ordinary differential equations, our approach replaces a
potentially fundamental puzzle about generalization in deep learning
with elementary properties of gradient optimization techniques. This
seems to explain away the generalization puzzle of today's deep
networks and to imply that there is no fundamental problem with classical
learning theory. The paper is at the level of formal rigor of a
physicist (not a mathematician).

In this paper we focus on gradient descent (GD) rather than stochastic
gradient descent (SGD). The main reason is simplicity of analysis,
since we expect the relevant results to be valid in both
cases. Furthermore, in ``easy'' problems, such as CIFAR, one can
replace SGD with GD without affecting the empirical results, given
enough computational resources. In more difficult problems, SGD, as
explained in \cite{Theory_IIb}, not only converges faster but also is
better at selecting global minima vs. local minima. 


Notice that in all computer simulations reported in this paper, we
turn off all the ``tricks'' used to improve performance such as data
augmentation, weight decay etc. in order to study the basic properties
of deep networks optimized with the SGD or GD algorithm (we keep
however batch normalization in the CIFAR experiments). We also reduce
in some of the experiments the size of the network or the size of the
training stet. As a consequence, performance is not state of the art,
but optimal performance is not the goal here (in fact we achieve very
close to state-of-the-art performance using standard number of
parameters and data and the usual tricks). In our experiments we used
the cross entropy loss for training, which is better suited for
classification problems (such as for CIFAR), as well as the square
loss for regression (see SI \ref{SquareClass}).

Throughout the paper we  use the fact that that a polynomial
network obtained by replacing each ReLUs of a standard deep
convolutional network with its univariate polynomial approximation
shows all the properties of deep learning networks. As a side remark,
this result shows that RELU activation functions can be replaced by
smooth activations (which are not {\it positively homogeneous})
without affecting the main properties of deep networks.

\section{Overfitting Puzzle}

Classical learning theory characterizes generalization behavior of a
learning system as a function of the number of training examples
$n$. From this point of view deep learning networks behave as
expected: the more training data, the smaller the test error, as shown
in the left of Figure \ref{TwoRegimes}. Other aspects of their
learning curves seem less intuitive but are also easy to explain.
Often the test error decreases for increasing $n$ even when the
training error is zero (see Figure \ref{TwoRegimes} left side). As
noted in \cite{2017arXiv171010345S} and \cite{RosascoRecht2017}, this
is because the classification error is reported, rather than the loss
minimized for training, e.g.  cross-entropy. It is also clear that
deep networks do show {\it generalization}, technically defined as
convergence for $n \to \infty$ of the training error to the expected
error.  The left and the right plots in Figure \ref{TwoRegimes}
indicates generalization for large $n$. This is expected from previous
results such as Bartlett's\cite{AntBartlett2002} and especially from
the stability results of Recht\cite{DBLP:journals/corr/HardtRS15}.

The property of generalization, though important, is however of
academic importance only, since deep networks are typically used in
the overparametrized case, that is, in a regime in which several
classical generalization bounds are not valid. The real puzzle in this regime-- and
the focus of this paper -- is the apparent lack of overfitting,
despite usually large overparametrization, that is many more
parameters than data. The same network which achieves zero training
error for randomly labeled data (right plot in Figure \ref{TwoRegimes}), clearly showing large capacity, does
not have any significant overfitting in classifying normally labeled
data (left plot in Figure \ref{TwoRegimes}). 
 

\begin{figure*}[h!]\centering
\includegraphics[width=1.0\textwidth]{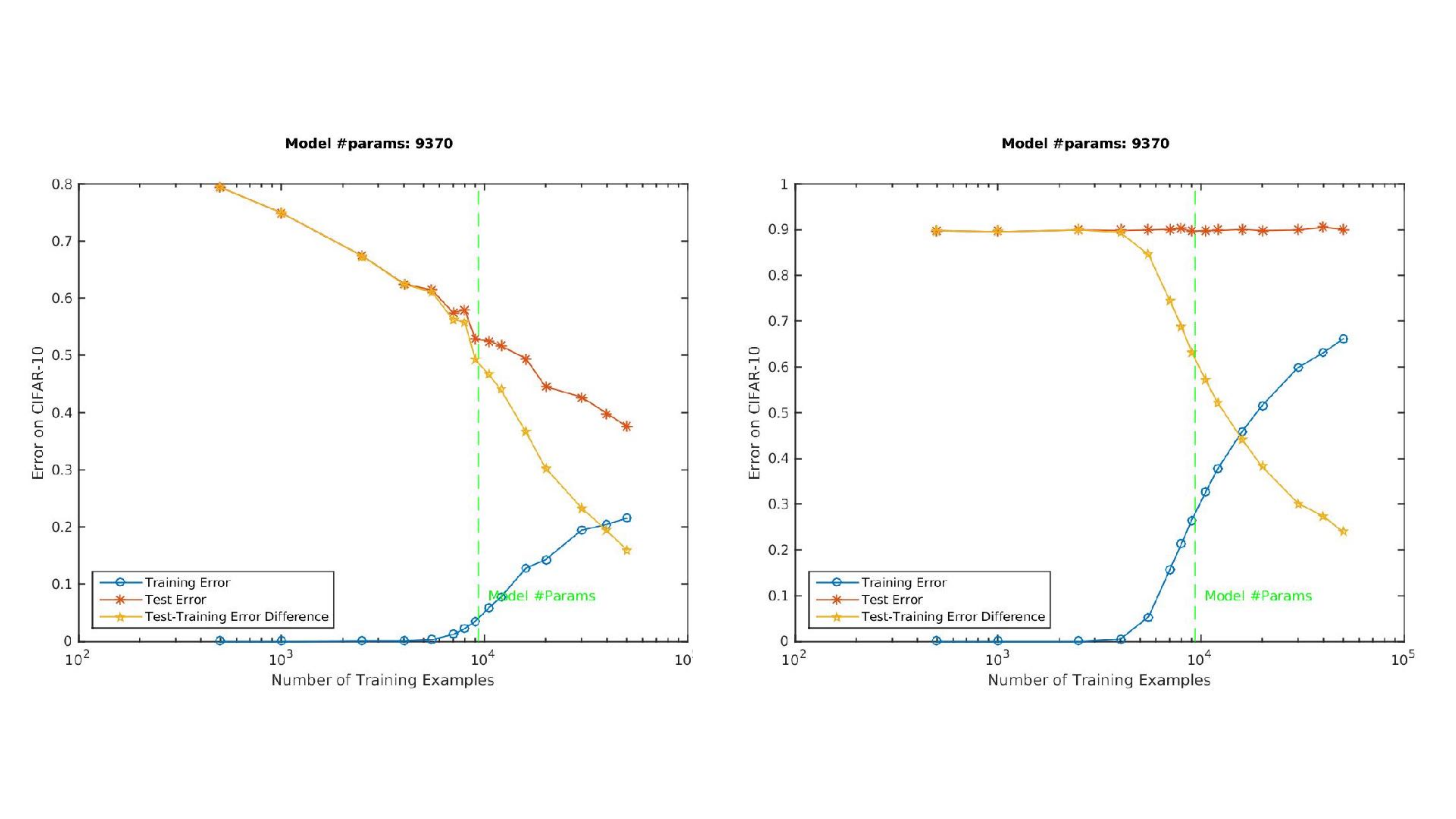}
\caption{\it The figure (left) shows the behavior the classification
  error for a deep network
  with RELU activations trained on subsets of the CIFAR database by
  minimizing the crossentropy loss. The
  figure on the right shows the same network trained on subsets of the
  CIFAR database in which the labels have been randomly scrambled.
  The network is a 5-layer all convolutional network (i.e., no
  pooling) with 16 channels per hidden layer , resulting in only
  $W \approx 10000$ weights instead of the typical $300,000$. Neither
  data augmentation nor regularization is performed.}
\label{TwoRegimes}
\end{figure*}

As a general warning it should be clear that overparametrization
cannot be decided simply in terms of the exact number of parameters vs
number of data. As shown in SI \ref{DeepNetsLinSusubsection}, there is
trivial and nontrivial parametrization. The number of parameters is
just a rough guideline to overparametrization.

\section{Linear Networks}

In the special case of linear models, an explanation for the lack of
overfitting has been recently proposed in \cite{2017arXiv171010345S}
and \cite{RosascoRecht2017}.  Two main properties are suggested to be
important: the difference between classification error and loss, and
the implicit regularization properties of gradient descent methods.
Gradient descent iteratively controls the complexity of the model and
the regularization parameter is the inverse of the number of
iterations. As the number of iterations increases, less regularization
is enforced, and in the limit the minimum norm solution is selected.
The latter is the maximum margin solution ensuring good classification
error for separable problems. Though this is valid for several
different loss functions, the maximum margin solution has somewhat
different properties in the case of square loss vs. logistic or
cross-entropy loss.

\subsection{Gradient descent methods yield implicit regularization}

This section is based on \cite{RosascoRecht2017}. 

In linear models with square loss, it is known that, for appropriate
initial conditions, GD provides {\it implicit regularization}
controlled by the number of iterations $j$ (for a fixed gradient step)
as $\lambda \propto \frac{1}{j}$ (see \cite{rosasco2015learning} and
section \ref{Lorenzo} of SI for a summary). As the number of
iterations increases, the equivalent, implicit $\lambda$ decreases and
a larger set of possible solutions is explored (in analogy to Tikhonov
regularization with a decreasing regularization parameter, see Figure
\ref{CIFARclass_pert}).  For overparametrized models one expects
overfitting in the loss: the asymptotic solution for $t \to \infty$ --
which corresponds to $\lambda \to 0$ -- converges in the limit to the
minimum norm solution (for square loss this is the pseudoinverse),
which is not a regularized solution itself, but the limit of a
regularized solution (see for instance section $5$ in
\cite{DBLP:journals/corr/ZhangBHRV16}).  This overfitting in the loss
can be avoided, even without explicit regularization, by appropriate
early stopping, as shown in Figure \ref{Brando1}.

\subsection{Implicit regularization yields margin maximization}
\label{marginsection}

A direct consequence of the implicit regularization by gradient
descent is that the solution obtained after sufficiently many
iterations is the minimum norm solution.  The proof holds in the case
of linear networks for a variety of loss functions and in particular
for the square loss
(\cite{DBLP:journals/corr/ZhangBHRV16} and SI section \ref{MinNorm}
for a short summary).

Convergence to the minimum norm solution is suggested in
\cite{2017arXiv171010345S} to be the main reason behind the lack of
overfitting in classification. Indeed, the minimum norm solution is
known to maximize classification margin, which in turn ensures good
classification error for ``low noise'' data sets, which, informally,
exactly the data sets where the classes are separated by a nicely
behaving margin.  The precise conditions on the data that imply good
classification accuracy are related to Tsybakov conditions (see for
instance \cite{Yao2007}).

In the case of the logistic and crossentropy loss, properties of the
convergence to the maximum margin solution which is well known for
linear networks with the square loss, should be qualified further,
since proofs such as \cite{DBLP:journals/corr/ZhangBHRV16} need to be
modified.   Lemma 1 in \cite{2017arXiv171010345S} shows that for loss
functions such as cross-entropy, gradient descent on linear networks
with separable data converges {\it asymptotically to the max-margin
  solution with any starting point $w_0$, while the norm $||w||$
  diverges} in agreement with Lemma
\ref{robustness-lemma}. Furthermore, \cite{2017arXiv171010345S} prove
that the {\it convergence to the maximum margin solution is very slow
  and only logarithmic in the convergence of the loss itself}. This
explains why optimization of the logistic loss helps decrease the
classification error in testing, even after the training
classification error is zero and the training  loss is very small, as
in Figure \ref{TwoRegimes}.

{\bf Remarks}

\begin{itemize}
\item An alternative path to prove convergence to a maximum margin
  solution, connecting it directly to {\it flat minima} and assuming a
  classification setting with a loss such as the logistic loss, is
  described in \cite{Musings2017} and in Lemma \ref{robustness-lemma}
  of the SI: {\it SGD maximizes flatness of the minima, which is
    equivalent to robust optimization, which maximizes margin}.

\item The crossentropy loss is an upper bound for the classification error. 

\end{itemize}

\section{Deep networks at global minima are topologically equivalent to linear networks} 

{\it Qualitative properties of the dynamical system}

To be able to use the linear results also for deep nonlinear networks,
we introduce here classical properties of dynamical systems defined in
terms of the gradient of a Lyapunov function such as the training
loss.

The gradient dynamical system corresponding to training a deep network
with gradient descent with a loss function $L$ is
\begin{equation}
\dot{W} = -\nabla_{W} L(W) = - F(W).
\label{GradSys}
\end{equation}

The simplest example is $L(w)=\sum_1^n (f_w(x_i)-y_i)^2$ where $f$ is
the neural network parametrized by the weights and $n$ is the number
of example pairs $x_i, y_i$ used for training. 

We are interested in the qualitative behavior of the dynamical system
near stable equilibrium points $W^*$ where $F(W^*)=0$. One of the key
ideas in stability theory is that the qualitative behavior of an orbit
under perturbations can be analyzed using the linearization of the
system near the orbit.  Thus the first step is to linearize the
system, which means considering the Jacobian of $F$ or equivalently
the Hessian of $L$ at $W^*$, that is

 \begin{equation}
  (\mathbf{H} L)_{ij} = \frac{\partial^{2} L}{\partial
                      w_{i} \partial w_{j} }
 \end{equation}. 

We obtain

\begin{equation}
\dot{W} = - H W,
\label{GradSysLin}
\end{equation}

\noindent where the matrix $H$, which has only real
eigenvalues (since it is symmetric), defines in our case (by
hypothesis we do not consider unstable critical points) two main subspaces:

\begin{itemize}
\item the stable subspace spanned by eigenvectors corresponding to
  negative eigenvalues
\item the center subspace corresponding to zero eigenvalues.
\end{itemize}

The center manifold existence theorem \cite{Carr81} states that if $F$ has $r$
derivatives (as in the case of deep polynomial networks) then at every
equilibrium $W^*$ there is a $\mathbf{C}^r$ stable manifold and a $\mathbf{C}^{r-1}$
center manifold which is sometimes called {\it slow manifold}. The
center manifold emergence theorem says that there is a neighborhood of
$W^*$ such that all solutions from the neighborhood tend exponentially
fast to a solution in the center manifold. In general properties of the
solutions in the center manifold depends on the nonlinear parts of
$F$. We assume that the center manifold is not unstable in our case,
reflecting empirical results in training networks.

{\it The Hessian of deep networks}

%
%

The following two separate results imply that the Hessian of the
loss function of deep networks is indeed degenerate at zero-loss minima
(when they exist):

\begin{enumerate}
\item Polynomial deep networks can approximate arbitrarily well a
  standard Deep Network with ReLU activations, as shown theoretically
  in section \ref{SectionPolynomials} of SI and empirically in Figures
  \ref{TwoRegimesPol} and \ref{GreatPlotPol}. Bezout theorem about the
  number of solutions of sets of polynomial equations suggests many
  {\it degenerate} zero-minimizers of the square loss for polynomial
  activations \cite{Theory_II}.  Note that the energy function $L$ determining
  the gradient dynamics is a polynomial.
\item Section \ref{MultiNets} in SI proves theorem
  \ref{thm:zero_eigenvalue} of which an informal statement is: 
  
\begin{lemma}
\label{Kenji}
Assume that gradient descent of a multilayer overparametrized network
with nonlinear activation (under the square loss) converges to a
minimum with zero error. Then the Hessian at the minimum has one or
more zero eigenvalues.
\end{lemma}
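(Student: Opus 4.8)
The plan is to exploit the Gauss--Newton structure of the Hessian of a sum-of-squares loss, combined with an elementary rank count. Write the residuals $r_i(W) = f_W(x_i) - y_i$ so that $L(W) = \sum_{i=1}^{n} r_i(W)^2$ (I take a scalar output for clarity; the vector-valued case is identical with $n$ replaced by $n$ times the output dimension). Since we may assume smooth --- indeed polynomial --- activations, as discussed above, each $r_i$ is $C^2$ in $W$, and differentiating twice gives
\begin{equation}
\mathbf{H}L(W) = 2\sum_{i=1}^{n} \nabla r_i(W)\,\nabla r_i(W)^{\top} + 2\sum_{i=1}^{n} r_i(W)\,\mathbf{H}r_i(W).
\label{eq:GN-prop}
\end{equation}

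First I would use the zero-error hypothesis: at the limiting minimum $W^*$ one has $f_{W^*}(x_i)=y_i$, i.e.\ $r_i(W^*)=0$ for every $i$, so the second sum in \eqref{eq:GN-prop} vanishes identically and
\begin{equation}
\mathbf{H}L(W^*) = 2\sum_{i=1}^{n}\nabla r_i(W^*)\,\nabla r_i(W^*)^{\top} = 2\,J^{\top}J,
\end{equation}
where $J$ is the $n\times p$ Jacobian with $i$-th row $\nabla f_{W^*}(x_i)^{\top}$ and $p$ is the number of weights. In particular $\mathbf{H}L(W^*)$ is positive semidefinite (consistent with $W^*$ being a minimum) and $\mathrm{rank}\,\mathbf{H}L(W^*) = \mathrm{rank}\,J \le n$.

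The last step is the overparametrization count: by hypothesis $p>n$ (more precisely, $p$ exceeds the number of scalar constraints that the training set imposes), hence $\mathrm{rank}\,\mathbf{H}L(W^*) \le n < p$, so the Hessian is singular and has at least $p-n\ge 1$ zero eigenvalues, which is the claim.

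I do not expect a genuine obstacle here; the only care needed is bookkeeping --- pinning down what ``overparametrized'' means (parameters strictly outnumber the scalar output constraints contributed by the data) and guaranteeing differentiability of $W\mapsto f_W$ at $W^*$, which is precisely why one passes to polynomial activations. For a literal ReLU network the same Gauss--Newton decomposition can be run at points of differentiability, or through the Clarke generalized Hessian. Note that no property of the gradient-descent trajectory is used beyond the fact that its limit is a zero-error critical point of $L$.
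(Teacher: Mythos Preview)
Your argument is correct and is essentially the same as the paper's: the proof of Theorem~\ref{thm:zero_eigenvalue} also uses the Gauss--Newton decomposition of the Hessian, observes that the residual-weighted second-derivative term vanishes at a zero-error point so that $\nabla^2 L(w^*)=\nabla\hat Y(w^*)\nabla\hat Y(w^*)^\top$, and then concludes singularity by a rank count. The only difference is cosmetic: the paper computes the Jacobian block-by-block for the multilayer architecture and thereby states a layer-wise overparametrization condition $N_kN_{k-1}>n\cdot\min(N_k,\dots,N_{H+1})$, whereas you use the cleaner global bound $\mathrm{rank}\,J\le n$ (or $n\cdot d'$ in the vector-output case), which already suffices for the lemma as stated.
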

\end{enumerate}

The fact that the Hessian is degenerate at minima (many zero
eigenvalues) is in fact empirically well-established (for a recent
account see \cite{DBLP:journals/corr/SagunBL16}).

The following Proposition follows, where we use the term ``stable
limit point'' or stable equilibrium, to denote an asymptotic limit for $t \to \infty$ of
Equation \ref{GradSys}, which is stable in the sense of Liapunov (e.g.
there is a $\delta$ such that every $W$ within an $\delta$
neighborhood of the limit $W*$ will remain under the gradient dynamics
at distance smaller than $\epsilon$ of it).

\begin{proposition}
\label{proposition}
If 
\begin{enumerate}
 \item $W^*$ is a stable equilibrium of the gradient dynamics of a
   deep network trained with the square loss
\item $W^*$ is a  zero minimizer 
\end{enumerate}
then the ``potential function'' $L$ is locally quadratic and
degenerate.
\end{proposition}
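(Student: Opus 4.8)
The plan is to unpack the definitions and assemble the ingredients already laid out in the excerpt. The statement to prove is that at a stable zero-loss equilibrium $W^*$ the loss $L$ is locally quadratic and degenerate. ``Locally quadratic'' I would establish by Taylor expansion: since $L$ is smooth (a polynomial in the polynomial-network case, or at least $C^2$), we may write $L(W^* + v) = L(W^*) + \nabla L(W^*)^\top v + \tfrac12 v^\top H v + o(\|v\|^2)$. Because $W^*$ is an equilibrium of \eqref{GradSys} we have $F(W^*) = \nabla_W L(W^*) = 0$, and because $W^*$ is a \emph{zero} minimizer we also have $L(W^*) = 0$; hence the expansion collapses to $L(W^* + v) = \tfrac12 v^\top H v + o(\|v\|^2)$, which is the precise sense in which the potential is locally quadratic, with $H = \mathbf{H}L(W^*)$ the Hessian.

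Next I would argue the degeneracy, i.e. that $H$ has a nontrivial kernel. This is exactly the content assembled just before the Proposition: Lemma~\ref{Kenji} (the informal form of Theorem~\ref{thm:zero_eigenvalue}) states that for an overparametrized multilayer network with nonlinear activation trained under the square loss, convergence of gradient descent to a zero-error minimum forces one or more zero eigenvalues of the Hessian at that minimum. Since hypotheses (1) and (2) of the Proposition place us precisely in that situation — $W^*$ is the asymptotic limit of the gradient dynamics and a zero minimizer — Lemma~\ref{Kenji} applies verbatim and yields $\dim \Null(H) \ge 1$. One may additionally invoke the Bezout-theorem heuristic for polynomial activations, and the cited empirical evidence \cite{DBLP:journals/corr/SagunBL16}, as corroboration, but the logical backbone is Lemma~\ref{Kenji}.

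Finally I would tie in the role of \emph{stability}: at a zero minimizer of a nonnegative loss, $H$ is automatically positive semidefinite (a minimum cannot have negative curvature directions), so its eigenvalues split into strictly positive ones (the stable subspace) and zero ones (the center subspace), with no unstable directions — consistent with the dynamical-systems picture preceding the Proposition and with the standing assumption that the center manifold is not unstable. Linearizing \eqref{GradSys} then gives $\dot W = -H W$ as in \eqref{GradSysLin}, a linear gradient flow in the degenerate quadratic potential $\tfrac12 v^\top H v$, which is the claimed topological/geometric picture.

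The main obstacle is not the Taylor expansion — that is routine — but the genuine content hidden in Lemma~\ref{Kenji}/Theorem~\ref{thm:zero_eigenvalue}: establishing that overparametrization plus a nonlinear activation plus zero training error actually \emph{forces} a kernel of the Hessian, rather than merely making it plausible. That argument (carried out in SI section~\ref{MultiNets}) must control how the network's parametrization map degenerates at an interpolating solution — roughly, that the Jacobian of the map from weights to the vector of residuals loses rank when there are more weights than constraints, so that $H$, which near a zero minimizer behaves like that Jacobian's Gram matrix, is rank-deficient. Since the Proposition is allowed to invoke that earlier result, the remaining work here is only the bookkeeping above; the delicate step has been quarantined into the lemma.
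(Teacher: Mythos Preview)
Your proposal is correct and follows essentially the same route as the paper's own proof sketch: use hypothesis (1) to kill the gradient term, invoke Lemma~\ref{Kenji} to obtain zero eigenvalues of the Hessian at the zero minimizer, and read off the local quadratic (and degenerate) form of $L$ from the second-order expansion. The paper phrases the quadratic part via the eigenbasis $L(W^*+w)=\sum_k \lambda_k^2\alpha_k^2$ rather than your Taylor-with-remainder version, but the content is identical; your added remark that $H\succeq 0$ at a zero of a nonnegative loss (hence no unstable directions) is a small clarification the paper leaves implicit.
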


{\it Proof sketch}

\begin{itemize}
\item The first hypothesis  corresponds to zero gradient at $W^*$,
  that is
\begin{equation}
(\nabla_w L) (W^*) = 0;\quad\quad\quad
\end{equation}

\item Lemma \ref{Kenji}  shows that if $L(W^*)=0$ the Hessian has one or more zero eigenvalues, that is 

\begin{itemize}
\item let $\{v_1,\dots,v_d\}$ be an orthonormal basis
of eigenvectors of the Hessian at $W^*$, $\mathbf{H} L(W^*)$, and
let $\lambda_1 \ge \cdots \ge \lambda_d \ge 0$ be the corresponding
eigenvalues (the Hessian is a real symmetric and positive
semidefinite matrix). Then, any vector $w \in \mathbb R^d$ can be
written as $w=\sum_{k}^{d}\alpha_k v_k$ and

\begin{equation}
L(W^*+w) = w^\top \left(\mathbf{H} L(W^*)\right) w= \sum_{k}^j \lambda^2_k\alpha_k^{2},
\label{quadraticapproximation}
\end{equation}


\noindent which is convex in $\alpha_1,\cdots, \alpha_j$ and
degenerate in the directions $\alpha_{j+1}, \cdots, \alpha_d$, for
which $\lambda_{j+1}=\lambda_{d}=0$.
\end{itemize}

Notice that the term ``locally quadratic'' in the statement of the
Proposition means that $L(W^* +w)$ is quadratic in $w$.

If $L(w)$ is a polynomial in $w$, the hypothesis implies that all
linear terms disappear at $w*$ and that the quadratic terms have
dimensionality $d_e<d$ with $d_e > 0$. Notice that $L$, as a
polynomial, cannot have a flat minimum in a finite neighborhood around
$W^*$ without being identically zero (thus the ``isotropically flat''
minima claimed in \cite{Musings2017} do not exist if $L$ is a
polynomial; furthermore the claim \cite{DBLP:journals/corr/DinhPBB17} that flat valleys are not directly
important for generalization is then correct).

\end{itemize}
The informal statement of this proposition is

{\it Each of the zero minima found by GD or SGD is locally well
  approximated by a quadratic degenerate minimum -- the
  multidimensional equivalent of the two-dimensional minimum of Figure
  \ref{degeneratesquareloss}. The dynamics of gradient descent for a
  deep network near such a minimum is topologically equivalent to the
  dynamics of the corresponding linear network. }

\begin{figure*}[h!]\centering
\includegraphics[width=1.0\textwidth]{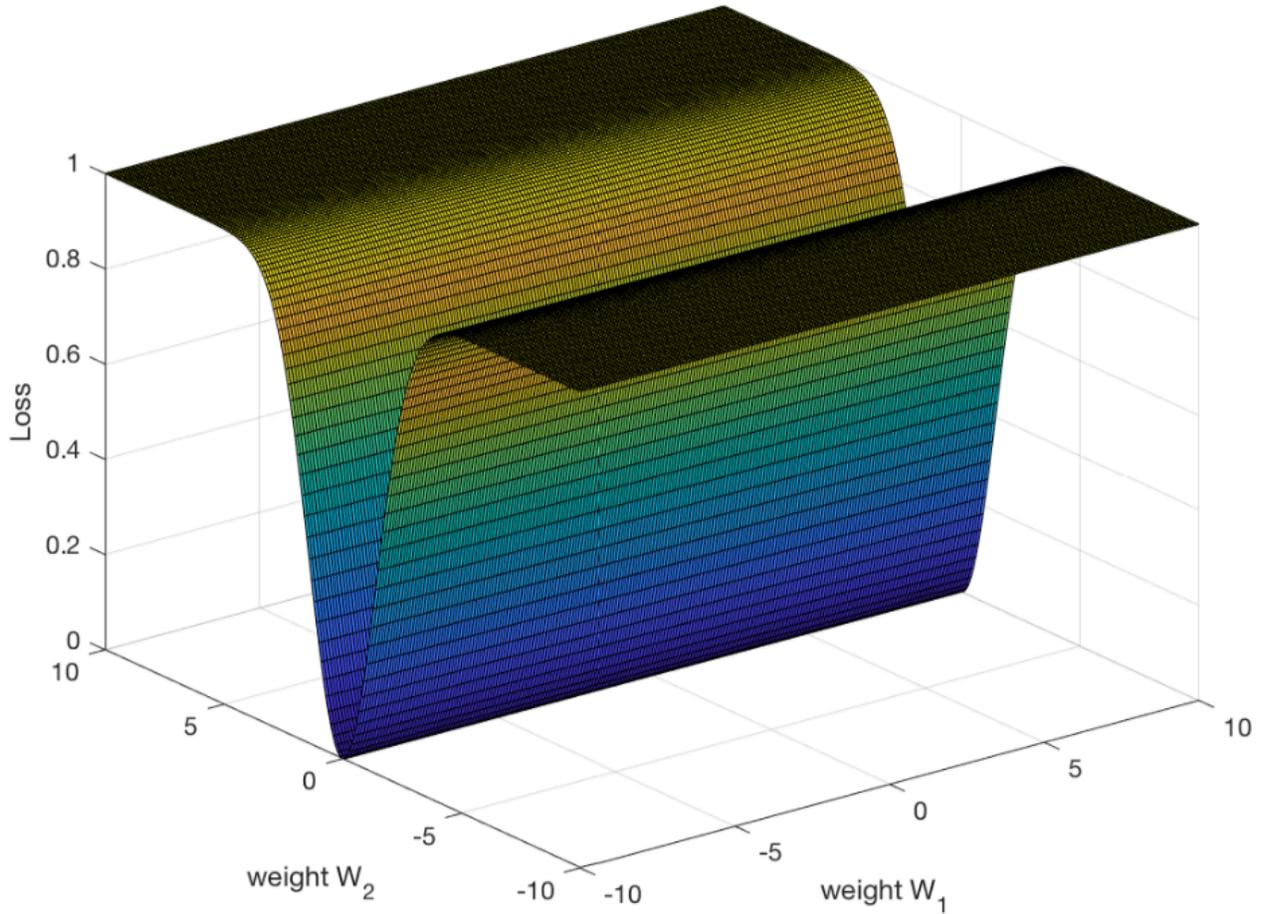}
\caption{\it An illustration of a quadratic loss function which is
  locally quadratic near the minimum in the two parameters $w_1$ and
  $w_2$. The minimum has a degenerate Hessian with a zero
  eigenvalue. In the proposition described in the text, this figure
  represents the ``generic'' situation in a small neighborhood of each
  of the zero minimizers with many zero eigenvalues -- and a few
  positive eigenvalues -- of the Hessian of a nonlinear multilayer
  network. In multilayer networks the loss function is likely to be a
  fractal-like hypersurface with many degenerate global minima, each
  locally similar to a multidimensional version of the degenerate
  minimum shown here. For the crossentropy loss, the degenerate
  valley, instead of being flat, is slightly sloped downwards for
  $||w|| \to \infty$. }
\label{degeneratesquareloss}
\end{figure*}

\subsection{Testing the local structure of global minima}


For a test of our proposition or, more precisely, of its assumptions,
consider the dynamics of a perturbation
$\delta W$ of the solution
\begin{equation}
\dot{\delta W} =-  [F(W^*+\delta W) -F(W^*)] =-(  \mathbf{D} L(W^{*})) \delta W.
\label{linearization}
\end{equation}

\noindent where asymptotic stability in the sense of Liapunov is guaranteed if the sum of the
eigenvalues of $- \mathbf{D} L $ -- where $D L$ is the Hessian of $L$ --
is negative.

Consider now the following experiment. After convergence apply a small
random perturbation with unit norm to the parameter vector, then run
gradient descent until the training error is again zero; this sequence
is repeated $m$ times. Proposition \ref{proposition} makes then the
following predictions:

\begin{itemize}
\item The training error will go back to zero after each sequence of
  GD.
\item Any small perturbation of the optimum $W^*$ will be corrected by
  the GD dynamics to push back the non-degenerate weight directions to
  the original values. Since however the components of the weights in
  the degenerate directions are in the null space of the gradient,
  running GD after each perturbation will not change the weights in those
  directions. Overall, the weights will change in the experiment.

\item Repeated perturbations of the parameters at convergence, each
  followed by gradient descent until convergence, will not increase the
  training error but will change the parameters {\it and} increase
  some norm of the parameters and increase the associated test error. In the
  linear case (see Figure \ref{degeneratesquareloss}) the $L_2$ norm
  of the projections of the weights in the null space undergoes a
  random walk: the increase in the norm of the
  degenerate components should then be proportional to
  $\approx \sqrt{m}$ with a constant of proportionality that depends
  on $\approx \sqrt{\frac{1}{N}}$, where $N$ is the dimensionality of
  the null space.

\end{itemize}

Previous  experiments \cite{Theory_II} showed changes in the
parameters and in the test -- but not in the training -- loss,
consistently with our predictions above, which are further supported
by the numerical experiments of Figures \ref{CIFARclass} and
\ref{CIFARclass_pert}. Notice that the slight increase of the test
loss without perturbation in Figure \ref{CIFARclass} is due to the use
of the crossentropy risk. In the case of crossentropy the almost zero
error valleys of the empirical loss function are slightly sloped
downwards towards infinity, becoming flat only asymptotically (they
also become ``narrower and narrower multidimensional holes'' in terms
of sensitivity to perturbations).

\begin{figure*}[h!]\centering
\includegraphics[width=1.0\textwidth]{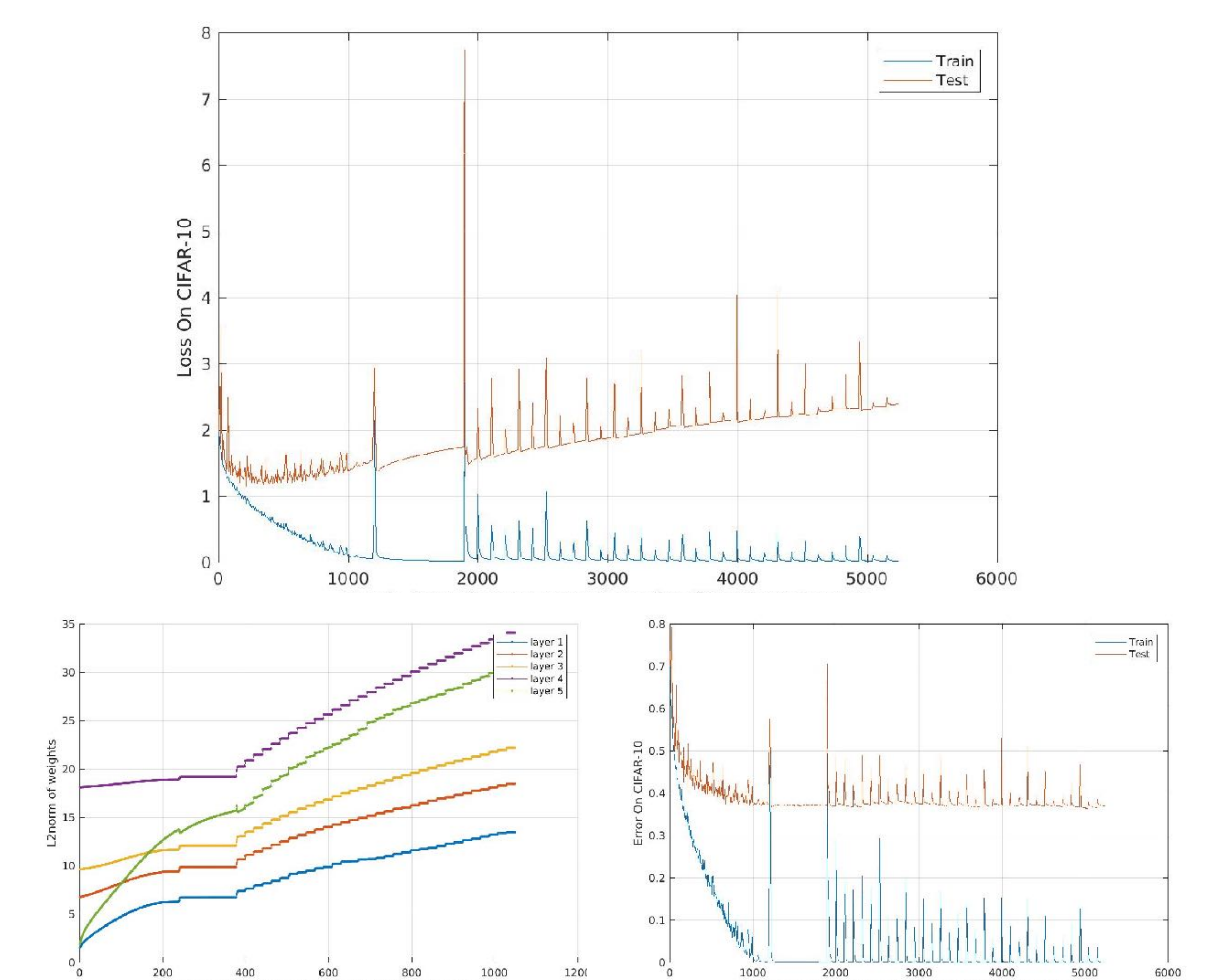}
\caption{\it We train a 5-layer convolutional neural networks on
  CIFAR-10 with Gradient Descent (GD) on crossentropy loss. The top
  plot shows the crossentropy loss on CIFAR during perturbations (see
  text). The bottom left is the corresponding square norm of the
  weights; the bottom right plot shows the classification error (see
  text). The network has 4 convolutional layers (filter size 3×3,
  stride 2) and a fully-connected layer. The number of feature maps
  (i.e., channels) in hidden layers are 16, 32, 64 and 128
  respectively. Neither data augmentation nor regularization is
  performed. Initially, the network was trained with GD as
  normal. After it reaches 0 training classification error (after
  roughly 1800 epochs of GD), a perturbation is applied to the weights
  of every layer of the network. This perturbation is a Gaussian noise
  with standard deviation being $\frac{1}{4}$ of that of the weights
  of the corresponding layer. From this point, random Gaussian noises
  with the such standard deviations are added to every layer after
  every 100 training epochs. Training loss goes back to the original
  level after added perturbations, but test loss grows increasingly
  higher. As expected, the $L_2$-norm of the weights increases after
  each step of perturbation followed by gradient descent which reset
  the taining rror to zero. Compare with Figure \ref{CIFARclass} and
  see text.}
\label{CIFARclass_pert}
\end{figure*}


\begin{figure*}[h!]\centering
\includegraphics[width=1.0\textwidth]{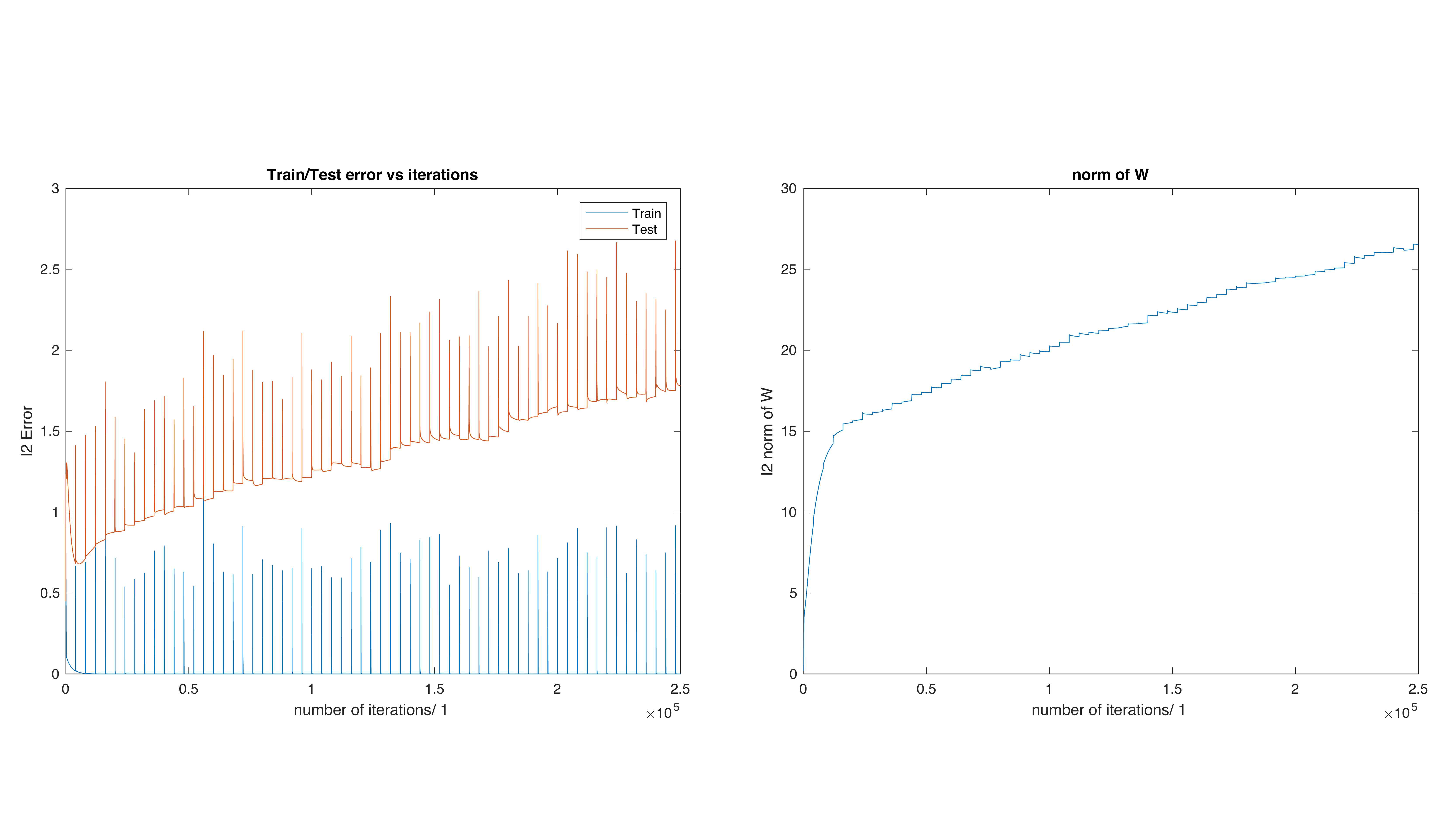}
\caption{\it Training and testing with the square loss for a linear
  network in the feature space (i.e. $y=W\Phi(X)$) with a degenerate
  Hessian of the type of Figure \ref{degeneratesquareloss}.  The
  feature matrix $\phi(X)$ is wrt a polynomial with degree 30.  The
  target function is a sine function $f(x) = sin(2 \pi f x) $ with
  frequency $f=4$ on the interval $[-1,1]$.  The number of training
  points are $9$ while the number of test points are $100$.  The
  training was done with full gradient descent with step size $0.2$
  for $250,000$ iterations.  Weights were perturbed every $4000$
  iterations and then gradient descent was allowed to converge to zero
  training error after each perturbation.  The weights were perturbed
  by addition of Gaussian noise with mean $0$ and standard deviation $0.6$.
  The $L_2$ norm of the weights is shown on the right.  Note that
  training was repeated 30 times. The figure reports the average train
  and test error as well as average norm of the weights over the 30
  repetitions. Figure \ref{Brando1} shows the same situation
  without perturbations.}
\label{Brando}
\end{figure*}

The numerical experiments show, as predicted, that the behavior under
small perturbations around a global minimum of the empirical loss for
a deep networks is similar to that of linear degenerate
regression. Figure \ref{Brando} shows for comparison the latter
case. Notice that in the case of linear degenerate regression (Figure
\ref{Brando1}) the test loss indicates a small overfitting, unlike the
nondegenerate case (Figure \ref{Brando2}. In other words, the minimum
of the test loss occurs at a finite number of iterations. This
corresponds to an equivalent optimum non-zero regularization parameter
$\lambda$ as discussed earlier. Thus a specific ``early stopping'' is
better than no stopping.  The same phenomenon appears for the
nonlinear, multilayer case (see Figure \ref{CIFARclass}) with
crossentropy loss. Note that the corresponding classification loss,
however, does not show overfitting here.

\subsection{Resistance to  overfitting in deep networks}

We discuss here the implications for deep nonlinear networks of the
topological equivalence to linear gradient systems with a degenerate
Hessian as in Figure \ref{degeneratesquareloss}.

For the {\it square loss}, the existence of a non-unstable center
manifold corresponding to the flat valley in Figure
\ref{degeneratesquareloss}, implies that in the worst case the
degenerate weight components under gradient descent will not change
once a global minimum is reached, under conditions in which the
quadratic approximation of Equation \ref{quadraticapproximation} is
locally valid. The weights will be relatively small before convergence
{\it if} the number of iterations up to convergence is small
(consistently with the stability results in
\cite{hardt_train_2015}). In this case 1) the solution may not be too
different from a minimum norm solution 2) overfitting is expected but
can be eliminated by early stopping.

It is interesting that the degeneracy of the Hessian can be eliminated
by an arbitrarily small weight decay which corresponds to transforming
the flat valley into a gentle convex bowl centered in $W^*$.  Consider the dynamics
associated with the regularized loss function
$L_{\gamma}=L+ \gamma ||W||^2$ with $\gamma$ small and positive. Then
the stable points of the corresponding dynamical system will all be
hyperbolic (the eigenvalues of the associated negative Hessian will
all be negative). In this case the Hartman-Grobman
theorem\cite{Wanner2000} holds. It says that the behavior of a
dynamical system in a domain near a hyperbolic equilibrium point is
qualitatively the same as the behavior of its linearization near this
equilibrium point. Here is a version of the theorem adapted to our
case.

 {\it Hartman-Grobman Theorem}  Consider a system evolving in time as $\dot{W} = - F(W)$ with
  $F=\nabla_{W} L(W)$ a smotth map $F: \R^d \to \R^d$. If $F$ has a hyperbolic
  equilibrium state $W^*$ and the Jacobian of $F$ at $W^*$ has no zero
  eigenavlues, then there exist a neighborhood $N$ of $W^*$ and a
  homeomorphism $h: N \to \R^d$, s.t. $h(W^*) =0$ and in $N$ the flow
  of $\dot{W} = - F(W)$ is topologically conjugate by the continous
  map $U=h(w)$ to the flow of  the linearized system $\dot{U}=-HU$
  where $H$ is the Hessian of $L$.

Overfitting is also expected for other loss functions such as the {\it
  logistic (and the cross entropy) loss} that have a global minimum
$L \to 0$ for $||w(t) \to \infty$ reached with a non-zero but
exponentially small slope. In the case of the cross entropy the local
topology is almost the same but the valley of Figure
\ref{degeneratesquareloss} is now gently sloped towards the zero
minimum at infinity.  The situation is thus more similar to the
hyperbolic, regularized situation discussed above. Overfitting of the cross
entropy is shown in Figure \ref{Corrige:GreatPlot}

\begin{figure*}[h!]\centering
\includegraphics[width=1.0\textwidth]{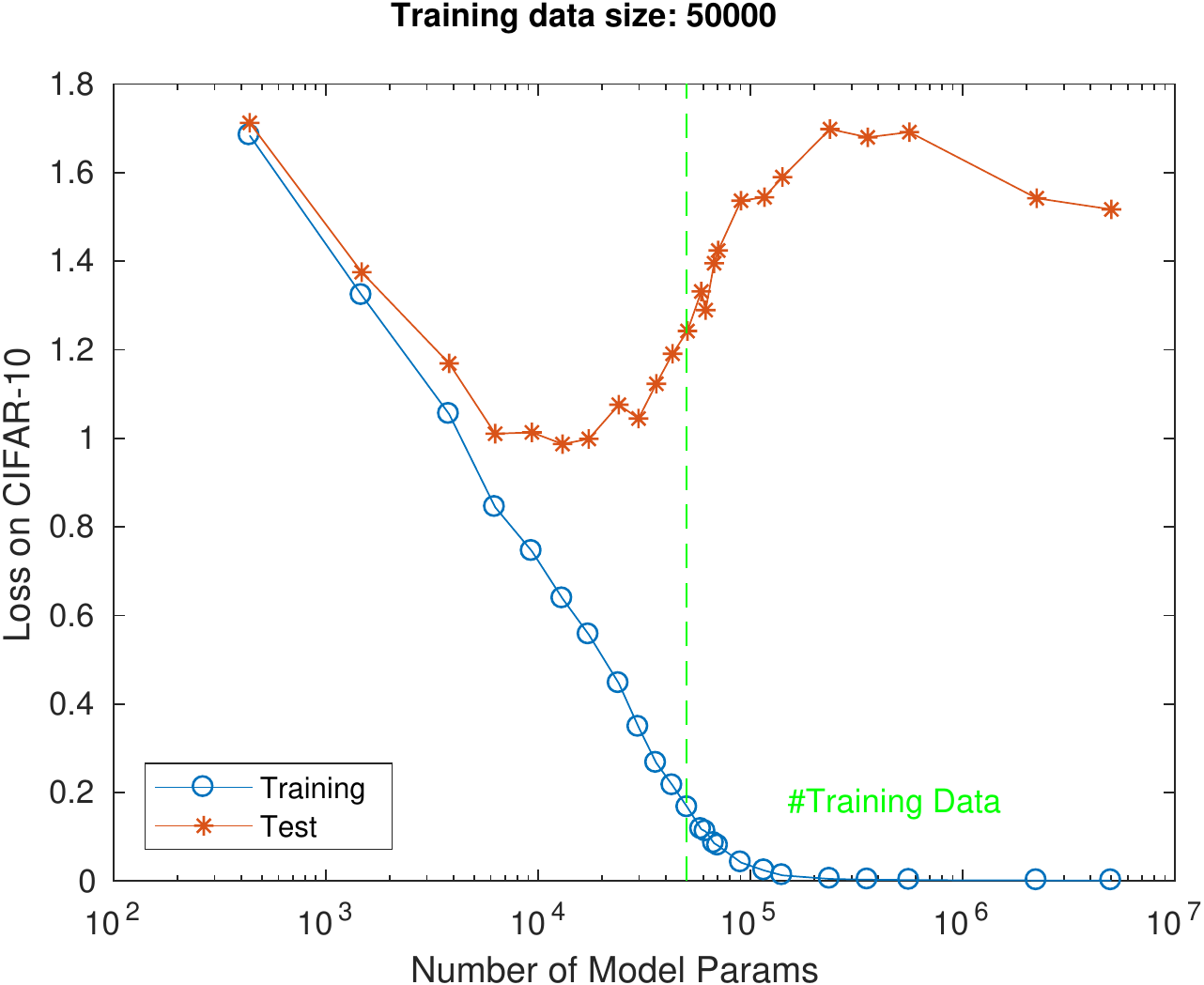}
\caption{\it The previous figures show dependence on $n$ -- number of
  training examples -- for a fixed ReLU architecture with $W$
  parameters.  This figure shows dependence on $W$ of the
  cross entropy loss for a fixed training set of $n$ examples. The
  network is again a 5-layer all convolutional network (i.e., no
  pooling) with ReLUs. All hidden layers have the same number of
  channels. Neither data augmentation nor regularization is
  performed. SGD was used with batch size $= 100$ for $70$ epochs for
  each point. There is clear overfitting in
  the testing loss.}
\label{Corrige:GreatPlot}
\end{figure*}

As in the linear case, however, we can expect a better behavior for
good data sets of the classification error associated with
minimization of the cross entropy \cite{2017arXiv171010345S}. Notice
that, unlike the case of the degenerate square loss, gradient descent
with separable data converges to the max-margin solution with any
starting point $w_0$ (because of the non-zero slope). Thus,
overfitting may not occur at all for the classification error, as
shown in Figure \ref{GreatPlot}, despite overfitting of the associated
cross entropy loss. Notice that the classification error is not the
actual loss which is minimized during training. The loss which is
minimized by SGD is the crossentropy loss; the classification error
itself is invisible to the optimizer.  In the case of Figure
\ref{Corrige:GreatPlot} the crossentropy loss is clearly overfitting
and increasing during the transition from under- to
over-parametrization, while the classification error does not change.
This is the expected behavior in the linear case.

\begin{figure*}[h!]\centering
\includegraphics[width=0.9\textwidth]{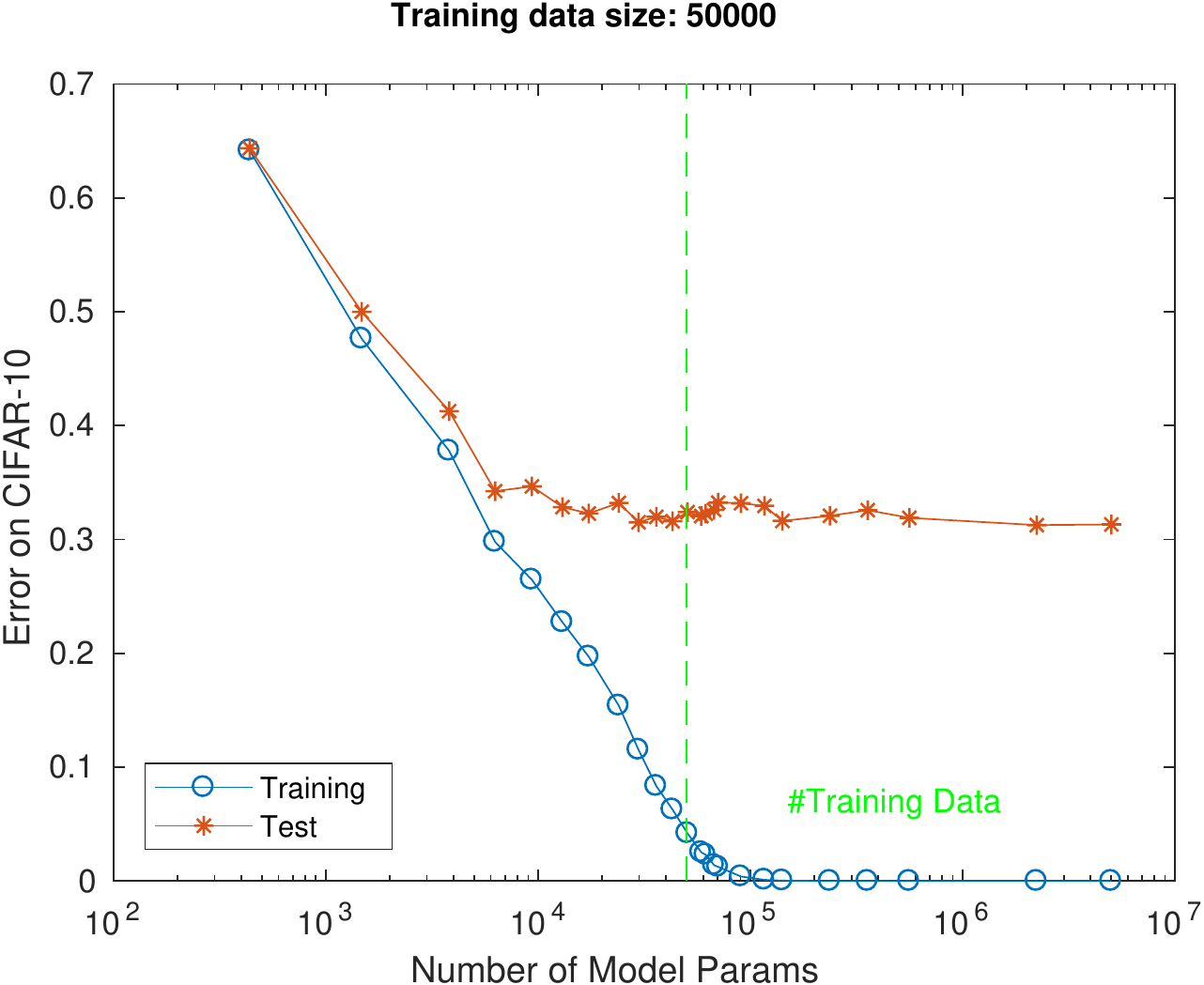}
\caption{\it The figure shows the classification error under the same
  conditions of Figure \ref{Corrige:GreatPlot}. The classical theory explains the generalization behavior
  on the left; the text explains the lack of overfitting for
  $W>N$.}
\label{GreatPlot}
\end{figure*}

In summary, Proposition \ref{proposition} implies that multilayer,
deep networks should behave similarly to linear models for regression
and classification. The theory of dynamical system suggests a
satisfactory approach to explain the central {\it puzzle of non
  overfitting} shown in Figure \ref{GreatPlot} (and Figure
\ref{Corrige:GreatPlot}). Notice that Figures in the SI show the same
behavior for a deep polynomial convolutional network.






\section{Discussion}

Our main contribution is showing that near a global, flat minimum,
gradient descent for deep networks is qualitatively equivalent to a
linear system with a gradient dynamics, corresponding to  a quadratic degenerate
potential.  Proposition \ref{proposition} proves the conjecture in
\cite{RosascoRecht2017} that many properties of linear models should be
inherited by deep networks. In particular, deep networks, similarly to  linear
models, are predicted to  overfit the loss while  not overfitting the classification
error (for nice data sets). Through our reduction to linear from
nonlinear dynamical system via Proposition \ref{proposition},  this follows from
properties of gradient descent for linear network, namely {\it
  implicit regularization} of the loss and {\it margin maximization}
for classification. In practical use of deep networks, explicit
regularization (such as weight decay) together with other regularizing
techniques (such as virtual examples) is usually added and it is often
beneficial but not strictly necessary.

Thus there is nothing magic in deep learning that requires a theory
different from the classical linear one with respect to
generalization, intended as convergence of the empirical to the
expected error, {\it and especially} with respect to the absence of
overfitting in the presence of overparametrization. In our framework,
the fact that deep learning networks optimized with gradient descent
techniques {\it do generalize} follows from the implicit
regularization by GD and from previous results such as
Bartlett's\cite{AntBartlett2002} and
Recht\cite{DBLP:journals/corr/HardtRS15}.  More interestingly,
Proposition \ref{proposition} explains the puzzling property of deep
networks, seen in several situations such as CIFAR, of overfitting the
loss while not overfitting the classification error by showing that
the properties of linear networks emphasized by
\cite{2017arXiv171010345S} apply to deep networks under certain
assumptions on the empirical minimizers and the datasets.

Of course, the problem of establishing quantitative and useful bounds
on the performance of deep network, as well as the question of which
type of norm is effectively optimized, remains an open and challenging
problem (see \cite{DBLP:journals/corr/abs-1711-01530}), as it is
mostly the case even for simpler one-hidden layer networks, such as
SVMs. Our main claim is that the puzzling behavior of Figure
\ref{GreatPlot} can be explained {\it qualitatively} in terms of the
classical theory.

There are of course a number of open problems. Though we explained the
absence of overfitting we did not explain in this paper why deep
networks perform as well as they do. It is quite possible however that
the answer to this question may be already contained in the following
summary of the existing theoretical framework about deep learning,
based on  \cite{Theory_I}, \cite{Theory_II},
\cite{Theory_IIb},  \cite{DBLP:journals/corr/abs-1711-01530} and  this paper:

\begin{itemize}
\item unlike shallow networks deep networks can approximate the class
  of hierarchically local functions without incurring in the curse of
  dimensionality\cite{MhaskarPoggio2016b,Theory_I};
\item overparametrized deep networks yield many global degenerate --
  and thus flat -- or almost degenerate minima\cite{Theory_II} which are
  selected by SGD with high probability\cite{Theory_IIb};
\item overparametrization, which yields overfit of the loss, can avoid
  overfitting the classification error for nice datasets  because of implicit
  regularization by gradient descent methods and the associated 
  margin maximization.

\end{itemize}

According to this framework, the main difference between shallow and deep networks
is in terms of approximation
power. Unlike shallow networks, deep local (for instance
convolutional) networks can avoid the curse of
dimensionality\cite{Theory_I} in approximating the class of
hierarchically local compositional functions. SGD in overparametrized
networks selects with very high probability degenerate minima which
are often global.  As shown in this note, overparametrization does not
necessarily lead to overfitting of the classification error.

We conclude with the following, possibly quite relevant
observation. The robustness of the expected error with respect to
overparametrization suggests that, at least in some cases, the
architecture of locally hierarchical networks may not need to be
matched precisely to the underlying function graph: a relatively large
amount of overparametrization (see Figure \ref{GreatPlotPol}) may not
significantly affect the predictive performance of the network, making
it easier to design effective deep architectures.

{\bf Acknowledgment}

{\it We are grateful for comments on the paper by Sasha Rakhlin and
  for useful discussions on dynamical systems with Mike Shub.  This
  work was supported by the Center for Brains, Minds and Machines
  (CBMM), funded by NSF STC award CCF – 1231216. CBMM 
  acknowledges the support of NVIDIA Corporation with the donation of
  the DGX-1 used in part for this research. HNM is supported in part by ARO
  Grant W911NF-15-1-0385}


\bibliographystyle{ieeetr}
\small
\bibliography{Boolean}
\normalsize

\newpage

\section{Supplementary Information}

\subsection{Implicit/Iterative Regularization by GD and SGD for linear
  dynamical 
systems}
\label{Lorenzo}

We recall some recent results regarding the regularization properties of SGD \cite{LinRos17}.  Here regularization does not arise from explicit penalizations or constraints. Rather it is implicit in the sense that it is induced  by dynamic of the SGD iteration and controlled by the choice of either the step-size, the number of iterations, or both.  More precisely, we recall recent results showing convergence and convergence rates to the minimum expected risk, but also  convergence  to the minimal norm minimizer of the expected risk. 

All the the results are for the least square loss  and assume linearly parameterized functions. With these choices, the learning problem is to solve
$$\min_{w\in {\mathbb R}^D} {\mathcal E}(w)\quad\quad\quad
{\mathcal E}(w)=
\int   (y-{w}^\top{x})^2d \rho$$
given only $(x_i,y_i)_{i=1}^n$ -- $\rho$ { fixed, but unknown}.
Among all the minimizers ${\cal O}$ of the expected risk the minimal norm  solution is 
$w^\dagger$ solving
$$
\min_{w\in {\cal O}} \| {w} \|,
$$
We add two remarks.
\begin{itemize}
\item 
The results we present next  hold considering functions of the form 
$$
f(x)=\langle{w}, {\Phi(x)}\rangle
$$
for some nonlinear feature map $\Phi:{\mathbb R}^D\to {\mathbb R}^p$. 
\item Further, all the results are dimension independent, hence allow to consider $D$ (or $p$) to be infinite.
\end{itemize}

The SGD iteration is given by 
$$
\widehat w _{t+1} =
\widehat w _{t}
-
\eta_t
x_{i_t}(\widehat w _{t}^\top  x_{i_t}-y_{i_t})  , \quad t=0, \dots T
$$
where 
\begin{itemize}
\item  $(\eta_t)_t$ specifies  the step-size, 
\item  $T$ stopping time,  ($T>n$ multiple {\em ``passes/epochs''})
\item   $(i_t)_t$ specifies how the iteration visits the training points.  Classically $(i_t)_t$ is chosen to be stochastic and induced by a uniform distribution over the training points.
\end{itemize}
The following result considers the regularization and learning properties of SGD in terms of the corresponding excess risk.\\
{\bf Theorem}\\
{\em 
Assume $\|x\|\le 1$ and $|y|\le 1$ for all $(\eta_t)_t=\eta$ and $T$, with high probability
$$
{\cal E}(\widehat w _T)- {\cal E}(w^\dagger)
\lesssim \frac{1}{\eta T}+
\frac{1}{\sqrt{n}}\left( \frac{\eta T}{\sqrt{n}}\right)^2 + \eta \left(1 \vee \frac{\eta T}{\sqrt{n}}\right)
$$
In particular, if  $\eta =\frac{1}{\sqrt{n}}$ and $T=n$, then 
$$
\lim_{n\to \infty}{\cal E}(\widehat w _T)= {\cal E}(w^\dagger)
$$
almost surely.
}\\
The next result  establishes convergence to the minimal norm solution of the expected risk.\\
{\bf Theorem}\\
{\em 
Assume $\|x\|\le 1$ and $|y|\le 1$, if $\eta=\frac{1}{\sqrt{n}}$ and $T=n$, then 
$$
\lim_{n\to \infty} \|w _T-w^\dagger\|=0.
$$
almost surely.
}

\subsection{Hessian of  the  loss for multilayer networks}
\label{MultiNets}

\subsubsection{No-hidden-layer linear systems}
\label{MinNorm}

For a linear network without hidden layers with the
training set $(X,Y)$, a
set of weights $A$ can be
found by minimizing

\begin{equation}
\min_A \|Y-AX\|^2_F,
\end{equation}
where  $Y\in \mathcal{R}^{d',n}$ , $A \in \mathcal{R}^{d',d}$,
$X\in \mathcal{R}^{d,n}$ and $n$ is number of data points, 
\noindent yielding an equation for $A$ that is degenerate when $d >
n$. The general solution is
in the form of
\begin{equation*}
A=YX^{\dagger} + M,
\end{equation*}
\noindent where  $X^{\dagger}$ is the pseudoinverse of the matrix $X$, and $M$ is any matrix in the left null space of $X$. The minimum norm solution is
\begin{equation}
A=YX^{\dagger}. 
\label{STsol}
\end{equation}

Furthermore, as shown in the section before, the solution
Equation \ref{STsol} is found by GD and SGD and corresponds to
iterative regularization where the role of the inverse of the regularization
parameter $\lambda$ is played by the product of step size and number
of steps\cite{rosasco2015learning}.

Let us look in more detail at the gradient descent solution (from
\cite{Musings2017}). Consider the following setup:
$X=(x_1,\ldots,x_n)^\top \in\mathbb{R}^{n, d}$ are the data points,
with $d>n$. We further assume that the data matrix is of full row
rank: $\text{rank}(X)=n$.  Let $y\in\mathbb{R}^n$ be the labels, and
consider the following linear system:
\begin{equation}
  Xw=y\label{eq:min-norm-ls}
\end{equation}
where $w\in\mathbb{R}^d$ is the weights to find. This linear system
has infinite many solutions because $X$ is of full row rank and we
have more parameters than the number of equations. Now suppose we
solve the linear system via a least square formulation
\begin{equation}
  L(w) = \frac{1}{2n}\|Xw-y\|^2\label{eq:min-norm-ls-objv}
\end{equation}
by using gradient descent (GD) or stochastic gradient descent
(SGD). In this section, we will show that both GD and SGD converges to
the minimum norm solution, for the $\ell_2$ norm. A similar analysis
(with fewer details) was presented in \cite{zhang2016}.

\begin{lemma}
  The following formula defines a solution to \eqref{eq:min-norm-ls}
  \begin{equation}
    w_{\dagger} \triangleq X^\top (XX^\top)^{-1}y
  \end{equation}
  and it is the minimum norm solution.
  \label{lem:min-norm-lemma1}
\end{lemma}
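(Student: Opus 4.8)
The plan is to split the claim into two independent parts: first that $w_{\dagger}$ actually solves \eqref{eq:min-norm-ls}, and then that it is the unique element of smallest $\ell_2$ norm among all solutions. For the first part I would just substitute. Because $X$ has full row rank $n$ and $d \ge n$, the $n\times n$ Gram matrix $XX^\top$ is positive definite, hence invertible, so $X w_{\dagger} = X X^\top (XX^\top)^{-1} y = y$. This step is immediate and carries no difficulty; it is also the only place the hypothesis $\mathrm{rank}(X)=n$ is really needed.

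For the minimum-norm part, the key structural observation is that $w_{\dagger} = X^\top\bigl((XX^\top)^{-1}y\bigr)$ lies in $\Col(X^\top)$, the row space of $X$, which equals $\Null(X)^\perp$. I would then describe the solution set of $Xw=y$ as the affine subspace $w_{\dagger} + \Null(X)$: any solution $w$ satisfies $X(w-w_{\dagger})=0$, so $w = w_{\dagger}+v$ with $v\in\Null(X)$, and conversely every such $w$ is a solution. Since $w_{\dagger}\perp v$ for all such $v$, the Pythagorean identity gives $\|w\|^2 = \|w_{\dagger}\|^2 + \|v\|^2 \ge \|w_{\dagger}\|^2$, with equality exactly when $v=0$. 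Hence $w_{\dagger}$ is the unique minimum-norm solution.

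An equivalent, more computational route, which may fit better with the optimization flavour of the surrounding section, is to solve $\min \tfrac12\|w\|^2$ subject to $Xw=y$ directly by Lagrange multipliers $\nu\in\mathbb{R}^n$: stationarity of $\tfrac12\|w\|^2 - \nu^\top(Xw-y)$ forces $w = X^\top\nu$, and substituting into the constraint yields $XX^\top\nu = y$, so $\nu = (XX^\top)^{-1}y$ and $w = X^\top(XX^\top)^{-1}y = w_{\dagger}$; convexity of the objective together with the affine constraint makes this stationary point the global minimizer.

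I do not anticipate a genuine obstacle here — the lemma is a standard fact of linear algebra, and the only point deserving a word of care is the invertibility of $XX^\top$ under the stated rank assumption. The real content, to be developed after this lemma, is that gradient descent (and SGD) on \eqref{eq:min-norm-ls-objv}, started from $0$ (or more generally from a point in $\Col(X^\top)$), stays in $\Col(X^\top)$ and converges to precisely this $w_{\dagger}$.
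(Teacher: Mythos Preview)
Your proposal is correct and matches the paper's own proof almost exactly: the paper verifies $Xw_\dagger=y$ by direct substitution, then shows $\langle w_\dagger,\hat w - w_\dagger\rangle = 0$ for any other solution $\hat w$ and applies the Pythagorean theorem. Your phrasing via $w_\dagger\in\Col(X^\top)=\Null(X)^\perp$ and $w-w_\dagger\in\Null(X)$ is the same orthogonality argument stated structurally; the Lagrange-multiplier alternative you offer is not in the paper but is a fine equivalent route.
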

{\it Proof}
  Note since $X$ is of full row rank, so $XX^\top$ is invertible. By definition,
  \[
  Xw_\dagger = XX^\top (XX^\top)^{-1}y = y
  \]
  Therefore, $w_\dagger$ is a solution. Now assume $\hat{w}$ is
  another solution to \eqref {eq:min-norm-ls}, we show that
  $\|\hat{w}\|\geq \|w_\dagger\|$. Consider the inner product
  \[
  \begin{aligned}
    \langle w_\dagger, \hat{w}-w_\dagger \rangle
    &= \langle X^\top (XX^\top)^{-1}y, \hat{w}-w_\dagger \rangle \\
    &= \langle (XX^\top)^{-1}y, X\hat{w}-X_{\dagger} \rangle \\
    &= \langle (XX^\top)^{-1}y, y-y \rangle \\
    &= 0
  \end{aligned}
  \]
  Therefore, $w_\dagger$ is orthogonal to $\hat{w}-w_\dagger$. As a
  result, by Pythagorean theorem,
  \[
  \|\hat{w}\|^2 = \|(\hat{w}-w_\dagger) + w_\dagger\|^2 = \|\hat{w}-w_\dagger\|^2 + \|w_\dagger\|^2
  \geq \|w_\dagger\|^2
  \]

\begin{lemma}
  When initializing at zero, the solutions found by both GD and SGD for problem \eqref
  {eq:min-norm-ls-objv} live in the span of rows of $X$. In other words, the solutions are of the
  following parametric form
  \begin{equation}
    w = X^\top\alpha
  \end{equation}
  for some $\alpha\in\mathbb{R}^n$.
  \label{lem:min-norm-lemma2}
\end{lemma}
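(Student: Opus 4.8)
The plan is to prove this by a one-line induction on the iteration index, using the fact that the gradient of a least-squares objective always lies in the span of the rows of the design matrix. Write $\Col(X^\top) = \{X^\top\alpha : \alpha \in \mathbb{R}^n\}$ for the row span of $X$, a linear subspace of $\mathbb{R}^d$. The base case is trivial: initializing at zero gives $w_0 = 0 = X^\top 0 \in \Col(X^\top)$, i.e.\ $\alpha_0 = 0$.

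For the inductive step in the full-batch case, I would compute the gradient of the objective \eqref{eq:min-norm-ls-objv}, namely $\nabla L(w) = \frac{1}{n}X^\top(Xw - y)$, which is visibly of the form $X^\top\beta$ with $\beta = \frac{1}{n}(Xw-y) \in \mathbb{R}^n$. Hence if $w_t = X^\top\alpha_t$, the GD update gives $w_{t+1} = w_t - \eta\nabla L(w_t) = X^\top(\alpha_t - \frac{\eta}{n}(XX^\top\alpha_t - y))$, still in $\Col(X^\top)$; one may track the recursion $\alpha_{t+1} = \alpha_t - \frac{\eta}{n}(XX^\top\alpha_t - y)$ directly. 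For SGD, the update at step $t$ adds the scalar multiple $-\eta_t(w_t^\top x_{i_t} - y_{i_t})$ of the single example $x_{i_t}$, which is one of the rows of $X$ and therefore already a column of $X^\top$; so $\Col(X^\top)$ is again preserved (only the $i_t$-th coordinate of $\alpha$ changes). Since $\Col(X^\top)$ is a linear subspace, closed under these affine updates, every iterate stays in it, and so does the limit point whenever it exists.

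There is essentially no obstacle here — the argument is entirely elementary — and the one thing worth emphasizing is that it uses nothing about the step-size sequence $(\eta_t)_t$, the sampling order $(i_t)_t$, or the rank of $X$, so the conclusion is robust to all the details of the GD/SGD schedule. The payoff, which I would state as a corollary, is that combining this lemma with Lemma \ref{lem:min-norm-lemma1} identifies the GD/SGD limit from zero initialization as exactly $w_\dagger = X^\top(XX^\top)^{-1}y$: the convergence results already recalled guarantee the iterates approach the solution set $\{w : Xw = y\}$, while this lemma constrains them to $\Col(X^\top)$, and these two sets intersect in the single point $w_\dagger$ (here the full-row-rank assumption re-enters, ensuring both that $XX^\top$ is invertible and that the intersection is a singleton).
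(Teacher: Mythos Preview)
Your proof is correct and follows essentially the same approach as the paper: both compute the gradient $\nabla L(w)=\frac{1}{n}X^\top(Xw-y)$, observe it lies in $\Col(X^\top)$, and then use the zero initialization together with the recursive update (the paper writes out the telescoping sum, you phrase it as induction) to conclude every iterate stays in the row span; the SGD case is handled identically in both. Your additional remarks on robustness to the schedule and the corollary identifying the limit as $w_\dagger$ match the paper's subsequent Corollary as well.
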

{\it Proof}

  The gradient for \eqref{eq:min-norm-ls-objv} is
  \[
 \nabla_w L(w) = \frac{1}{n}X^\top (Xw-y) = X^\top e
  \]
  where we define $e=(1/n)(Xw-y)$ to be the error vector. GD use the following update rule:
  \[
  w_{t+1} = w_t - \eta_t \nabla_w L(w_t) = w_t - \eta_t X^\top e_t
  \]
  Expanding recursively, and assume $w_0=0$. we get
  \[
  w_t = \sum_{\tau=0}^{t-1} -\eta_\tau X^\top e_{\tau}
  = X^{\top}\left( -\sum_{\tau=0}^{t-1} \eta_\tau e_{\tau} \right)
  \]
  The same conclusion holds for SGD, where the update rule could be explicitly written as
  \[
  w_{t+1} = w_t - \eta_t (x_{i_t}^\top w - y_{i_t})x_{i_t}
  \]
  where $(x_{i_t},y_{i_t})$ is the pair of sample chosen at the $t$-th iteration. The same
  conclusion follows with $w_0=0$.

Q.E.D.

\begin{theorem}
  Let $w_t$ be the solution of GD after $t$-th
  iteration, and $\mathsf{w}_t$ be the (random) solution of SGD after $t$-th iteration. Then $\forall
  \varepsilon>0$, $\exists T>0$ such that
  \[
  L(w_t) \leq \varepsilon,\quad \mathbb{E}L(\mathsf{w}_t) \leq \varepsilon
  \]
  where the expectation is with respect to the randomness in SGD.
  \label{thm:min-norm-gd-converge}
\end{theorem}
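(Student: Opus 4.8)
\emph{Proof proposal (plan).}

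The plan is to treat GD and SGD separately, in both cases exploiting three facts already established above: since $X$ has full row rank, $XX^\top$ is invertible and the linear system $Xw=y$ is solvable, so $\min_w L(w)=0$; the minimum norm solution $w_\dagger=X^\top(XX^\top)^{-1}y$ of Lemma \ref{lem:min-norm-lemma1} satisfies $Xw_\dagger=y$ and lies in $\mathrm{row}(X)=\mathrm{range}(X^\top)$; and by Lemma \ref{lem:min-norm-lemma2} every iterate started from $w_0=0$ stays in $\mathrm{row}(X)$, a subspace on which $\nor{Xv}^2\ge \sigma^2\nor{v}^2$ where $\sigma^2=\lambda_{\min}(XX^\top)>0$ (smallest, hence nonzero, eigenvalue).

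\textbf{Gradient descent.} Using $\nabla_w L(w)=\tfrac1n X^\top(Xw-y)$, the residual $r_t:=Xw_t-y$ obeys the linear recursion $r_{t+1}=(I-\tfrac{\eta}{n}XX^\top)r_t$, hence $r_t=(I-\tfrac{\eta}{n}XX^\top)^t r_0$. For any fixed step size $0<\eta<2n/\lambda_{\max}(XX^\top)$ every eigenvalue of $I-\tfrac{\eta}{n}XX^\top$ lies in $(-1,1)$, so its spectral radius is some $q<1$ and $L(w_t)=\tfrac{1}{2n}\nor{r_t}^2\le \tfrac{q^{2t}}{2n}\nor{r_0}^2$. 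This is $\le\varepsilon$ for every $t\ge T:=\big\lceil \tfrac{1}{2\ln(1/q)}\ln\!\big(\tfrac{\nor{r_0}^2}{2n\varepsilon}\big)\big\rceil$. Equivalently one may invoke the textbook $O(1/t)$ rate of gradient descent on a $\beta$-smooth convex function, with $\beta=\lambda_{\max}(XX^\top)/n$ and step size $\le 1/\beta$, using that the minimum value is $0$.

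\textbf{Stochastic gradient descent.} Here I would run a Lyapunov argument on $\Delta_t:=\mathsf w_t-w_\dagger$. Since $x_{i_t}^\top w_\dagger=y_{i_t}$, the SGD update collapses to $\Delta_{t+1}=(I-\eta x_{i_t}x_{i_t}^\top)\Delta_t$, and expanding the square gives the pathwise identity $\nor{\Delta_{t+1}}^2=\nor{\Delta_t}^2-\eta\big(2-\eta\nor{x_{i_t}}^2\big)\big(x_{i_t}^\top\Delta_t\big)^2$. Choosing a fixed $0<\eta<2/\max_i\nor{x_i}^2$ makes $c:=\eta\big(2-\eta\max_i\nor{x_i}^2\big)>0$; taking the conditional expectation over the uniformly chosen index then yields $\expect\!\big[\nor{\Delta_{t+1}}^2\mid\mathsf w_t\big]\le \nor{\Delta_t}^2-\tfrac{c}{n}\nor{X\Delta_t}^2\le \big(1-\tfrac{c\sigma^2}{n}\big)\nor{\Delta_t}^2$, where the last step uses $\Delta_t\in\mathrm{row}(X)$ (both $\mathsf w_t$ and $w_\dagger$ are, by Lemma \ref{lem:min-norm-lemma2}). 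Iterating from $\Delta_0=-w_\dagger$ gives $\expect\nor{\Delta_t}^2\le\big(1-\tfrac{c\sigma^2}{n}\big)^t\nor{w_\dagger}^2$, and since $\expect L(\mathsf w_t)=\tfrac{1}{2n}\expect\nor{X\Delta_t}^2\le\tfrac{\lambda_{\max}(XX^\top)}{2n}\expect\nor{\Delta_t}^2$, the expected loss decays geometrically and falls below $\varepsilon$ past an explicit $T$. For $\eta$ small the factor $1-c\sigma^2/n$ lies in $(0,1)$; alternatively a diminishing step size gives the same conclusion with an $o(1)$ rate from the same inequality.

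\textbf{Where the work is.} The GD half is a one-line spectral computation on the $n\times n$ matrix $I-\tfrac{\eta}{n}XX^\top$. The real subtlety is the SGD half, and it is conceptual rather than computational: one must use that the problem is \emph{realizable} --- every training pair is fit exactly by $w_\dagger$ --- so that the stochastic gradient \emph{vanishes at the minimizer} and constant-step-size SGD contracts all the way to $0$ instead of stalling inside a variance ball; and one must combine this with the row-space confinement of Lemma \ref{lem:min-norm-lemma2} to upgrade the a priori degenerate bound $\nor{X\Delta_t}^2\ge\sigma^2\nor{\Delta_t}^2$ into a genuine contraction factor $1-c\sigma^2/n<1$. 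The remaining estimates are routine.
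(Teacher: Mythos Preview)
Your proposal is correct. Both halves are sound: the residual recursion $r_{t+1}=(I-\tfrac{\eta}{n}XX^\top)r_t$ combined with the invertibility of $XX^\top$ gives geometric decay for GD, and for SGD the realizable--case identity $\Delta_{t+1}=(I-\eta x_{i_t}x_{i_t}^\top)\Delta_t$ together with the row--space confinement from Lemma~\ref{lem:min-norm-lemma2} yields the Kaczmarz--type contraction $\expect[\nor{\Delta_{t+1}}^2\mid\mathsf w_t]\le(1-\tfrac{c\sigma^2}{n})\nor{\Delta_t}^2$. The only cosmetic point is that the contraction factor $1-c\sigma^2/n$ is not automatically in $(0,1)$ for the full range $0<\eta<2/\max_i\nor{x_i}^2$, but you already handle this by noting it holds for $\eta$ small (and one can check it holds throughout since $c\le 1/\max_i\nor{x_i}^2$ and $\sigma^2\le \min_j\nor{x_j}^2$ force $c\sigma^2/n\le 1$, with strict inequality generically).

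As for comparison with the paper: the paper does \emph{not} supply a proof of this theorem. It is stated as a standing convergence fact and then immediately invoked in the subsequent corollary. So your argument is not a different route to the paper's proof --- it is simply a complete proof where the paper offers none. Your choice to exploit realizability (so the stochastic gradient vanishes at $w_\dagger$) and the row--space restriction is exactly the right mechanism here and is the standard way to get geometric, rather than merely $O(1/t)$, convergence for constant--step SGD in this setting.
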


\begin{corollary}
When initialized with zero, both GD and SGD converges to the minimum-norm solution.
\end{corollary}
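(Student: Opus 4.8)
The plan is to combine the three preceding results: Lemma~\ref{lem:min-norm-lemma2} confines every iterate (started from $0$) to the $n$-dimensional row space of $X$; Theorem~\ref{thm:min-norm-gd-converge} says the loss along the iterates vanishes; and Lemma~\ref{lem:min-norm-lemma1} identifies the unique point of that row space solving $Xw=y$, namely $w_\dagger$, and certifies it is the minimum-norm solution. The only genuine work is to upgrade convergence of the loss to convergence of the parameter vector, which I would do by a coercivity (lower) bound for $X$ restricted to its row space.

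Concretely, write $\mathcal{R}=\{X^\top\alpha:\alpha\in\mathbb{R}^n\}$ for the span of the rows of $X$. First I would note that, since $X$ has full row rank, $XX^\top$ is invertible, so $X|_{\mathcal{R}}:\mathcal{R}\to\mathbb{R}^n$ is injective (if $X^\top\alpha\in\ker X$ then $XX^\top\alpha=0$, hence $\alpha=0$); as $\dim\mathcal{R}=n$ it is in fact a bijection, and there is a $\sigma>0$ — the smallest singular value of $X$, equivalently $\sqrt{\lambda_{\min}(XX^\top)}$ — with $\|Xw\|\ge\sigma\|w\|$ for all $w\in\mathcal{R}$. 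Both $w_\dagger=X^\top(XX^\top)^{-1}y$ and, by Lemma~\ref{lem:min-norm-lemma2}, all the GD and SGD iterates from $0$ lie in $\mathcal{R}$. Hence for any such iterate $w$,
\[
\|w-w_\dagger\|\;\le\;\sigma^{-1}\,\|X(w-w_\dagger)\|\;=\;\sigma^{-1}\,\|Xw-y\|\;=\;\sigma^{-1}\sqrt{2n\,L(w)},
\]
using $Xw_\dagger=y$ from Lemma~\ref{lem:min-norm-lemma1} and $L(w)=\tfrac{1}{2n}\|Xw-y\|^2$.

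Finally I would invoke Theorem~\ref{thm:min-norm-gd-converge}. For GD, $L(w_t)\to 0$ yields $\|w_t-w_\dagger\|\le\sigma^{-1}\sqrt{2n\,L(w_t)}\to 0$, so $w_t\to w_\dagger$. For SGD, squaring the displayed bound and taking expectations gives $\mathbb{E}\|\mathsf{w}_t-w_\dagger\|^2\le 2n\,\sigma^{-2}\,\mathbb{E}L(\mathsf{w}_t)\to 0$, i.e. $\mathsf{w}_t\to w_\dagger$ in mean square (and hence in probability). Since $w_\dagger$ is the minimum-norm solution, the corollary follows. The hard part will be exactly this last lifting step: the theorem only controls the loss, and translating that into control of $\|w_t-w_\dagger\|$ hinges on the iterates never leaving $\mathcal{R}$, on which $X$ is boundedly invertible; for SGD one must also be explicit that the resulting convergence is $L^2$/in-probability unless Theorem~\ref{thm:min-norm-gd-converge} is strengthened to an almost-sure statement.
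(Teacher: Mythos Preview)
Your proof is correct and in fact more careful than the paper's. Both arguments combine the same three ingredients (Lemma~\ref{lem:min-norm-lemma1}, Lemma~\ref{lem:min-norm-lemma2}, Theorem~\ref{thm:min-norm-gd-converge}), but they differ in how they pass from convergence of the loss to convergence of the iterates.

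The paper simply asserts that $w_t\to w_\star=X^\top\alpha_\star$ for some $\alpha_\star$, then observes $y=Xw_\star=XX^\top\alpha_\star$ and solves for $\alpha_\star$ to get $w_\star=w_\dagger$. This identifies the limit \emph{assuming} one exists, but Theorem~\ref{thm:min-norm-gd-converge} only gives $L(w_t)\to 0$ (and $\mathbb{E}L(\mathsf{w}_t)\to 0$), not convergence of the parameter sequence itself. Your coercivity step --- noting that $X$ restricted to its row space $\mathcal{R}$ is boundedly invertible with $\|Xw\|\ge\sigma\|w\|$ for $w\in\mathcal{R}$, and hence $\|w-w_\dagger\|\le\sigma^{-1}\sqrt{2nL(w)}$ --- is precisely what is needed to close this gap, and it moreover yields a quantitative rate. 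For SGD you are also explicit that the conclusion is $L^2$ convergence, which is the honest statement given that the theorem only controls $\mathbb{E}L(\mathsf{w}_t)$. So your route is the same in spirit but strictly more rigorous on the lifting step.
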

{\it Proof}
  Combining Lemma~\ref{lem:min-norm-lemma2} and Theorem~\ref{thm:min-norm-gd-converge}, GD is
  converging $w_t \rightarrow w_\star=X^\top\alpha_\star$ as $t\rightarrow \infty$ for some optimal
  $\alpha_\star$. Since $w_\star$ is a solution to \eqref{eq:min-norm-ls}, we get
  \[
  y = Xw_\star = XX^\top \alpha_\star
  \]
  Since $XX^\top$ is invertible, we can solve for $\alpha_\star$ and get
  \[
  w_\star = X^\top \alpha_\star = X^\top (XX^\top) y = w_\dagger
  \]
  Similar argument can be made for SGD with expectation with respect to the randomness of the
  algorithm.

  In the degenerate linear approximation of a nonlinear system near a
  global minimum the above situation holds but the null space is not
  guaranteed to remain unaffected by early steps in GD. Norms will
  remain bounded if the gradient is bounded (in fact the title of
  \cite{DBLP:journals/corr/HardtRS15} {\it Train faster, generalize
    better}...'' may apply here) and thus generalization is
  guaranteed.  The solution is {\it not} strictly guaranteed to be of
  minimum norm in the case of quadratic loss. It is, however,
  {\it guaranted to be the maximum margin solution for separable data
    in the case of logistic and croos-entropy loss}
  \cite{2017arXiv171010345S}.

\subsubsection{One-hidden layer linear networks}
\label{DeepNetsLinSusubsection}g

Consider  linear
network of the same type analyzed by \cite{Baldi:89,
  DBLP:journals/corr/SaxeMG13} in the full rank case and shown in
Figure \ref{1-hidden}. This is a network with an input layer, an
output layer, a hidden layer of {\it linear} units and two sets of
weights. Our results  for the degenerate case are new.

\begin{figure*}[h!]\centering
  \includegraphics[width=0.4\textwidth]{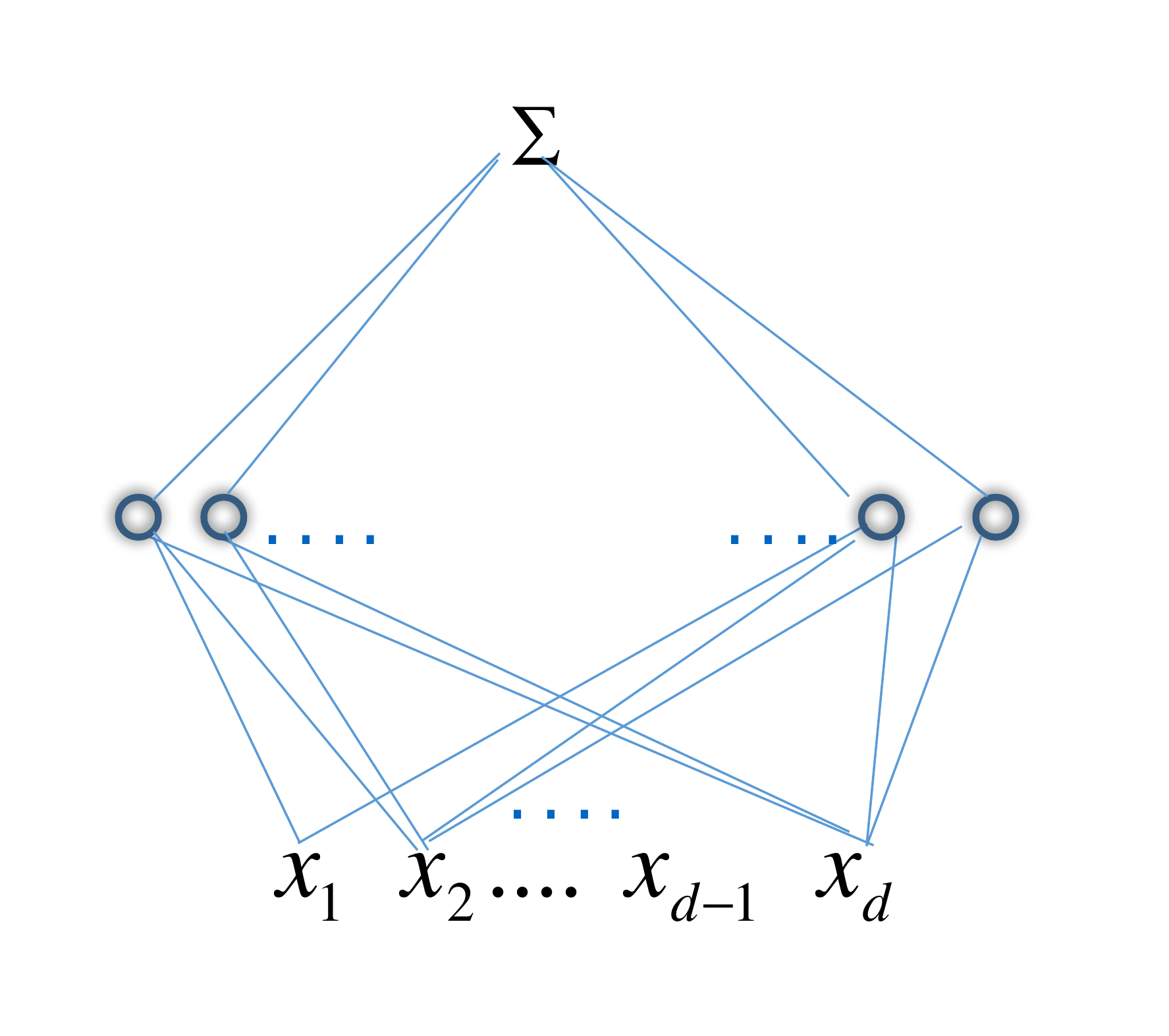}
\caption{\it A one-hidden layer network with weights represented by
  the matrix $W_1$ from the $d$ inputs to the $N$ hidden units and the
  matrix $W_2$ for the weights from the hidden units to the output.}
\label{1-hidden}
\end{figure*}

Suppose that gradient descent is used to minimize a quadratic loss on
a set of $n$ training examples. The activation function of the hidden
units is linear, that is $h(z)=az$. Notice that though the mapping
between the input $x$ and the output $f(x)$ is linear, the dependence
of $f$ on the set of weights $W_1, W_2$ is nonlinear. Such deep linear
networks were studied in a series of papers\cite{Baldi:89,
  DBLP:journals/corr/SaxeMG13, kawaguchi2016deep} in situations in
which there are more data than parameters, $n \ge d$. In this section
we show how the network will not overfit even in the
overparametrized case, which corresponds to $n < d$.

\begin{figure*}[h!]\centering
\includegraphics[width=1.0\textwidth]{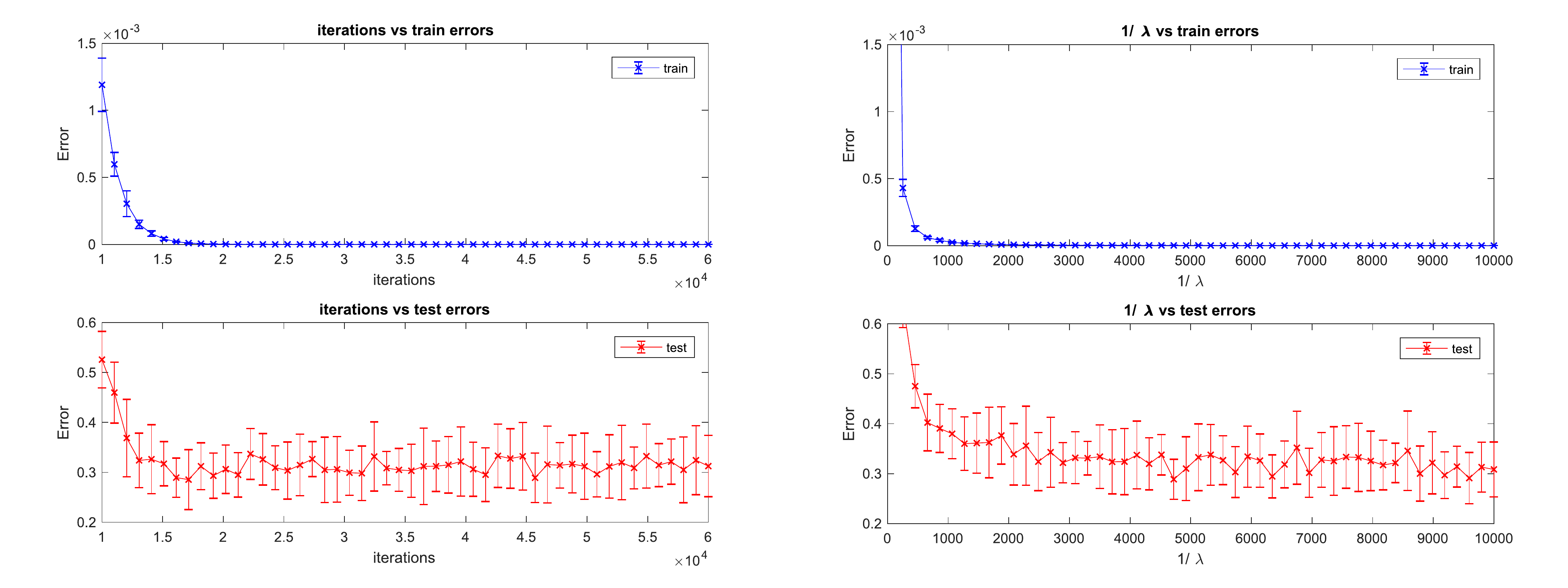}
\caption{\it In this figure we show how a linear network
  $W_2 W_1 X = y $ converges to the test error in two different
  training protocols: in the first the number of iterations
  grows, in the second  $\frac{1}{\lambda}$ grows large (with
  optimization run to covergence for each $\lambda$.  Optimization is
  performed in both case with  SGD. The training on the right
  were done on by minimizing 
  $\frac{1}{n} \sum^{n}_{i=1} V(f(x),y) + \lambda \| W_2 W_1 \| $
  with various values of the regularization parameters
  $\lambda$. Similarly, the experiments on the right minimized the
  same cost function except that $\lambda = 0$.  The number of data
  points were $n=30$, the input dimension $d=31$ (including the
  bias).  The target function was generated by a two layer linear network with 1
  hidden unit and thus a total of 32 weights. The approximating two linear network 
  had 2 hidden units and thus 64 parameters.  The test set had 32 data
  points. Note that there was no noise in the training or test set.}
\label{1-hiddenLin}
\end{figure*}

\paragraph{Model Description and notation}
Consider a network, $d$ inputs, $N$
hidden {\it linear} units and $d'$ outputs. 
We denote
the loss with $L(w)=\frac{1}{2}||W_2W_1 X-Y||^2$, where
$X \in \R^{d, n}, Y \in \R^{d',n}$, $W_2 \in \R^{d',N}$ and
$W_1 \in \R^{N,d}$. Let $E = W_2 W_1 X -Y \in \R^{d',n}$. Let $w=\vect(W_1^\top, W_2^\top)\in \R^{Nd+d'N}$.

\paragraph{Gradient dynamics and Hessian}

We can write the dynamical system corresponding to its gradient as

\begin{equation}
\dot{W_1} = - \nabla_{W_1} L(w)= - W_2^\top  E X^\top =
W_2^\top YX^\top -  W_2^\top W_2W_1 X X^\top
\label{1-hidden-dynamics-W_1}
\end{equation}

\noindent and similarly

\begin{equation}
\dot{W_2}= - YX^\top W_1^\top +  W_2W_1XX^\top W_1^\top
\label{1-hidden-dynamics-W_2}
\end{equation}

The linearization of this dynamical system (Equations
\ref{1-hidden-dynamics-W_1} and \ref{1-hidden-dynamics-W_2})
corresponds to the Hessian of the  loss, which is 
$$
 \nabla^2 L(w) =
\begin{bmatrix}
W^{\top}_2W_2 \otimes XX^\top & C \\
C^{\top} & I_{d'} \otimes XX^\top W_1 XX^\top W_1 \\
\end{bmatrix}
\in \R^{(Nd+d'N), (Nd+d'N)}
$$
where
$$
C = [W_{2}^\top \otimes XX^\top W_1^\top ]+  [I_N \otimes X (E^\top)_{\bullet 1}, \dots, I_N \otimes X (E^\top)_{\bullet d'}].
$$
Here, $(E^\top)_{\bullet i}$ denote the $i$-th column of $E^\top$. 

\paragraph{Degeneracy of the dynamics}

Suppose that $d' <N$ (overparametrization). Then, $W^{\top}_2W_2$ has zero eigenvalue, which
implies that $W^{\top}_2W_2 \otimes XX^\top$ has zero eigenvalue. In turn, this 
implies that $\nabla^2 L(w)$ has zero eigenvalue. Indeed, we can
construct a degenerate direction as follows: consider a vector
$w=([w_1\otimes w_2],0)^\top$ such that $w_1$ is in the null space of
$W_2$, with which we have that $w^\top\nabla^2 L(w)w=0$.

However, along this degenerate direction, the output model does not
change at all, because this direction corresponds to adding an element
of null space of $W_2$ to $W_1$ (i.e., the product $W_2W_1$ does not
change in this direction) Also, (S)GD does not add element
corresponding to this direction from Equation
\ref{1-hidden-dynamics-W_1}, since $\dot{W_1}$ is in the column space
of $W_2^\top$ which is orthogonal to the null space of $W_2$.

\paragraph{Trivial  and nontrivial degeneracy} 

It is interesting that even in the overdetermined case $n>>d$, the
solution is not unique in terms of $W_1$ and $W_2$.  This corresponds
to the ``trivial'' degeneracy, which is a terminology defined as
follows: Let $f_w: \mathcal{X} \rightarrow \mathcal{Y}$ be a model
parameterized by the parameter $w$. For example, in the case of the
linear one-hidden layer model, $f_w(x)=W_2W_1 x$ where
$w=\vect(W_1^\top, W_2^\top)$ and $x \in \mathcal X$. Then, we call
degeneracy ``trivial'' if and only if there exists $\epsilon >0$ such
that $f_w=f_{w+\epsilon \Delta w}$ where $\Delta w$ is in the
degenerate space. Here, we write $f_1=f_2$ if and only if their graphs
are equal: i.e.,
$\{(x,y) \in \mathcal X \times \mathcal Y: x \in \mathcal X,
f_1(x)=y\}=\{(x,y) \in \mathcal X \times \mathcal Y: x \in \mathcal X,
f_2(x)=y\}$. We call degeneracy ``nontrivial'' if and only if it is
not trivial.

As we will see, trivial degeneracy occurs in multilayer networks with
unit activation which is linear (section
\ref{DeepNetsLinSusubsection}) or ReLU (section
\ref{sec:app_nonlinear_nets}).  The situation in terms of trivial and
nontrivial degeneracy is quite different in the overdetermined case
and underdetermined case:

\begin{itemize}
\item The solution in the overdetermined case yields a network that is
  unique despite the degeneracy. In other words, every possible
  degeneracy is \textit{trivial degeneracy}. Furthermore, more data
  cannot change this situation.\item In the underdetermined case, we
  may have both types of degeneracy. More data would avoid
  \textit{nontrivial degeneracy} which yields different networks.
\end{itemize}


\paragraph{GD and SGD converge to the Minimum Norm Solution }
Assume that $N,d \ge n \ge d'$ (overparametrization).  Let
$A=W_2W_1$. For any matrix $M$, let $\Col(M)$ and $\Null(M)$ be the
column space and null space of $M$.

Due to the convexity of the loss $L$ as a function of the entries of
$A$, the global minimum $A^*$ in terms of the entries of $A$ is
obtained as follows: \begin{align*} &\frac{\partial L}{\partial A^*} =
  (A^*X-Y) X^\top = 0 \\ & \Leftrightarrow A^* \in \{A= YX
  ^{\top}(XX^{\top})^{\dagger}+ B_X : B_{X}X=0 \}
\end{align*}

The minimum norm solution $A^*_{\min}$ is the global minimum $A^*$
with its rows not in $\Null(X^\top)$, which is
$$
A^*_{\min} =  YX ^{\top}(XX^{\top})^{\dagger}.
$$ 
Due to the over-parameterization with $w$, for any entries of $A$,
there exist entries of $W_{1}$ and $W_2$ such that $A=W_2W_1$. Thus,
the global minimum solutions in terms of $A$ and $w$ are the same.

\begin{lemma} \label{lemma:1-hidden_minimum_norm}
For gradient descent  and stochastic gradient descent with any mini-batch size, 
\begin{itemize}
\item 
any number of the iterations adds no element in $\Null(X^\top)$ to the rows of $W_1$, and hence
\item if the rows of $W_1$ has no element in $\Null(X^\top)$ at
  anytime (including the initialization), the sequence converges to a
  minimum norm solution if it converges to a solution.
\end{itemize}
\end{lemma}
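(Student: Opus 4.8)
The plan is to show that the rows of $W_1$ are confined to the subspace $\Col(X)=\Null(X^\top)^\perp$ throughout training, and then to read off the minimum-norm conclusion by passing to the limit and combining with the characterization of the solution set recorded just above the lemma. This is the product-parametrization analogue of the invariant-subspace argument used for Lemma~\ref{lem:min-norm-lemma2} in the no-hidden-layer case.

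First I would write out one step of the dynamics. From Equation~\ref{1-hidden-dynamics-W_1}, $\nabla_{W_1}L(w)=W_2^\top E X^\top$ with $E=W_2W_1X-Y$, so gradient descent updates $W_1^{t+1}=W_1^t-\eta_t W_2^\top E_t X^\top$. The key observation is that the $i$-th row of the increment $-\eta_t W_2^\top E_t X^\top$ is a linear combination of the rows of $X^\top$, i.e. of the columns of $X$, hence it lies in $\Col(X)=\Null(X^\top)^\perp$ --- and this holds no matter what the current $W_2$ is. For SGD with a mini-batch $S$ the increment is $-\eta_t W_2^\top E_S X_S^\top$, whose rows lie in the row space of $X_S^\top\subseteq\Col(X)$, so the same conclusion holds. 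This gives the first bullet: no iteration adds any component in $\Null(X^\top)$ to a row of $W_1$.

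For the second bullet I would pass to the limit. Assuming the rows of $W_1$ carry no $\Null(X^\top)$-component at initialization, the first bullet keeps this true for every $t$, and since $\Col(X)$ is a (finite-dimensional, hence closed) subspace the limit $W_1^\infty$ inherits it; therefore every row of $A_\infty:=W_2^\infty W_1^\infty$, being a combination of rows of $W_1^\infty$, also lies in $\Col(X)$. If moreover $w_t$ converges to a solution, then $A_\infty X=Y$, so by the characterization above $A_\infty=YX^\top(XX^\top)^\dagger+B_X$ with $B_XX=0$, i.e. with the rows of $B_X$ in $\Null(X^\top)$; since the rows of $A^*_{\min}=YX^\top(XX^\top)^\dagger$ also lie in $\Col(X)$, the rows of $B_X=A_\infty-A^*_{\min}$ lie in $\Null(X^\top)\cap\Null(X^\top)^\perp=\{0\}$, forcing $B_X=0$ and $A_\infty=A^*_{\min}$.

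I do not expect a serious obstacle here: the work is bookkeeping --- keeping the row-space/column-space correspondence and the matrix shapes straight, checking that the relevant subspace survives in the limit, and checking that the second factor $W_2$ neither enlarges the subspace containing the $W_1$-increments nor the subspace containing the rows of $A=W_2W_1$. The one hypothesis that genuinely cannot be dropped is that the iteration actually converges to a zero-loss solution, which is assumed in the statement.
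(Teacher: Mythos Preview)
Your proposal is correct and follows essentially the same approach as the paper: both compute the $W_1$-gradient as $W_2^\top E X^\top$ (or $W_2^\top \bar E_t \bar X_t^\top$ for mini-batches), observe that its rows lie in $\Col(X)=\Null(X^\top)^\perp$, and then note that the rows of $A=W_2W_1$ inherit this since $\Col((W_2W_1)^\top)\subseteq\Col(W_1^\top)$. Your write-up is in fact slightly more explicit than the paper's in two places --- you invoke the closedness of $\Col(X)$ to pass to the limit, and you spell out the orthogonal decomposition $B_X=A_\infty-A^*_{\min}\in\Null(X^\top)\cap\Null(X^\top)^\perp$ --- whereas the paper records these steps more tersely.
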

\begin{proof}
From $\frac{\partial L}{\partial \vect(W_1)} =[X \otimes W_2^{\top}]\vect(E) = \vect(W_2^\top E X^\top)$, we obtain that

$$
\frac{\partial L}{\partial W_1} = W_2^\top E X^\top.
$$

For SGD with any mini-batch size, let $\bar X_t$ be the input matrix
corresponding to a mini-batch used at $t$-th iteration of SGD. Let
$\bar L_t$ and $\bar E_t$ be the corresponding loss and the error
matrix. Then, by the same token,

$$
\frac{\partial \bar L_t}{\partial W_1} = W_2^\top \bar E_t \bar X^\top_t.
$$

From these gradient formulas, the first statement follows by noticing
that for any $t$,
$\Col(\bar X_t ) \subseteq\Col(X)\perp \Null(X^{\top})$. The second
statement follows the fact that if the rows of $W_1$ has no element in
$\Null(X^\top)$ at anytime $t$, then for anytime after that time $t$,
$\Col(W_1^\top W_2^\top) \subseteq\Col(W_1^\top )
\subseteq\Col(X)\perp \Null(X^{\top})$.

\end{proof}

The above lemma shows that if (stochastic) gradient descent sequence
converges to a solution, then we can easily ensure the solution to be
a minimum norm solution. However, we have not shown that (stochastic)
gradient descent  converges to a global minimum solution. This result
is provided by the following lemma.
 
\begin{lemma} \label{lemma:1-hidden_stationary_equal_global}
If $W_2 \neq 0$, every stationary point w.r.t. $W_1$ is a global minimum.
\end{lemma}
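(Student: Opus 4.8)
\textit{Proof proposal.}
The plan is to reduce stationarity in $W_1$ to the least‑squares normal equations for the end‑to‑end map $A := W_2W_1$, and then invoke convexity in $A$ together with the overparametrization already assumed ($N,d\ge n\ge d'$). First I would write down the stationarity condition using the gradient formula recorded above,
\[
0=\nabla_{W_1}L(w)=W_2^\top E X^\top = W_2^\top(W_2W_1X-Y)X^\top ,
\]
and try to pass from this to the genuine normal equation $EX^\top=(AX-Y)X^\top=0$. When $W_2$ has full row rank $d'$ (so $W_2^\top$ is injective) this is immediate; for a merely nonzero $W_2$ one only gets that every column of $EX^\top$ lies in $\Col(W_2)^\perp$, i.e. $P\,(AX-Y)X^\top=0$ with $P$ the orthogonal projection onto $\Col(W_2)$, and since $\Col(A)\subseteq\Col(W_2)$ this says $A$ solves the normal equations of the constrained problem $\min\{\tfrac12\|AX-Y\|_F^2:\Col(A)\subseteq\Col(W_2)\}$. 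Reconciling this rank‑deficient case with the unqualified statement ``global minimum'' is the step I expect to be the main obstacle: I would either add the full‑row‑rank hypothesis on $W_2$ explicitly, or read ``global minimum'' as global among parameter settings sharing that $W_2$ (which is just convexity of $L$ in $W_1$ and does not even need $W_2\ne0$).

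Granting $EX^\top=0$, the second step is routine. The function $\widetilde L(A):=\tfrac12\|AX-Y\|_F^2$ is convex in the entries of $A$, with $\nabla_A\widetilde L(A)=(AX-Y)X^\top$, so $EX^\top=0$ certifies that $A=W_2W_1$ is a global minimizer of $\widetilde L$ over all of $\R^{d',d}$ --- exactly the characterization of the end‑to‑end global minima recorded earlier (the set $\{A=YX^\top(XX^\top)^\dagger+B_X:B_XX=0\}$).

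Finally I would transfer this back to $L$: since $N\ge d'$ and (after the first step) $\Col(W_2)=\R^{d'}$, the map $W_1\mapsto W_2W_1$ is onto $\R^{d',d}$, so
\[
\min_{W_1',W_2'}L(W_1',W_2')=\min_{A}\widetilde L(A)=\widetilde L(A)=L(W_1,W_2),
\]
i.e. $(W_1,W_2)$ is a global minimum of $L$. In the interpolating regime $\mathrm{rank}(X)=n$ this common value is $0$, consistent with Lemma~\ref{Kenji} and the discussion of zero‑loss minima, and together with Lemma~\ref{lemma:1-hidden_minimum_norm} it yields convergence of GD/SGD to a minimum‑norm global minimizer whenever the iterates converge. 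The only real content is the first step; everything after it is convexity plus a surjectivity count.
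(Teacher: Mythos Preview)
Your approach and the paper's are essentially the same: both reduce $W_1$–stationarity to a linear least–squares normal equation and then invoke convexity in the end‑to‑end map $A=W_2W_1$. The paper packages this via vectorization, writing $L(w)=\tfrac12\|D\,\vect(W_1^\top)-\vect(Y^\top)\|^2$ with $D=W_2\otimes X^\top$, so that a $W_1$–stationary point gives the projection of $\vect(Y^\top)$ onto $\Col(D)$, while the global minimum corresponds to the projection onto $\R^{d'}\otimes\Col(X^\top)$; it then asserts these column spaces coincide when $W_2\neq 0$. Your version works directly with $W_2^\top EX^\top=0$ and tries to cancel $W_2^\top$. These are the same step in different clothing.

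The gap you flagged is real, and it is present in the paper's proof as well. From $W_2\neq 0$ one only gets $\Col(W_2\otimes X^\top)=\Col(W_2)\otimes\Col(X^\top)$, which equals $\R^{d'}\otimes\Col(X^\top)$ precisely when $W_2$ has full row rank $d'$; equivalently, cancelling $W_2^\top$ in $W_2^\top EX^\top=0$ to get $EX^\top=0$ requires $W_2^\top$ to be injective. For $d'=1$ the hypotheses ``$W_2\neq 0$'' and ``$\mathrm{rank}(W_2)=d'$'' coincide, but for $d'\ge 2$ they do not: with $X=Y=I_2$ and $W_2=e_1e_1^\top$ (nonzero, rank $1$), the unique $W_1$–stationary point gives $L=\tfrac12>0$, while the global minimum of $L$ is $0$. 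So the lemma as stated needs the stronger hypothesis $\mathrm{rank}(W_2)=d'$ (which is generic under the standing assumption $N\ge d'$). Under that hypothesis your first step goes through cleanly, and the remainder of your argument---convexity of $\widetilde L(A)$ plus surjectivity of $W_1\mapsto W_2W_1$---is exactly what the paper uses, so your proof is then complete and matches the paper's.
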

\begin{proof}

For any global minimum solution $A^*$, the transpose of the model output is
$$
(A^{*}X)^{\top}=X^\top (XX^{\top})^{\dagger} XY^\top
$$
\noindent which is the projection of $Y^\top$ onto $\Col(X^{\top})$. Let $D=[W_{2}\otimes X^\top]$. Then, with the transpose of the model output,
the loss can be rewritten as $$
L(w) =\frac{1}{2} \|X^{\top} W_1^\top W_2^\top-Y^\top\|^2= \frac{1}{2} \|D\vect(W_{1}^\top)-Y^\top \|^2.
$$ 
The condition for a stationary point yields
\begin{align*}
  &0=\frac{\partial L}{\partial \vect(W_{1}^\top)} =D^{T}(D \vect(W_1^\top)-Y^\top) \\ & \Rightarrow D\vect(W_{1}^\top)=D(D^TD)^{\dagger} D^T Y^\top  = \text{Projection of $Y^\top$ onto $\Col(D)$}. 
\end{align*}
If $W_2 \neq 0$, we obtain $\Col(D)=\Col(X^{\top})$. Hence
any stationary point w.r.t. $W_1$ is a global minimum. 
\end{proof}

It follows from Lemmas \ref{lemma:1-hidden_minimum_norm} and
\ref{lemma:1-hidden_stationary_equal_global} that, if we initialize
the rows of $W_1$ with no element in $\Null(X^\top)$, and if
$W_2 \neq 0$, GD and SGD find a minimum norm solution.

The summary is provided by 

\begin{theorem} \label{theorem}
If 
\begin{itemize}
\item the rows of $W_1$ are initialized with no element in $\Null(X^\top)$,
\item and if $W_2 \neq 0$, 
\end{itemize}
then gradient descent (and stochastic gradient descent) converges to a minimum norm
solution.
\end{theorem}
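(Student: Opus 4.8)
The plan is to derive the theorem directly from the two preceding lemmas, supplying only the convergence argument that they leave conditional. First I would record that, by Lemma~\ref{lemma:1-hidden_minimum_norm} together with the hypothesis that the rows of $W_1$ start with no component in $\Null(X^\top)$, the rows of $W_1$ stay in $\Col(X)=\Null(X^\top)^\perp$ for every iterate of GD (and of SGD, via the mini-batch identity $\partial \bar L_t/\partial W_1 = W_2^\top \bar E_t \bar X_t^\top$ and $\Col(\bar X_t)\subseteq\Col(X)$). Consequently every row of the end-to-end matrix $A_t = W_2 W_1$ at iteration $t$, being a linear combination of rows of $W_1$, also lies in $\Col(X)$.

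Next I would invoke convergence. The loss $L$ is smooth and, for a sufficiently small fixed step size, nonincreasing along the GD trajectory and bounded below by $0$; along a convergent subsequence the iterate approaches a stationary point $w_\infty = \vect(W_1^{\infty\top}, W_2^{\infty\top})$ of $L$, in particular one that is stationary with respect to $W_1$. Provided $W_2^\infty \neq 0$ (the standing hypothesis), Lemma~\ref{lemma:1-hidden_stationary_equal_global} upgrades ``stationary w.r.t.\ $W_1$'' to ``global minimum'', so $A_\infty = W_2^\infty W_1^\infty$ satisfies $A_\infty = YX^\top(XX^\top)^\dagger + B_X$ with $B_X X = 0$, i.e.\ with the rows of $B_X$ in $\Null(X^\top)$.

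Finally I would close the loop: by the first step the rows of $A_\infty$ have no component in $\Null(X^\top)$, while the rows of $YX^\top(XX^\top)^\dagger$ already lie in $\Col(X)$; hence the rows of $B_X$ lie simultaneously in $\Null(X^\top)$ and in $\Col(X)=\Null(X^\top)^\perp$, forcing $B_X = 0$ and $A_\infty = YX^\top(XX^\top)^\dagger = A^*_{\min}$, the minimum norm solution. The SGD statement follows by the identical chain of implications, replacing deterministic convergence by convergence (almost surely, or in expectation) to a stationary point and using the mini-batch gradient formulas of Lemma~\ref{lemma:1-hidden_minimum_norm}.

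The main obstacle is exactly the convergence step and, inside it, ensuring $W_2^\infty \neq 0$: the two lemmas are deliberately phrased conditionally (``if it converges to a solution''), so the genuine analytic content still to supply is (i) that the iterates remain bounded and approach the stationary set rather than diverging, and (ii) that the limit avoids the spurious critical set $\{W_2=0,\ YX^\top W_1^\top = 0\}$ where Lemma~\ref{lemma:1-hidden_stationary_equal_global} fails. I would handle (ii) by observing that this set consists of strict saddles which gradient descent escapes for generic initialization, or simply by reading the hypothesis ``$W_2 \neq 0$'' as holding along the whole trajectory; (i) I would either assume, consistent with the paper's stated level of rigor, or control by using that $L$ viewed as a function of $A=W_2W_1$ is convex and coercive when restricted to $\Col(X)$-valued matrices, which is where the trajectory lives.
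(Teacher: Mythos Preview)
Your proposal is essentially the same approach the paper takes: the paper's entire argument for this theorem is the single sentence ``It follows from Lemmas \ref{lemma:1-hidden_minimum_norm} and \ref{lemma:1-hidden_stationary_equal_global} that \ldots GD and SGD find a minimum norm solution,'' with no additional detail. You have correctly reconstructed how the two lemmas combine, and you have gone further than the paper by explicitly flagging the two genuine gaps (convergence of the iterates and $W_2^\infty\neq 0$) that the paper, at its self-declared physicist level of rigor, simply leaves unaddressed.
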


As a final remark, the analysis above holds for a broad range of loss
function and not just the square loss.  Asymptotically
$\dot{W} = - \nabla_{W} L(W^\top X) $, in which we makes explicit the
dependence of the loss $L$ on the linear function $W^\top X$. Since
$\nabla_{W} L(W^\top X) = X^\top \nabla_{Z} L(Z)$ the update to $W$ is
in the span of the data $X$, that is, it does not change the
projection of $W$ in the null space of $X$. Thus if the norm of the
components of $W$ in the null space of $X$ was small at the beginning
of the iterations, it will remain small at the end. This means that
among all the solutions $W$ with zero error, gradient descent selects
the minimum norm one.

\subsubsection{One-hidden layer polynomial networks}
\label{PolActiv}

Consider a polynomial activation for the hidden units. The case of
interest here is $ n > d$. Consider the loss
$L(w)=\frac{1}{2}||P_m(X)-Y||^2_F$ where
\begin{equation}
P_m(X) =W_2(W_{1}X)^{\tiny m}.
\label{PolyDeep}
\end{equation}

\noindent where the power $m$ is elementwise.

We obtain, denoting $E = P_m(X)-Y$, with $E \in \R^{d',n}$, $W_2 \in
\R^{d',N}$, $W_1 \in \R^{N,d}$, $E' \in \R^{d', d}$

\begin{equation}
\nabla_{W_1} L(w)=m (W_{1}X)^{m-1}\circ (W_{2}^TE)]
X^\top= E' X^\top
\label{PolyDeep2}
\end{equation}
\noindent where the symbol $\circ$ denotes Hadamard (or entry-wise) product. In
a similar way, we have 

\begin{equation}
\nabla_{W_2} L(w)=E (((W_{1}X)^{m})^\top ).
\label{PolyDeep3}
\end{equation}

\subsubsection{Nonlinear deep networks} 
\label{sec:app_nonlinear_nets}
We now
discuss an extension of the above arguments to the nonlinear
activation case with activation function $\sigma(z)$, such as the ReLU activation.

\paragraph{Gradient dynamics  of polynomial multilayer networks}   We remark that
if  $\sigma(z)=az+bz^2$, the dynamical system induced by GD and the square loss
is 

\begin{equation}
\dot{W_1}=-   (a W_2^\top  E  + 2 b [(W_{1}X) \circ (W_{2}^TE)])
X^\top
\label{PolyDeep21}
\end{equation}
\noindent and

\begin{equation}
\dot{W_2}=- [aE X^\top W_1^\top + bE (((W_{1}X)^{2})^\top )].
\label{PolyDeep31}
\end{equation}

\paragraph{Hessian and degeneracy of  nonlinear one-hidden layer networks}   

For a general activation function $\sigma,$  some block of the Hessian of
the loss can be written as

$$
\nabla^2 L(w) = \begin{bmatrix}- & - \\
- & I_{d'} \otimes \sigma(W_1X) \sigma(W_1X)^\top\ \\
\end{bmatrix}.
$$

We can derive other blocks of the Hessian  as
follows:
$\nabla^2 L(w) = \nabla \hat Y(w) \nabla \hat Y(w)^\top + \sum_{i=1}^n
\nabla^2 \hat Y_i(w) E_i$, where $\hat Y(w)=\vect(W_{2}\sigma(W_{1}X))$ is the model output vector.  {\bf If} $E=0$ at the limit point, then
$\nabla^2 L(w) =\nabla \hat Y(w) \nabla \hat Y(w)^\top$. So, let us
consider the case where $E=0$ at the limit point.
\begin{align*}
&\nabla \hat Y(w) \nabla \hat Y(w)^\top 
\\ &= \begin{bmatrix}\nabla_1 \hat Y(w) \nabla_1 \hat Y(w)^\top & \nabla_1 \hat Y(w) \nabla_2 \hat Y(w)^\top \\
\nabla_2 \hat Y(w) \nabla_1 \hat Y(w)^\top & \nabla_2 \hat Y(w) \nabla_2 \hat Y(w)^\top \ \\
\end{bmatrix}.
\end{align*}
where
$$
\nabla_1 \hat Y(w)^\top = [I_{n} \otimes W_2] \dot\sigma(\vect[W_1X]) [X^\top \otimes I_N], 
$$
and
$$
\nabla_2 \hat Y(w)^\top =I_{d'} \otimes  \sigma(W_1X)^\top
$$

We now show that the first block of the Hessian,
$$
\nabla_1 \hat Y(w) \nabla_1 \hat Y(w)^\top =[X \otimes I_N] \dot\sigma(\vect[W_1X]) [I_{n} \otimes W_2^\top W_2] \dot\sigma(\vect[W_1X]) [X^\top \otimes I_N],
$$
has the zero eigenvalues when the model is over-parameterized as  $dN > n \cdot \min(N,d')$. This happens with
over-parameterization. Then, the rank of
$[I_{n} \otimes W_2^\top W_2]$ is at most $n \cdot \min(N,d')$, which
implies that the first block in the Hessian
($[X \otimes I_N] \dot\sigma(\vect[W_1X]) [I_{n} \otimes W_2^\top W_2]
\dot\sigma(\vect[W_1X]) [X^\top \otimes I_N]$) of size $dN$ by $dN$ has zero
eigenvalue (the rank of the product of matrices is at most the minimum
of the ranks of matrices), which implies that
$\nabla \hat Y(w) \nabla \hat Y(w)^\top$ has zero eigenvalue.

\paragraph{Hessian and degeneracy of  nonlinear two-hidden layer networks}

For networks with two-hidden layers, we have that 

$$
\nabla \hat Y(w) \nabla \hat Y(w)^\top = \begin{bmatrix}\nabla_{1} \hat Y(w)\nabla_{1} \hat Y(w)^\top  & \nabla_{1} \hat Y(w)\nabla_{2} \hat Y(w)^\top & \nabla_{1} \hat Y(w)\nabla_{3} \hat Y(w)^\top \\
\nabla_{2} \hat Y(w)\nabla_{1} \hat Y(w)^\top & \nabla_{2} \hat Y(w)\nabla_{2} \hat Y(w)^\top  &   \nabla_{2} \hat Y(w)\nabla_{3} \hat Y(w)^\top \\
\nabla_{3} \hat Y(w)\nabla_{1} \hat Y(w)^\top & \nabla_{3} \hat Y(w)\nabla_{2} \hat Y(w)^\top & \nabla_{3} \hat Y(w)\nabla_{3} \hat Y(w)^\top \\
\end{bmatrix},
$$

where, 
$$
\nabla_{1} \hat Y(w)^\top =[I_{n} \otimes W_{3}] \dot \sigma(h_2)[I_{n} \otimes W_{2}]  \dot \sigma(h_1) [X^\top \otimes I_{N_1}], 
$$
$$
\nabla_{2} \hat Y(w)^\top =[I_{n} \otimes W_{3}] \dot \sigma(h_2)
[\sigma(W_1X)^\top \otimes I_{N_2}],
$$
and 
$$
\nabla_{3} \hat Y(w)^\top =[\sigma(W_2\sigma(W_1X))^\top \otimes I_{N_{3}}].
$$

By the same token, we have zero eigenvalue in the block $\nabla_{1} \hat Y(w)\nabla_{1} \hat Y(w)^\top$ with overparametrization. 

Let
$D^\top=[I_{n} \otimes W_{3}] \dot \sigma(h_2)[I_{n} \otimes W_{2}]
\dot \sigma(h_1) [X^\top \otimes I_{N_1}]$. If the nonlinear
activation is ReLU, then we can write $\hat Y(w)=D^{\top} \vect(W_1)$
and $\nabla_{1} \hat Y(w)\nabla_{1} \hat Y(w)^\top = DD^\top $. Thus,
the zero eigenvalues of the Hessian block
$\nabla_{1} \hat Y(w)\nabla_{1} \hat Y(w)^\top$ are the exactly those
defining the nulls space of $D^\top$. In other words, the zero
eigenvalue directions do not change the loss values. Furthermore,
similarly to the no-null-element proof for deep linear case,
$$
\nabla_{\vect(W_1)} \hat L(w) =D (\hat Y(w)-Y)  
$$ 
which implies that the gradient descent does not add any elements from the null
space of $D^\top$. However, $D$ is now a function of $w$ (and $X$), and
$D$ is changing during training when far  from the critical point.

\paragraph{Derivation of the Hessian}

We describe below the derivation of the Hessian for nonlinear two-hidden
layer case. By the same token, we derived the Hessian for one hidden
layer with nonlinear and linear cases.

To avoid introducing different symbols for different domains of
activation functions, let $\sigma(M)$ be the elementwise application
of nonlinear activation for any size of matrix $M$. Accordingly, given
a size $t$ of vector $v$, let $\dot \sigma(v)$ be the diagonal matrix
of size $t$ by $t$, with the diagonal elements being the derivative of
the nonlinear activation function for each element $v_k$ for
$k=1\dots,t$.

Let $L(w)=\frac{1}{2} \|W_3\sigma(W_2\sigma(W_1X))-Y\|^2_F$. Let $\hat Y_n(w)= W_3\sigma(W_2\sigma(W_1X))$ and $E=\hat Y_n(w)-Y$. Then, $L(w)=\vect(E)^\top \vect(E)$.    

Then, 
\begin{align*}
\vect(\hat Y_n(w) &)= \vect(W_3\sigma(W_2\sigma(W_1X)))
\\ &= [I_{n} \otimes W_{3}] \vect(\sigma(W_2\sigma(W_1X)))
\\ &= [I_{n} \otimes W_{3}] \sigma(\vect(W_2\sigma(W_1X)))
\\ &= [I_{n} \otimes W_{3}]  \sigma([I_{n} \otimes W_{2}]  \vect(\sigma(W_1X)))
\\ &=  [I_{n} \otimes W_{3}]  \sigma([I_{n} \otimes W_{2}]  \sigma(  \vect(W_1X)))
 \\ &=[I_{n} \otimes W_{3}]  \sigma([I_{n} \otimes W_{2}]  \sigma([X^\top \otimes I_{N_1}] \vect(W_1)))  
\end{align*}  
Similarly, 
\begin{align*}
\vect(\hat Y_n(w)) &=[I_{n} \otimes W_{3}] \sigma([\sigma(W_1X)^\top \otimes I_{N_2}]  \vect(W_2)), 
\end{align*}  
and
\begin{align*}
\vect(\hat Y_n(w)) &= [\sigma(W_2\sigma(W_1X))^\top \otimes I_{N_{3}}] \vect(W_3). \end{align*}  

Let $h_2=\vect(W_2\sigma(W_1X))$ and $h_1=  \vect(W_1X))$. Let $\nabla_{k} \vect(\hat Y_n(w))^\top = \nabla_{\vect(W_k)} \vect(\hat Y_n(w))^\top$.

Then,
\begin{align*}
\nabla_{1} \vect(\hat Y_n(w))^\top =[I_{n} \otimes W_{3}] \dot \sigma(h_2)[I_{n} \otimes W_{2}]  \dot \sigma(h_1) [X^\top \otimes I_{N_1}], 
\end{align*}
\begin{align*}
\nabla_{2}\vect(\hat Y_n(w))^\top =[I_{n} \otimes W_{3}] \dot \sigma(h_2) [\sigma(W_1X)^\top \otimes I_{N_2}], 
\end{align*}
and 
\begin{align*}
\nabla_{3} \vect(\hat Y_n(w))^\top =[\sigma(W_2\sigma(W_1X))^\top \otimes I_{N_{3}}].
\end{align*}

\paragraph{Degeneracy  for nonlinear k-layer networks}

The following statement generalizes the above results about degeneracy of nonlinear two-hidden layer networks to those of nonlinear multilayer networks with an arbitrarily depth.
We use the same notation as above unless otherwise specified.
\begin{theorem} \label{thm:zero_eigenvalue}
Let $H$ be a positive integer. Let 
$
h_k =W_{k} \sigma(h_{k-1}) \in \mathbb R^{N_k,n}
$ 
for $k \in \{2,\dots,H+1\}$ and $h_1=W_{1}X$, where  $N_{H+1}=d'$.  Consider a set of $H$-hidden layer models of the form,
$
\hat Y_n(w) = h_{H+1}, 
$  
parameterized by $w=\vect(W_1,\dots,W_{H+1})\in \mathbb{R}^{dN_1+N_1N_2+N_2N_3+\cdots+N_HN_{H+1}}$. Let $L(w)=\frac{1}{2} \|\hat Y_n(w) -Y\|^2_F$ be the objective function. Let   $w^*$ be any twice differentiable point of $L$ such that $L(w^*)=\frac{1}{2} \|\hat Y_n(w^*) -Y\|^2_F=0$. Then, if there exists $k\in \{1,\dots,H+1\}$ such that $N_{k} N_{k-1} >n \cdot \min(N_{k},N_{k+1},\dots,N_{H+1})$ where $N_0 =d$ and  $N_{H+1}=d'$ (i.e., overparametrization), there exists a zero eigenvalue of Hessian $\nabla^2 L(w^{*})$.    
\end{theorem}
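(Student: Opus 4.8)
The plan is to reduce everything to the Gauss--Newton form of the Hessian at a zero-loss point and then to construct an explicit null vector supported on the single weight block $\vect(W_k)$ that the overparametrization hypothesis singles out.

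First I would record the general Hessian decomposition already used above for $H\le 2$: writing $E=\hat Y_n(w)-Y$ so that $L(w)=\tfrac12\vect(E)^\top\vect(E)$, one has
\begin{equation}
\nabla^2 L(w)=\nabla\hat Y_n(w)\,\nabla\hat Y_n(w)^\top+\sum_{i}\nabla^2\hat Y_{n,i}(w)\,E_i,
\end{equation}
where $\hat Y_{n,i}$ and $E_i$ are the entries of $\vect(\hat Y_n(w))$ and $\vect(E)$. The hypothesis $L(w^*)=\tfrac12\|\hat Y_n(w^*)-Y\|_F^2=0$ forces $E=0$ at $w^*$, so the second sum vanishes and $\nabla^2 L(w^*)=\nabla\hat Y_n(w^*)\,\nabla\hat Y_n(w^*)^\top$ is a positive semidefinite Gram matrix. (Twice differentiability of $w^*$ is used only so that $\nabla^2 L(w^*)$ exists; for ReLU this is the generic case in which no pre-activation is exactly zero.) In particular, a nonzero $\bar v$ is a zero eigenvector of $\nabla^2 L(w^*)$ as soon as $\nabla\hat Y_n(w^*)^\top\bar v=0$, and $\nabla\hat Y_n(w^*)^\top$ is the horizontal concatenation of the blocks $\nabla_k\hat Y_n(w^*)^\top=\nabla_{\vect(W_k)}\vect(\hat Y_n(w^*))^\top$.

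Next I would write the depth-$H$ analogue of the Jacobian blocks computed above for one and two hidden layers. For $1\le k\le H$ the block $\nabla_k\hat Y_n(w^*)^\top$ has $N_kN_{k-1}$ columns ($N_0=d$) and is a product of Kronecker-structured matrices that contains the factor $[\,I_n\otimes W_j\,]$ for \emph{every} layer $j$ with $k<j\le H+1$, interleaved with diagonal factors $\dot\sigma(h_{j-1}^*)$ and closed on the right by $[\,\sigma(h_{k-1}^*)^\top\otimes I_{N_k}\,]$ (by $[\,X^\top\otimes I_{N_1}\,]$ when $k=1$); for $k=H+1$ it is simply $\sigma(h_H^*)^\top\otimes I_{d'}$. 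Using $\operatorname{rank}(AB)\le\min(\operatorname{rank}A,\operatorname{rank}B)$, $\operatorname{rank}(I_n\otimes W_j)=n\operatorname{rank}(W_j)\le n\min(N_j,N_{j-1})$, and $\{N_j,N_{j-1}:k<j\le H+1\}=\{N_k,\dots,N_{H+1}\}$, I obtain $\operatorname{rank}\big(\nabla_k\hat Y_n(w^*)^\top\big)\le n\min(N_k,N_{k+1},\dots,N_{H+1})$ for $k\le H$, and $\operatorname{rank}\big(\nabla_{H+1}\hat Y_n(w^*)^\top\big)\le d'\min(N_H,n)\le nd'$ for $k=H+1$ (here $nd'=n\min(N_{H+1})$). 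The overparametrization hypothesis says exactly that this upper bound is strictly smaller than the column count $N_kN_{k-1}$, so there is a nonzero $v\in\mathbb{R}^{N_kN_{k-1}}$ with $\nabla_k\hat Y_n(w^*)^\top v=0$.

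Finally I would pad $v$ by zeros on all blocks other than $\vect(W_k)$, obtaining $\bar v\neq 0$; then $\nabla\hat Y_n(w^*)^\top\bar v=\nabla_k\hat Y_n(w^*)^\top v=0$, so $\nabla^2 L(w^*)\bar v=\nabla\hat Y_n(w^*)\big(\nabla\hat Y_n(w^*)^\top\bar v\big)=0$ and $0$ is an eigenvalue of $\nabla^2 L(w^*)$. The only genuinely laborious step is the one in the third paragraph: deriving the closed form of $\nabla_k\hat Y_n(w^*)^\top$ for arbitrary depth by iterating $\vect(ABC)=(C^\top\otimes A)\vect(B)$ together with the chain rule, and checking that an $I_n\otimes W_j$ factor really does occur for each layer above $k$ so that the rank bound goes through — once that is in hand, the rank count, the choice of null vector, and the conclusion (which uses positive semidefiniteness to upgrade a degenerate direction into a genuine zero eigenvalue) are all immediate.
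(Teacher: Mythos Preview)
Your proposal is correct and follows essentially the same approach as the paper: reduce the Hessian at a zero-loss point to the Gauss--Newton form $\nabla\hat Y_n(w^*)\,\nabla\hat Y_n(w^*)^\top$, derive the recursive Kronecker-product expression for each Jacobian block $\nabla_k\hat Y_n(w^*)^\top$ showing it contains the factors $[I_n\otimes W_j]$ for all $j>k$, and use the resulting rank bound $n\cdot\min(N_k,\dots,N_{H+1})$ against the column count $N_kN_{k-1}$ to produce a null vector. The only cosmetic difference is that the paper phrases the last step as ``the $k$-th diagonal block of the PSD Hessian has a zero eigenvalue, hence so does the full Hessian,'' whereas you pad the null vector and verify $\nabla\hat Y_n(w^*)^\top\bar v=0$ directly --- your route actually makes the final appeal to positive semidefiniteness unnecessary, so that parenthetical remark at the end of your write-up can be dropped.
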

\begin{proof}
Since $L(w)=\frac{1}{2} \|\hat Y_n(w) -Y\|^2_F=\frac{1}{2}\vect(\hat Y_n(w) -Y)^\top \vect(\hat Y_n(w) -Y)$, we have that $\nabla^2 L(w) = \nabla \vect(\hat Y_n(w)) \nabla \vect(\hat Y_n(w))^\top + \sum_{j=1}^{nd'}
\nabla^2 \vect(\hat Y_n(w))_j (\vect(\hat Y_n(w))_j - \vect(Y)_{j})$. Therefore,
$$
\nabla^2 L(w^{*}) = \nabla \vect(\hat Y_n(w^*)) \nabla \vect(\hat Y_n(w^*))^\top . 
$$
To obtain a formula of $\nabla \vect(\hat Y_n(w))$, we first write $\vect(\hat Y_n(w))$ recursively in terms of $\vect(h_k)$ as follows. For any $k \in\{2,\dots,H+1\}$,
$$
\vect(h_k) = W_{k} \sigma(h_{k-1}) = [I_n \otimes W_k] \vect(\sigma(h_{k-1}))=[I_n \otimes W_k] \sigma(\vect(h_{k-1})), 
$$
where the last equality follows the fact that $\sigma$ represents the elementwise application of nonlinear activation function (for an input of any  finite size). Thus, 
$$
\vect(\hat Y_n(w)) = \vect(h_{H+1})=[I_n \otimes W_{H+1}] \sigma(\vect(h_{H})),
$$ 
and we can expand $\vect(h_{H})$ recursively by the above equation of $\vect(h_k)$ for any $k \in\{2,\dots,H+1\}$. 

Moreover, for any $k \in\{2,\dots,H+1\}$,
$$  
\vect(h_k) = W_{k} \sigma(h_{k-1}) = [\sigma(h_{k-1})^\top \otimes I_{N_k}] \vect(W_k),
$$
and 
$$
\vect(h_1) =W_{1} X= [X^\top \otimes I_{N_1}] \vect(W_1). 
$$
From these, we obtain that $$
\nabla_k \vect(\hat Y_n(w))^{\top} = \tilde h_k [\sigma(h_{k-1})^\top \otimes I_{N_k}] \ \ \ \text{ for any } k \in \{2,\dots,H+1\},  
$$
and 
$$
\nabla_1 \vect(\hat Y_n(w))^{\top} = \tilde h_1 [X^{\top}\otimes I_{N_1}],
$$
where 
$$
\tilde h_k = \tilde h_{k+1} [I_{n} \otimes W_{k+1}]  \dot \sigma(h_k) \ \ \ \text{ for any } k \in \{1,\dots,H\},
$$
and 
$$
\tilde h_{H+1} = I_{nN_{H+1}}. 
$$
Therefore, for any $k \in \{2,\dots,H+1\}$,
\begin{align*}
\nabla_k \vect(\hat Y_n(w)) \nabla_k \vect(\hat Y_n(w))^{\top} 
=[\sigma(h_{k-1}) \otimes I_{N_k}]  \tilde h_k^{\top}  \tilde h_k [\sigma(h_{k-1})^\top \otimes I_{N_k}],
\end{align*}
which is a $N_{k} N_{k-1}$ by $N_{k} N_{k-1}$ matrix, and 
\begin{align*}
\nabla_1 \vect(\hat Y_n(w)) \nabla_1 \vect(\hat Y_n(w))^{\top} 
=[X\otimes I_{N_1}]  \tilde h_1^{\top}   \tilde h_1 [X^{\top}\otimes I_{N_1}],
\end{align*}
which is a $N_{1} d$ by $N_{1} d$ matrix.
Here, the rank of
$\nabla_k \vect(\hat Y_n(w)) \nabla_k \vect(\hat Y_n(w))^{\top} 
_k$ is at most $n \cdot \min(N_{k},N_{k+1},\dots,N_{H+1})$ for any $k \in \{1,\dots,H+1\}$
where $N_{H+1}=d'$, because the rank of the product of matrices is at most the minimum
of the ranks of matrices. This implies that for any $k \in \{1,\dots,H+1\}$,
 $\nabla_k \vect(\hat Y_n(w)) \nabla_k \vect(\hat Y_n(w))^{\top}$ has a zero eigenvalue if $N_{k} N_{k-1} >n \cdot \min(N_{k},N_{k+1},\dots,N_{H+1})$ where $N_0 =d$ and  $N_{H+1}=d'$. 
Therefore, the Hessian $\nabla^2 L(w^{*}) = \nabla \vect(\hat Y_n(w^*)) \nabla \vect(\hat Y_n(w^*))^\top \succeq 0$ has (at least) a zero eigenvalue if there exists $k\in \{1,\dots,H+1\}$ such that $N_{k} N_{k-1} >n \cdot \min(N_{k},N_{k+1},\dots,N_{H+1})$ where $N_0 =d$ and  $N_{H+1}=d'$.

\end{proof}

If $\sigma$ is additionally assumed to be ReLU, we can write that for any $k \in \{1,\dots,H+1\}$,
$$
\vect(\hat Y_n(w^{*}))=\nabla_k \vect(\hat Y_n(w^{*}))^{\top} \vect(W^{*}_k).
$$
The  directions  $\Delta_k \vect(w)=\vect(0,\dots,0,\Delta W_k,0,\dots,0)$  found in the  proof of Theorem \ref{thm:zero_eigenvalue} such that 
$$
\Delta_k \vect(w)^{\top} \nabla^2 L(w^{*}) \Delta_k \vect(w)=0
$$
corresponds to 
$$
\nabla_k \vect(\hat Y_n(w^{*}))^{\top} \Delta W_k =0.
$$
Note that $\nabla_k \vect(\hat Y_n(w))$ depends on $W_k$ only because of the dependence of the ReLU activations   on $W_k$. Thus, at a differentiable point $w^*$, there exists  a sufficiently small $\epsilon>0$ such that    
the model outputs on the training dataset $\hat Y_n(w)$ does not change in these directions   $\Delta_k \vect(w)=\vect(0,\dots,0,\Delta W_k,0,\dots,0)$ as 
\begin{align*}
\vect(\hat Y_n(w^{*}+ \epsilon\Delta_k \vect(w))) &=\nabla_k \vect(\hat Y_n(w^{*}))^{\top} (\vect(W^{*}_k)+ \epsilon\Delta W_k)
\\ & =\nabla_k \vect(\hat Y_n(w^{*})) ^{\top}\vect(W^{*}_k)
\\ &=\vect(\hat Y_n(w^{*})).
\end{align*}

\subsection{For linear networks SGD selects flat minima which  imply  robust
  optimization which provides maximization of margin}
\label{SGDLFlatRobust}

In the next subsections we extend previous results \cite{BEN:09} to
relate minimizers that corresponds to flatness in the minimum  to  robustness:
thus SGD performs  robust optimization of a certain type. In particular, we
will show that robustness to perturbations 
is related to regularization (see \cite{BEN:09,Xu2009}). We 
describe the separable and linear case.

The argument in \cite{Theory_IIb} shows why SGDL (stochastic gradient
descent Langevin) selects degenerate minimizers and, among those, the ones
corresponding to larger flat regions of the loss. In fact SDGL shows
concentration in probability -- {\it because of the
  high dimensionality} -- of its asymptotic distribution for minima that are the most robust to perturbations
of the weights.  The above fact suggests the following

\begin{conjecture}
  Under regularity assumptions on the landscape of the empirical
  minima, SGDL corresponds to the following robust optimization
\begin{equation}
\min_{w} \max_{(\delta_1,\cdots,\delta_n)} \frac{1}{n}\sum_{i=1}^n V(y_i,
f_{w+\delta_i w}(\mathbf{x_i})).
\end{equation}
\label{FlatnessSGDL-1}
\end{conjecture}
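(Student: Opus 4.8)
The plan is to link three objects---the stationary law of the SGDL iteration, the geometric flatness of a zero-loss minimizer, and the value of the minimax functional displayed in the conjecture---and then to argue that in the relevant limit the first concentrates on the minimizer of the third.

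First I would invoke the analysis of \cite{Theory_IIb}: SGDL is gradient descent with added isotropic Gaussian noise of variance proportional to a temperature $T$, and under mild confinement hypotheses on $L$ its unique invariant distribution is the Gibbs measure $\rho_T(w)\propto e^{-L(w)/T}$, which as $T\to 0$ concentrates on the set of global minimizers. The ``regularity assumptions on the landscape'' are exactly what is needed to run a Laplace-type expansion of $\rho_T$ around each connected component of minimizers. On a component where $L$ vanishes and is locally quadratic and degenerate in the sense of Proposition~\ref{proposition}---with $d_e$ positive curvatures $\lambda_1,\dots,\lambda_{d_e}$ and $d-d_e$ flat directions---the mass that $\rho_T$ assigns to an $\varepsilon$-neighborhood scales, to leading order, like a fixed power of $T$ times the volume of the flat directions times $\big(\prod_{k=1}^{d_e}\lambda_k\big)^{-1/2}$. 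Hence, among global minimizers, SGDL concentrates on those that are flattest: largest flat volume and smallest product of nonzero curvatures.

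Second I would make precise the equivalence ``flattest zero-loss minimizer $=$ minimizer of the robust objective'' for a fixed perturbation budget $\|\delta_i w\|\le r$. At a zero-loss point the empirical gradient vanishes, so expanding each summand $V(y_i,f_{w+\delta_i w}(x_i))$ to second order in $\delta_i w$ shows that the inner maximum $\max_{\|\delta_i\|\le r}\frac1n\sum_i V(y_i,f_{w+\delta_i w}(x_i))$ is, to leading order in $r$, controlled by the local curvature of the loss along the worst admissible directions; the $w$ that minimizes this worst case is precisely the one whose loss surface is flattest near it, i.e.\ the same $w$ picked out by the Laplace computation. I would also check that the per-example freedom in the $\delta_i$ does not move the optimizer: since at a zero-loss point each summand separately sits at a minimum, a first-order argument forces the worst case, up to $O(r^3)$, onto a common curvature-dominated direction, so the independent maximization collapses to the aggregate one encoded by the Hessian of $L$ (degenerate by Lemma~\ref{Kenji}).

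Combining the two steps gives the conjecture: as $T\to 0$, with the perturbation radius $r$ coupled to $T$ through the width of the concentration, the SGDL law concentrates on the minimizer of the robust functional, i.e.\ SGDL performs the stated robust optimization. I expect the main obstacle to be the second step---quantitatively controlling the inner maximum over independent per-example perturbations and showing its minimizer coincides with, rather than merely correlates with, the flat-volume maximizer; this needs explicit bounds on the non-quadratic remainder of $L$ and on the discrepancy between the aggregate Hessian and the example-wise curvatures, which is presumably where the unspecified regularity hypotheses must be spent. A secondary difficulty is justifying the Laplace expansion uniformly over the possibly fractal-like set of global minima (Figure~\ref{degeneratesquareloss}) so that the mass comparison is actually meaningful.
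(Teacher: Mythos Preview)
The paper does not prove this statement: it is explicitly labeled a \emph{conjecture}, and no proof is offered. The surrounding text merely motivates it by citing \cite{Theory_IIb} for the claim that SGDL concentrates on flat minima, and then says ``the above fact suggests the following'' conjecture. Downstream (Lemma~\ref{robustness-lemma}) the paper \emph{assumes} the conjecture holds in order to derive the margin-maximization consequence. So there is no ``paper's own proof'' for your proposal to be compared against.

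That said, your sketch is a sensible attempt in the spirit of the paper's motivation. Your first step---Gibbs measure plus Laplace expansion yielding concentration on flat components---is exactly the mechanism the paper attributes to \cite{Theory_IIb}. Your second step is where the real content of the conjecture lies, and you have correctly flagged the genuine difficulty: the robust objective allows \emph{independent} perturbations $\delta_i$ per training example, whereas the Hessian-based flatness of $L$ is an aggregate quantity. Your argument that at a zero-loss point the per-example maxima collapse onto a common curvature direction is heuristic; making it rigorous would require controlling the example-wise Hessians $\nabla^2_w V(y_i,f_w(x_i))$ relative to their sum, and nothing in the paper's stated regularity assumptions obviously delivers this. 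The paper does not attempt to resolve this gap---it simply leaves the statement as a conjecture and proceeds conditionally.

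In short: your proposal goes beyond what the paper does. It is a reasonable outline of how one might try to establish the conjecture, with the obstacles honestly identified, but you should be aware that you are attempting something the authors themselves left open.
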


It is especially important to note that {\it SGDL -- and approximately
  SGD -- maximize volume and ``flatness'' of the loss in weight
  space}.

\subsubsection{SGD  converges
  to a large-margin classifier for linear functions}

{\it Assume} that the conjecture \ref{FlatnessSGDL-1} holds and
consider a binary classification problem. In linear classification we try to separate the two classes of a
binary classification problem by a hyperplane $\mathcal{H}=\{x :
w^\intercal x +b =0 \}$, where $w \in \mathbb{R}^d$. There is a
corresponding decision rule of the form $y =sign (w^\intercal x +b)$
Let us now assume a  {\it separability condition} on the data set: the decision rule
makes no error on the data set. This corresponds to the following set
of inequalities

\begin{equation}
y_i(w^\intercal x_i + b) > 0, \quad i=1, \cdots,n
\label{linearhyperplane}
\end{equation}

The separability conditions implies that  the inequalities are feasible.
Then (changing slightly the notation for the classification case)  the
robustness formulation of Equation \ref{FlatnessSGDL-1} is equivalent
to robustness of the classifier wrt perturbations $\delta w$ in the weights which we assume here to be
such that $||\delta w||_2 \leq \rho ||w||_2$ with $\rho \geq
0$. In other words, we require that $\forall \delta w$ such that $\|\delta w\|_2 \leq \rho \|w\|_2$:

\begin{equation}
 y_i ((w + \delta w)^\intercal x_i + b ) \geq 0, \quad i=1,\ldots,n
\end{equation}

Further assume that $\|x_i\|_2 \approx 1$, then for $i=1,\ldots,n$, if we let $\delta w=-\rho
y_ix_i\|w\|_2/\|x_i\|_2$,
\[
  y_i (w^\intercal x_i + b) \geq -y_i \delta w^\intercal x_i = \rho\|w\|_2 \|x_i\|_2 \approx
  \rho\|w\|_2
\]

We obtain a  \emph{robust counterpart} of
Equation~\eqref{linearhyperplane}  

\begin{equation}
y_i(w^\intercal x_i + b) \geq  \rho ||w||_2, \quad i=1, \cdots,n
\label{RobustCounterpart}
\end{equation}

Maximizing $\rho$ -- the margin -- subject to the constraints  Equation
\eqref{RobustCounterpart} leads to minimizing $w$,  because we can
always enforce $\rho ||w||_2=1$ and thus to

\begin{equation}
\min_{w,b} \{||w||_2 : y_i(w^\intercal x_i + b) \geq 1 \quad  i=1,
\cdots,n \}
\label{RobustOptimization}
\end{equation}

\noindent

We have proven the following result (a variation of 
\cite{BEN:09}, page 302)

\begin{lemma}
\label{robustness-lemma}
Maximizing flatness (as SGD does \cite{Theory_IIb}) is
equivalent to maximizing robustness and thus classification margin in
binary classification of separable data.
\end{lemma}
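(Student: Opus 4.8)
The plan is to take the robust-optimization characterization of SGDL from Conjecture~\ref{FlatnessSGDL-1} as given and specialize it to separable binary linear classification, then show the resulting min--max problem collapses to the hard-margin maximum-margin program already written as \eqref{RobustOptimization}.

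First I would instantiate Conjecture~\ref{FlatnessSGDL-1} with the affine model $f_w(x)=w^\intercal x + b$ and restrict the admissible weight perturbations to the scale-invariant ball $\|\delta w\|_2 \le \rho\|w\|_2$, $\rho\ge 0$. Under the separability hypothesis \eqref{linearhyperplane}, the inner maximization over $\delta$ becomes the requirement that the decision rule survive every such perturbation, i.e. $y_i\bigl((w+\delta w)^\intercal x_i + b\bigr)\ge 0$ for all $i$ and all $\delta w$ in the ball. Next I would identify the worst-case perturbation explicitly: rewriting the constraint as $y_i(w^\intercal x_i + b) \ge -\,y_i\,\delta w^\intercal x_i$, the right-hand side is largest when $y_i\,\delta w^\intercal x_i$ is smallest, and by Cauchy--Schwarz the minimum over the ball is $-\rho\|w\|_2\|x_i\|_2$, attained at $\delta w = -\rho\, y_i x_i\|w\|_2/\|x_i\|_2$. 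Substituting this perturbation, and using the normalization $\|x_i\|_2 \approx 1$, yields the robust counterpart \eqref{RobustCounterpart}: $y_i(w^\intercal x_i + b) \ge \rho\|w\|_2$ for every $i$.

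Then I would use the homogeneity of \eqref{RobustCounterpart} in $(w,b,\rho)$: since the constraints are unchanged under common rescaling, we may impose $\rho\|w\|_2 = 1$, which turns ``maximize the margin $\rho$ subject to \eqref{RobustCounterpart}'' into $\min_{w,b}\{\|w\|_2 : y_i(w^\intercal x_i + b)\ge 1,\ i=1,\dots,n\}$, exactly \eqref{RobustOptimization}, the maximum-margin classifier. Reading this chain in both directions gives the asserted equivalence: flatness maximization, which SGDL performs by \cite{Theory_IIb}, $\Leftrightarrow$ robustness of the linear rule to weight perturbations of relative size $\rho$ $\Leftrightarrow$ maximization of the classification margin. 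This is the variation of \cite{BEN:09} referenced in the statement.

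The main obstacle is not the algebra --- the worst-case computation is the elementary Cauchy--Schwarz step above --- but the two soft links in the chain, which are of the ``physicist's rigor'' advertised in the introduction: the argument rests on the unproven Conjecture~\ref{FlatnessSGDL-1} identifying the concentration of SGDL with the displayed robust optimization, and on the approximation $\|x_i\|_2\approx 1$, without which the right-hand side of the robust counterpart carries the data-dependent factor $\|x_i\|_2$ and the reduction to the clean program \eqref{RobustOptimization} is only approximate. I would flag both explicitly and otherwise present the derivation as above.
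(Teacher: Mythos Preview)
Your proposal is correct and follows essentially the same route as the paper: instantiate Conjecture~\ref{FlatnessSGDL-1} for the affine classifier with the relative perturbation ball $\|\delta w\|_2\le\rho\|w\|_2$, pick the worst-case $\delta w=-\rho\,y_i x_i\|w\|_2/\|x_i\|_2$ under the approximation $\|x_i\|_2\approx 1$ to obtain \eqref{RobustCounterpart}, then rescale via $\rho\|w\|_2=1$ to land on \eqref{RobustOptimization}. Your explicit invocation of Cauchy--Schwarz for the worst-case step and your flagging of the two soft links (the conjecture and the unit-norm data approximation) add welcome clarity but do not depart from the paper's argument.
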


{\bf Remarks}

\begin{enumerate}

\item 
Notice that the classification problem is  similar to using the
loss function $V(y, f(x))=\log(1+ e^{-yf(x)})$ which penalizes errors
but is otherwise very small in the zero classification error
case. \cite{Theory_IIb} implies that SGD maximizes flatness
at the minimum, that is SGD maximizes $\delta w$.

\item The same Equation \ref{RobustOptimization} follows if the
maximization is with respect to spherical perturbations of radius
$\rho$ around each data point $x_i$. In either case the resulting
optimization problems is equivalent to hard margin SVMs.

\item If we
  start from the less natural assumption that $||\delta w||_\infty \leq \rho ||w||_2$ with $\rho \geq
0$ and assume that each of the $d$ components $|(x_i)_j| \approx 1, \quad
\forall
i$, then the {\it robust counterpart} of Equation \ref{linearhyperplane} is,
since $(\delta w)^\intercal x \leq ||(\delta
w)^\intercal||_{\ell_{\infty}} ||x||_{\ell_1}$,

\begin{equation}
y_i(w^\intercal x_i + b) \geq (\delta w)^\intercal \sup \delta w = \rho ||w||_2, \quad i=1, \cdots,n
\label{RobustCounterpart-2}
\end{equation}.

\item In the non-separable case, the hinge loss
  $V(y, f(x))= [1-yf(x)]_+$, with $y$ binary (used by \cite{Xu2009})
  -- which is similar to the cross-entropy loss
  $V(y, f(x))=\log(1+ e^{-yf(x)})$ -- leads to the following robust
minimization problem

\begin{equation}
\min_{w,b}  \frac{1}{n} \sum _{i=1}^n [1-y_i(w^\intercal x_i + b) +\rho ||w||_2]_+.
\label{HingeRobustOptimization}
\end{equation}

Note that the robust version of this worst-case loss minimization is
not the same as in classical SVM because the regularization term is
inside the hinge loss. Note also that standard regularization is an
upper bound for the robust minimum since

\begin{equation}
\min_{w,b}  \frac{1}{n} \sum _{i=1}^n [1-y_i(w^\intercal x_i + b) +\rho
||w||_2]_+ \leq \min_{w,b} \frac{1}{n} \sum _{i=1}^n [1-y_i(w^\intercal x_i +
b)]_+ + \rho ||w||_2].
\label{HingeRobustOptimizationBound}
\end{equation}

In the case of the square loss, robust optimization gives with
$||\delta w||_2 \leq \rho ||w||_2$


\begin{equation}
\min_{w,b}  \frac{1}{n} \sum _{i=1}^n [(y_i - w^\intercal x_i)^2 + \rho^2 ||w||_2]_+.
\label{SquareRobustOptimization}
\end{equation}

The summary here is that depending on the loss function and on the
uncertainity set allowed for the perturbations $\delta w$ one obtains
a variety of robust optimization problems. In general they are not
identical to standard regularization but usually they contain a
regularization term. Notice that a variety of regularization norms
(for instance $\ell_1$ and $\ell_2$) guarantee $CV_{loo}$ stability
and therefore generalization. The conclusion is that {\it in the case
  of  linear networks and linearly separable data, SGD provides a
solution closely related to hinge-loss SVM.}

\end{enumerate}

\subsubsection{Robustness wrt Weights and Robustness wrt Data}
\label{RobPert}
A natural intuition
is that several forms of stability are closely related. In particular,
{\it perturbations of the weights follow from perturbations of the
  data points in the training set} because the function $f(x)$ resulting from the training
is parametrized by the weights that depend on the data points $S_n$:
$f(x) = f(x; w (z_1, \cdots, z_n))$, with $z= (x,y)$\footnote{In a
  differentiable situation,  one would write $df = \frac{df}{dw}
  \frac{dw}{dS} dS$ and $dw= \frac{dw}{dS} dS$. The latter equation
  would show that perturbations in the weights depend on perturbations of
  the training set $S$.} . Thus {\it flat regions in weight space of the global minimum of the
  empirical loss indicate stability with respect to the weights; the
  latter in turn indicates  stability with respect to training examples}.

\subsection{Polynomial Networks}
\label{SectionPolynomials}

There are various reasons why it is interesting to approximate
nonlinear activation functions with univariate polynomials thus
leading to deep polynomial networks. For this paper polynomial
approximations of deep networks justify the use of Bezout theorem to
characterize minima of the loss function. In addition, it is
worthwhile to remark that the good empirical approximation by
polynomial networks of the main properties of ReLU networks implies
that specific properties of the ReLU activation function such as its
discontinuous derivative, and the property of {\it non-negative
  homogeneity}, e.g. $\sigma(\alpha z) = \alpha \sigma(z)$ do not play
any significant role. Furthermore, the characterization of certain
properties of deep networks becomes easier by thinking of them as
polynomial and thus analytic functions.

A generic polynomial $P_k(x)$ of degree $k$ in $d$ variables is in the
linear space $\mathcal{P}_k = \cup_{s=0}^k H_s$ composed by the union
of homogeneous polynomials $\mathcal{H}_s$, each of degree $s$.
$\mathcal{H}_k$ is of dimensionality
$r= dim \mathcal{H}_k= \binom{d-1+k}{k}$: the latter is the number of
monomials and thus the number of coefficients of the polynomials in
$\mathcal{H}_k$. A generic polynomial in $\mathcal{P}_k$ can always be
written (see Proposition 3.5 in \cite{Mhaskar1993}) as

\begin{equation}
P_k(x) = \sum _{i=1}^r p_i (\scal{w_i}{x}).
\label{Linea}
\end{equation}

\noindent where $p_i$ is a  univariate polynomial of degree at most
$k$ and $\scal{w_i}{x})$ is a scalar product. In fact

\begin{equation}
P(x) = \sum _j a_j x^j = \sum _{k=1}^N c_k ((w_k, x)+b_k)^n = \sum_{k=1}^N c_k \sum_{|j|=n} \binom{n}{j}
u^j_k y^j
\label{HMhaskar}
\end{equation}
\noindent with $y=(x,1)$ and $u_k=(w_k, b_k)$ so that $a_j= \binom{n}{j}
\sum_{k=1}^N c_k u_k^j.$

Notice that the representation of a polynomial in $d$ variables with
total degree $\leq k$ (so dimension $N$) with monomial or any other
fixed basis involves exactly $N$ parameters to optimize. In the
representation that is produced by a singe hidden layer in a network
as $\sum_k a_k (w_k\cdot x+b_k)^k$, there are $(d+2) N$ weights.


\subsubsection{Polynomial approximation of deep networks}
\label{simulapproxsect}
First, we discuss the univariate case. 
Let $C^*$ be the class of all continuous, univariate, $2\pi$-periodic functions on $\mathbb{R}$, equipped with the supremum norm $\|\cdot\|^*$, 
and $\mathbb{H}_n$ be the class of all trigonometric polynomials of order $<n$. 
For $f\in C^*$, let
$$
E_n^*(f)=\min_{T\in \mathbb{H}_n}\|f-T\|^*.
$$
The following simple theorem was proved by Czipser and Freud \cite[Theorem~1]{czipser1957approximation}:

\begin{theorem}\label{czipserfreudtheo}
Let $r\ge1$ be an integer, $f\in C^*$ be $r$ times continuously differentiable. 
For integer $n\ge 1$, and $\epsilon>0$, if $T$ is a trigonometric polynomial of order $<n$ such that
$$
\|f-T\|^* \le \epsilon,
$$
then
$$
\|f^{(r)}-T^{(r)}\|^* \le  3\cdot 2^r n^r\epsilon+4E_n^*(f^{(r)}).
$$
where the minimum is taken over all trigonometric polynomials $P$ of order $<n$. 
\end{theorem}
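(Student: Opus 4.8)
The plan is to interpolate between $f$ and the given polynomial $T$ by a trigonometric polynomial $\Phi$ of comparable order that approximates $f$ \emph{and}, after $r$ differentiations, still approximates $f^{(r)}$; the de la Vall\'ee-Poussin mean of $f$ does exactly this. Concretely, let $V_n$ be the de la Vall\'ee-Poussin operator $V_n g=\frac1n\sum_{j=n}^{2n-1}S_j(g)$, where $S_j$ is the $j$-th partial Fourier sum, i.e. convolution with the kernel $v_n=\frac1n\sum_{j=n}^{2n-1}D_j=2F_{2n-1}-F_{n-1}$, a combination of Dirichlet/Fej\'er kernels. I will use the classical facts: (i) $\|v_n\|_{L^1}\le 3$, hence $\|V_n g\|^*\le 3\|g\|^*$ for all $g\in C^*$; (ii) $V_n g$ is a trigonometric polynomial of order $\le 2n-1$; (iii) $V_n g=g$ whenever $g$ has order $\le n$; and (iv) $(V_n g)^{(r)}=V_n(g^{(r)})$, since differentiation commutes with convolution. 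I will also invoke the classical Bernstein inequality $\|S^{(r)}\|^*\le N^r\|S\|^*$ for every trigonometric polynomial $S$ of order $\le N$.

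First I would split $f^{(r)}-T^{(r)}=\bigl(f^{(r)}-(V_nf)^{(r)}\bigr)+\bigl((V_nf)^{(r)}-T^{(r)}\bigr)$ and bound the two terms. For the first term, (iv) gives $(V_nf)^{(r)}=V_n(f^{(r)})$, and then for any trigonometric polynomial $Q$ of order $\le n$, properties (iii) and (i) give $\|f^{(r)}-V_n(f^{(r)})\|^*=\|(f^{(r)}-Q)-V_n(f^{(r)}-Q)\|^*\le(1+\|v_n\|_{L^1})\|f^{(r)}-Q\|^*\le 4\|f^{(r)}-Q\|^*$; taking the infimum over such $Q$ yields $\|f^{(r)}-(V_nf)^{(r)}\|^*\le 4E_n^*(f^{(r)})$. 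For the second term, since $T$ has order $<n$ (hence $\le n$), property (iii) gives $V_nT=T$, so $V_nf-T=V_n(f-T)$ and by (i) $\|V_nf-T\|^*\le 3\|f-T\|^*\le 3\epsilon$. As $V_nf-T$ is a trigonometric polynomial of order $\le 2n-1$ by (ii), Bernstein's inequality then gives $\|(V_nf)^{(r)}-T^{(r)}\|^*\le(2n-1)^r\|V_nf-T\|^*\le 3(2n-1)^r\epsilon\le 3\cdot 2^rn^r\epsilon$. Adding the two estimates gives $\|f^{(r)}-T^{(r)}\|^*\le 3\cdot 2^rn^r\epsilon+4E_n^*(f^{(r)})$, which is the claim.

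I do not expect a genuine obstacle once the de la Vall\'ee-Poussin polynomial is recognised as the correct intermediate object; that recognition is really the only idea, since the naive alternative of integrating $r$ times a best order-$<n$ approximant of $f^{(r)}$ fails (a periodic function is not controlled by its $r$-th derivative, there being an uncontrolled low-frequency part). The remaining points are bookkeeping: ensuring $V_nf$ has order $\le 2n-1$ so that Bernstein produces the factor $(2n-1)^r\le 2^rn^r$ and nothing larger; checking that the constant $3$ in front of $\epsilon$ comes jointly from $\|v_n\|_{L^1}\le 3$ and the reproduction identity $V_nT=T$; and confirming the index in $E_n^*(f^{(r)})$ is exactly $n$ (the operator reproduces order $\le n$, so $Q$ may be taken of order $\le n$, and $E_{n+1}^*\le E_n^*$).
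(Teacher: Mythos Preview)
Your argument is correct. The paper does not actually supply a proof of this theorem; it simply quotes it as Theorem~1 of Czipser--Freud (1957) and moves on. Your proof via the de la Vall\'ee-Poussin mean $V_n=2\sigma_{2n-1}-\sigma_{n-1}$ is precisely the classical argument for this type of simultaneous-approximation estimate, and all the bookkeeping (reproduction of polynomials of order $\le n$, operator norm $\le 3$, commutation with differentiation, Bernstein on the difference of order $\le 2n-1$) is handled cleanly. The index remark at the end is also right: since $V_n$ reproduces order $\le n$, you in fact get $4E_{n+1}^*(f^{(r)})\le 4E_n^*(f^{(r)})$.
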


To discuss the aperiodic case, let $C$ be the class of all continuous functions on $[-1,1]$, equipped with the supremum norm $\|\cdot\|$, 
and $\Pi_n$ be the class of all algebraic polynomials of degree $<n$. For $f\in C$, we define
$$
E_n(f)=\min_{P\in\Pi_n}\|f-P\|.
$$
Writing $\phi(x)=\sqrt{1-x^2}$, it is easy to deduce the following corollary of Theorem~\ref{czipserfreudtheo} using the standard substitution $x=\cos \theta$:
\begin{corollary}\label{algczipserfreudcor}
Let $f\in C$ be continuously differentiable, $n\ge 1$ and $P\in \Pi_n$. If
\be\label{polyapprox}
\|f-P\|\le \epsilon,
\ee
then
\be\label{derpolyapprox}
\|(f'-P')\phi\| \le 3\cdot 2^r n^r\epsilon+4E_{n-1}(\phi f').
\ee
\end{corollary}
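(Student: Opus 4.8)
The plan is to reduce the algebraic statement to the trigonometric Theorem~\ref{czipserfreudtheo} via the substitution $x=\cos\theta$. First I would put $g(\theta)=f(\cos\theta)$ and $T(\theta)=P(\cos\theta)$. Since $P\in\Pi_n$, every power $\cos^j\theta$ with $0\le j\le n-1$ is a trigonometric polynomial of order $\le n-1$, so $T\in\mathbb{H}_n$; also $g\in C^*$, and because $f$ is continuously differentiable the chain rule makes $g$ once continuously differentiable with $g'(\theta)=-\sin\theta\,f'(\cos\theta)$ (continuous, and vanishing at $\theta\in\pi\mathbb{Z}$ where $\sin\theta=0$). The hypothesis \eqref{polyapprox} transfers directly: since $\cos$ maps onto $[-1,1]$, $\|g-T\|^*=\sup_{x\in[-1,1]}|f(x)-P(x)|=\|f-P\|\le\epsilon$.

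Next I would apply Theorem~\ref{czipserfreudtheo} with $r=1$ to the pair $(g,T)$, obtaining $\|g'-T'\|^*\le 6n\,\epsilon+4E_n^*(g')$ (the $r=1$ case of the constant $3\cdot 2^r n^r$). Both sides then get rewritten in algebraic terms. From $T'(\theta)=-\sin\theta\,P'(\cos\theta)$ we have $g'(\theta)-T'(\theta)=-\sin\theta\bigl(f'(\cos\theta)-P'(\cos\theta)\bigr)$, and since $|\sin\theta|=\sqrt{1-\cos^2\theta}=\phi(\cos\theta)$ and $\cos$ is onto, $\|g'-T'\|^*=\sup_{x\in[-1,1]}\sqrt{1-x^2}\,|f'(x)-P'(x)|=\|(f'-P')\phi\|$, which is exactly the left side of \eqref{derpolyapprox}.

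The error term is handled by exploiting that $g'$ is odd and $2\pi$-periodic, so its best trigonometric approximant of order $<n$ may be taken odd, hence of the form $\theta\mapsto\sin\theta\,Q(\cos\theta)$ with $Q\in\Pi_{n-1}$ (using $\sin k\theta=\sin\theta\,U_{k-1}(\cos\theta)$, with $U_{k-1}$ the Chebyshev polynomial of the second kind, so that one factor of $\sin\theta$ is peeled off and the degree drops by one). Running the same substitution on $\|g'-\sin\theta\,Q(\cos\theta)\|^*$ identifies $E_n^*(g')$ with $\min_{Q\in\Pi_{n-1}}\|\phi\,(f'-Q)\|=E_{n-1}(\phi f')$ (which is in any case $\le E_{n-1}(f')$ because $\phi\le 1$ on $[-1,1]$). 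Substituting the two identifications into the inequality coming from Theorem~\ref{czipserfreudtheo} yields \eqref{derpolyapprox}.

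I expect the delicate point to be precisely this last identification of $E_n^*\bigl((f\circ\cos)'\bigr)$ with an algebraic approximation number: because of the parity constraint, the object that appears naturally is the \emph{weighted} quantity $\min_{Q\in\Pi_{n-1}}\|\phi(f'-Q)\|$ rather than the plain $E_{n-1}(f')$, and one must track the index shift carefully (trigonometric order $<n$ corresponds to algebraic degree $<n-1$ for the factor $Q$, one power of $\sin\theta$ being absorbed). The remaining ingredients — surjectivity of $\cos$, the chain-rule formula for $g'$, and the behaviour at $\sin\theta=0$ — are routine bookkeeping.
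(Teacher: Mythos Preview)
Your approach is exactly the paper's: the paper says only that the corollary ``is easy to deduce \ldots\ using the standard substitution $x=\cos\theta$'', and you carry out precisely that substitution, correctly obtaining $\|g'-T'\|^*=\|(f'-P')\phi\|$ and $E_n^*(g')=\min_{Q\in\Pi_{n-1}}\|\phi(f'-Q)\|$ from the parity of $g'$ and the identity $\sin k\theta=\sin\theta\,U_{k-1}(\cos\theta)$.

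The one step that does not go through is the equality you write at the end, $\min_{Q\in\Pi_{n-1}}\|\phi(f'-Q)\|=E_{n-1}(\phi f')$. By the paper's own definition $E_{n-1}(\phi f')=\min_{P\in\Pi_{n-1}}\|\phi f'-P\|$, where the minimization is over genuine polynomials $P$; your left-hand side minimizes over functions $\phi Q$, which are \emph{not} polynomials since $\phi(x)=\sqrt{1-x^2}$. These two quantities are the weighted best approximation of $f'$ and the unweighted best approximation of $\phi f'$, respectively, and there is no reason they should coincide (nor does either obviously bound the other). You were right to flag this as the delicate point; the resolution is that the quantity the substitution actually produces is the weighted one $\min_{Q\in\Pi_{n-1}}\|\phi(f'-Q)\|$, and the printed right-hand side $E_{n-1}(\phi f')$ should be read as shorthand for that weighted approximation number (note the printed inequality also leaves $r$ unspecified, so the corollary is stated loosely). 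What you \emph{can} assert cleanly is $E_n^*(g')=\min_{Q\in\Pi_{n-1}}\|\phi(f'-Q)\|\le E_{n-1}(f')$, by taking $Q$ to be the best uniform approximant of $f'$ and using $\phi\le 1$; that gives an unambiguous version of \eqref{derpolyapprox} with $E_{n-1}(f')$ in place of $E_{n-1}(\phi f')$.
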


Thus, if $f$ is a smoothened version of the relu function, and $P$ is an approximation to $f$ satisfying (\ref{polyapprox}), then $\nabla_{\mathbf{w},b} f(\langle \mathbf{w}, \mathbf{x}\rangle +b)$ can be approximated by $\nabla_{\mathbf{w},b} P(\langle \mathbf{w}, \mathbf{x}\rangle +b)$ in the weighted uniform norm as in (\ref{derpolyapprox}). 

The extension of Corollary~\ref{algczipserfreudcor} to higher derivatives is not so straightforward.  For $\gamma>0$, we define for $f\in C$,
$$
\|f\|_\gamma =\|f\|+\sup_{j\ge 0} 2^{j\gamma}E_{2^j}(f),
$$
and let $W_\gamma$ be the space of all $f\in C$ for which $\|f\|_\gamma>0$. 
Thus, if $r\ge 1$ is an integer, and $f$ is $r$ times continuously differentiable, then $f\in W_r$. 
However, $W_r$ is the not the same as the space of all $r$ times continuously differentiable functions on $[-1,1]$. 
A complete characterization of the spaces $W_\gamma$ is given in \cite[Chapter~8, Section~7]{devlorbk}.
The following theorem can be proved using the Bernstein inequality \cite[Chapter~8, Theorem~7.6]{devlorbk}.

\begin{theorem}\label{algsimapproxtheo}
Let $r\ge 1$ be an integer, $\gamma>r$, $f\in W_\gamma$. Let $n\ge 1$ be an integer, $P\in\Pi_n$, and (\ref{polyapprox}) hold for some $\epsilon>0$. Then
\be\label{highderapprox}
\|(f^{(r)}-P^{(r)})\phi^r\| \le cn^r\{\epsilon + n^{-\gamma}\|f\|_\gamma\},
\ee
where $c>0$ is a constant depending only on $r$ and $\gamma$.
\end{theorem}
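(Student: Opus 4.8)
The plan is to combine a dyadic telescoping decomposition of $f$ with the weighted Bernstein inequality for algebraic polynomials; in the form we need it says that there is a constant $c=c(r)$ with $\|Q^{(r)}\phi^r\|\le c\,m^r\|Q\|$ for every polynomial $Q$ of degree $\le m$, which follows by iterating \cite[Chapter~8, Theorem~7.6]{devlorbk}. This is the one external ingredient quoted rather than proved; everything else is bookkeeping with geometric series and one choice of dyadic scale matched to $n$.

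First I would pick, for each $j\ge 0$, a near-best approximant $P_j^*\in\Pi_{2^j}$ with $\|f-P_j^*\|\le 2E_{2^j}(f)$. Since $f\in W_\gamma$ we have $E_{2^j}(f)\le 2^{-j\gamma}\|f\|_\gamma$, so $P_N^*\to f$ uniformly on $[-1,1]$ and $f$ admits the telescoping representation $f=P_0^*+\sum_{j\ge 0}(P_{j+1}^*-P_j^*)$, where each difference has degree $<2^{j+1}$ and $\|P_{j+1}^*-P_j^*\|\le 2E_{2^{j+1}}(f)+2E_{2^j}(f)\le c\,2^{-j\gamma}\|f\|_\gamma$. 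Applying the weighted Bernstein inequality to each term gives
\be
\|(P_{j+1}^*-P_j^*)^{(r)}\phi^r\|\le c\,(2^{j+1})^r 2^{-j\gamma}\|f\|_\gamma\le c\,2^{-j(\gamma-r)}\|f\|_\gamma .
\ee
Here the hypothesis $\gamma>r$ enters: the bound is summable in $j$, so the differentiated series converges in the weighted supremum norm; since $\phi^r$ is bounded below on every compact subinterval of $(-1,1)$, this upgrades to uniform convergence of the $r$-th derivatives on such subintervals, so $f$ is $r$ times differentiable on $(-1,1)$ and $f^{(r)}=\sum_{j\ge 0}(P_{j+1}^*-P_j^*)^{(r)}$.

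Then I would choose $J$ with $2^J\le n<2^{J+1}$ and split $f^{(r)}-P^{(r)}=\bigl(f^{(r)}-(P_J^*)^{(r)}\bigr)+\bigl((P_J^*)^{(r)}-P^{(r)}\bigr)$. For the first piece, $f^{(r)}-(P_J^*)^{(r)}=\sum_{j\ge J}(P_{j+1}^*-P_j^*)^{(r)}$, so summing the geometric tail of the displayed estimate gives $\|(f^{(r)}-(P_J^*)^{(r)})\phi^r\|\le c\,2^{-J(\gamma-r)}\|f\|_\gamma\le c\,n^{r-\gamma}\|f\|_\gamma$, using $2^J\le n$ and $2^J>n/2$. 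For the second piece, $P_J^*-P$ has degree $<n$, so the weighted Bernstein inequality yields $\|((P_J^*)^{(r)}-P^{(r)})\phi^r\|\le c\,n^r\|P_J^*-P\|\le c\,n^r\bigl(\|f-P_J^*\|+\|f-P\|\bigr)\le c\,n^r\bigl(2^{-J\gamma}\|f\|_\gamma+\epsilon\bigr)\le c\,n^r\bigl(n^{-\gamma}\|f\|_\gamma+\epsilon\bigr)$. Adding the two estimates gives $\|(f^{(r)}-P^{(r)})\phi^r\|\le c\,n^r\{\epsilon+n^{-\gamma}\|f\|_\gamma\}$, which is the claim.

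The main obstacle, such as it is, is not any calculation but the middle step: justifying that the telescoping series may legitimately be differentiated term by term and that its sum is genuinely $f^{(r)}$ — this is exactly where $\gamma>r$ is indispensable (it makes the derivative series converge) and where one uses that $\phi^r$ stays away from $0$ on compacta inside $(-1,1)$, so that weighted-norm convergence becomes ordinary uniform convergence of derivatives there. The endpoint degeneracy of the weight is automatically handled because the whole statement is phrased in the weighted norm $\|\cdot\phi^r\|$. Once that is in place, matching $2^J$ to $n$ and collapsing the geometric tails is routine.
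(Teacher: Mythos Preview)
Your proof is correct and follows exactly the approach the paper indicates: the paper gives no detailed argument but simply states that the theorem ``can be proved using the Bernstein inequality \cite[Chapter~8, Theorem~7.6]{devlorbk},'' and your dyadic telescoping combined with the weighted Bernstein inequality is precisely the standard execution of that idea. Your treatment of the term-by-term differentiation via convergence on compact subintervals of $(-1,1)$ is the right way to handle the one genuinely delicate point.
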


Theorem~\ref{algsimapproxtheo} can be extended to the multi-variate
setting using tensor products. Finally, we note that theorems
analogous to Theorem~\ref{algsimapproxtheo} are given in
\cite{hahm2009simultaneous}.  However, these theorems merely assert
the existence of neural networks evaluating a sigmoidal $\tanh$
activation function so that the derivatives of these networks
approximate the derivatives of the function.

\subsubsection{Deep polynomial networks behave as deep networks}

Nonlinear activations in a deep networks can be approximated up to an
arbitrary accuracy over a bounded interval by a univariate polynomial
of appropriate degree (see \cite{Theory_II}) which approximates well
also the derivative of the function (see SI
\ref{simulapproxsect}). Since so much is known about polynomials,
several interesting results follow from these approximation properties
such as a characterization of when deep convolutional networks can be
exponential better than shallow networks in terms of representational
power (\cite{Theory_I}).  We show here empirical evidence that the
puzzling generalization properties of deep convolutional networks hold
for networks with the same architecture in which all the ReLU's have
been replaced by a univariate polynomial approximation (of degree $10$
in our simulations).  The resulting 5-layers networks are obviously
polynomial networks, that compute a polynomial in the input $x$ -- and
{\it that are also polynomials in the weight parameters}. Figure
\ref{fig:relu_vs_polynomial} demonstrates that polynomial and ReLU
networks with the same architecture have very similar performance in
terms of training and testing error.  The same puzzling lack of overfitting
in the overparametrized case can be seen in Figure
\ref{GreatPlotPol}. Together Figures
\ref{TwoRegimesPol},\ref{GreatPlotPol}
include all the puzzles discussed in the main text and described by
\cite{DBLP:journals/corr/ZhangBHRV16}.

\begin{figure}
  \centering  
\makebox[0pt]{  \includegraphics[width=1.2\textwidth]{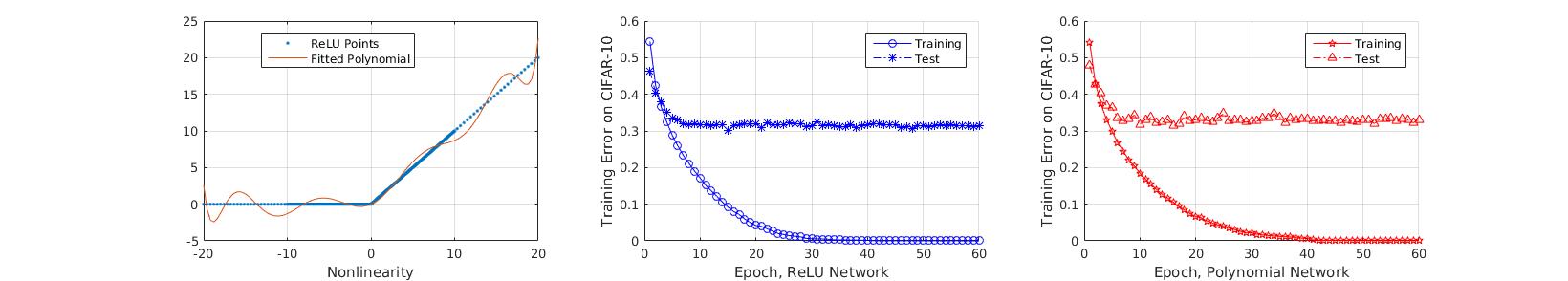}  }
\caption{\it A standard convolutional deep network is converted into a
  polynomial function by replacing each of the REL units with a
  univariate polynomial approximation of the ReLU function. As long as
  the nonlinearity approximates the ReLU function well (especially
  near 0), the ``polynomial network'' performs quantitatively
  similarly to the corresponding ReLU net. The polynomial shown in the
  inset on the left is of degree $10$.}
\label{fig:relu_vs_polynomial}   
\end{figure}

\begin{figure*}[h!]\centering
\includegraphics[width=1.0\textwidth]{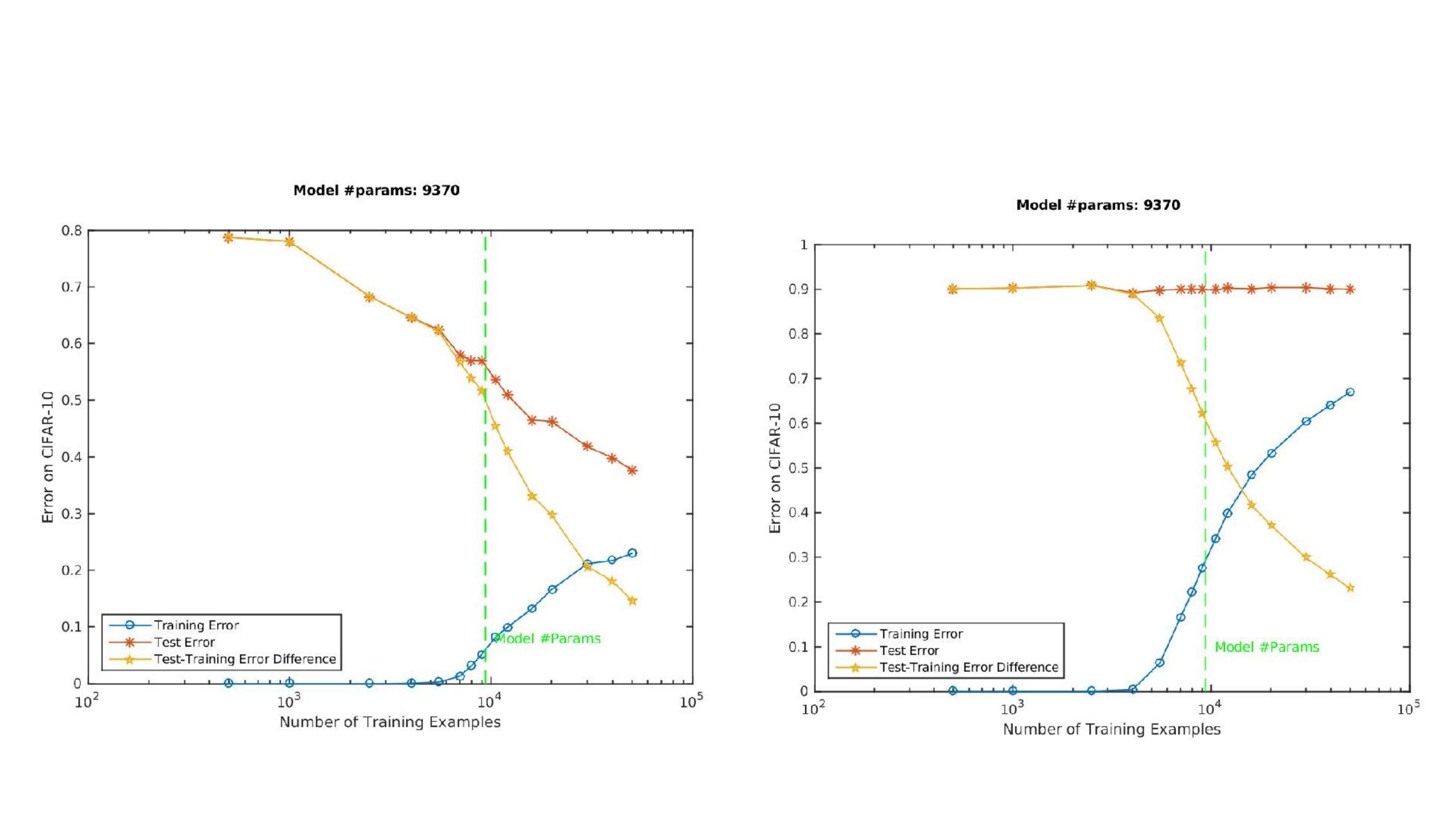}
\caption{\it The figure (left) shows the behavior of a polynomial deep network trained on
  subsets of the CIFAR database. The figure on the right shows the
  same network trained on
  subsets of the CIFAR database in which the labels have been randomly
  scrambled.  The network is a 5-layer all
  convolutional network (i.e., no pooling) with 16 channels per hidden
  layer , resulting in only $W \approx 10000$ weights instead of the
  typical $300,000$. Neither data augmentation nor regularization is performed.} 
\label{TwoRegimesPol}
\end{figure*}

As we discussed in the main text, Figure \ref{GreatPlot} shows that
the increase in the number of parameters does not induce overfitting,
at least in terms of classification error.  We show in Figure
\ref{GreatPlotPol} that the same behavior is shown for polynomial
networks.  As shown in Figure \ref{weightsNorm_layer_6}, the norm of
the weights increases during training until an asymptotic value is
reached,

\begin{figure*}[h!]\centering
\includegraphics[width=1.0\textwidth]{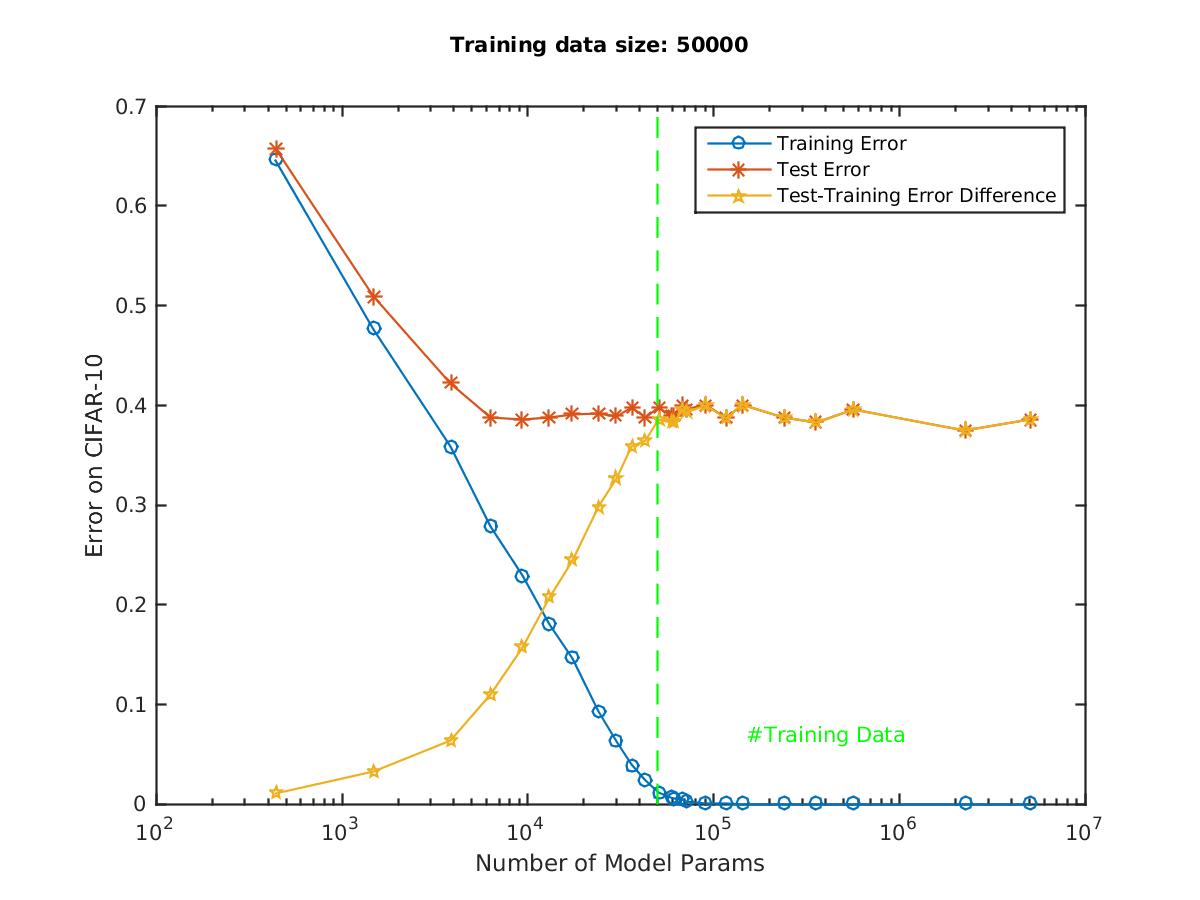}
\caption{\it The previous figures show dependence on $N$ -- number of
  training examples -- for a fixed architecture with $W$ parameters.
  This figure shows dependence on $W$ for a fixed training set with
  $N$ examples. The network is again a 5-layer all convolutional
  polynomial network. All hidden layers have the same number of
  channels. Neither data augmentation nor regularization is
  performed. The classical theory explains the generalization behavior
  on the left; the challenge is to explain the lack of overfitting for
  $W>n$. As shown here, there is zero
  error for $W \ge n$.}
\label{GreatPlotPol}
\end{figure*}


\begin{figure*}[h!]\centering
\includegraphics[width=1.0\textwidth]{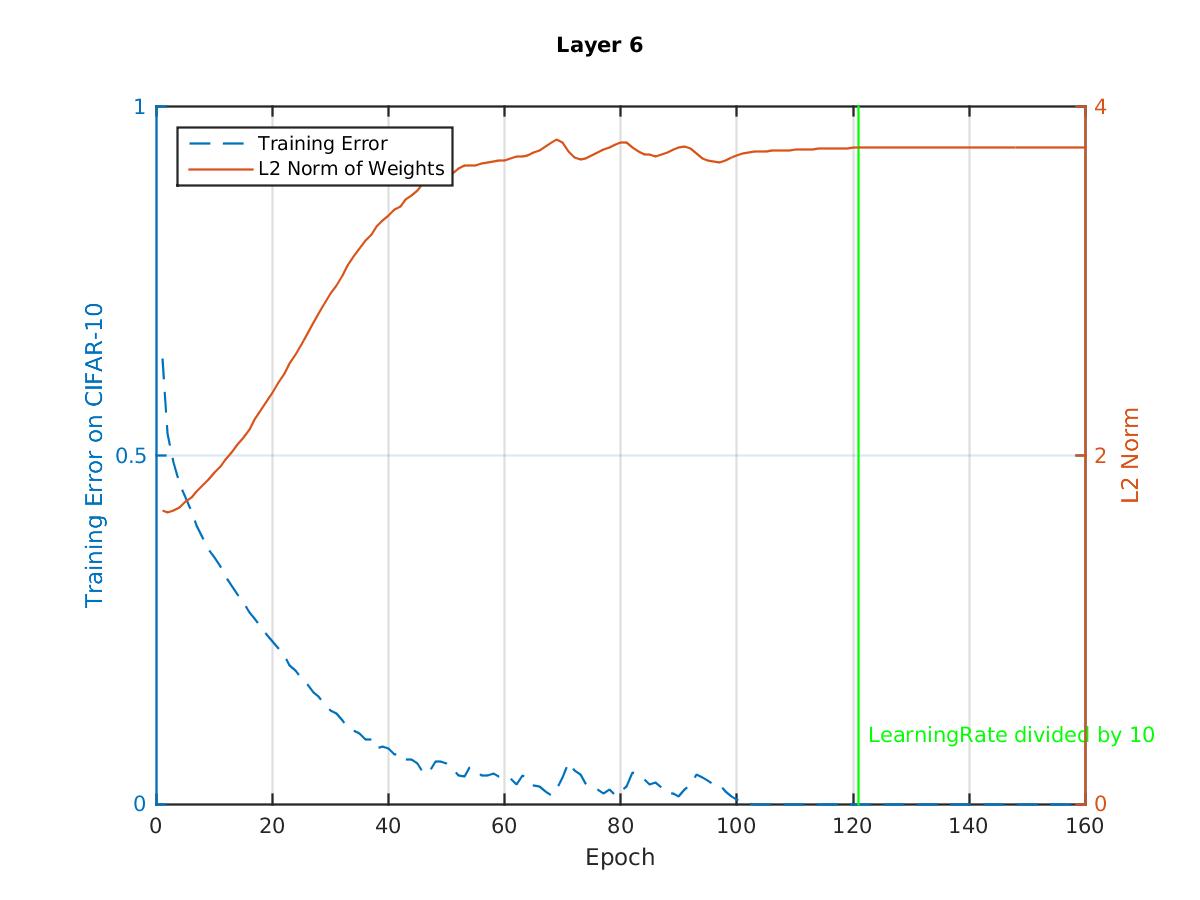}
\caption{\it The norm of the weights increases with training epochs of
  SGD until a certain level after which it does not change
  anymore. The behaviour shown here is typical for all layers for CIFAR-10.}
\label{weightsNorm_layer_6}
\end{figure*}

The same very high degree polynomial represented by a 5-layers network
with univariate polynomial activation of its hidden units can be, in
principle, parametrized in a standard way in terms of all the relevant
monomials, with each parameter corresponding to the coefficient of one
of the monomials\ref{SectionPolynomials}. Such a parametrization
corresponds to a one-hidden layer network where the weights of the
first layer are fixed (for instance equal to 1), each hidden unit
computes one of the monomial from the $d$ inputs and the learnable
weights are from the hidden units to the output.

It is well known that the coefficients of the polynomial in the standard
parametrization can be learned by using gradient descent or stochastic
gradient descent. This corresponds to optimizing  a linear network with square
loss which is equivalent to minimizing  a possibly degenerate
quadratic loss function.

\subsection{Additional experiments}

Figure \ref{Brando3} shows the testing error for an overparametrized
linear network optimized under the square loss. The Figure is related
to Figure \ref{Corrige:GreatPlot} in the main text.

\begin{figure*}[h!]\centering
\includegraphics[width=0.6\textwidth]{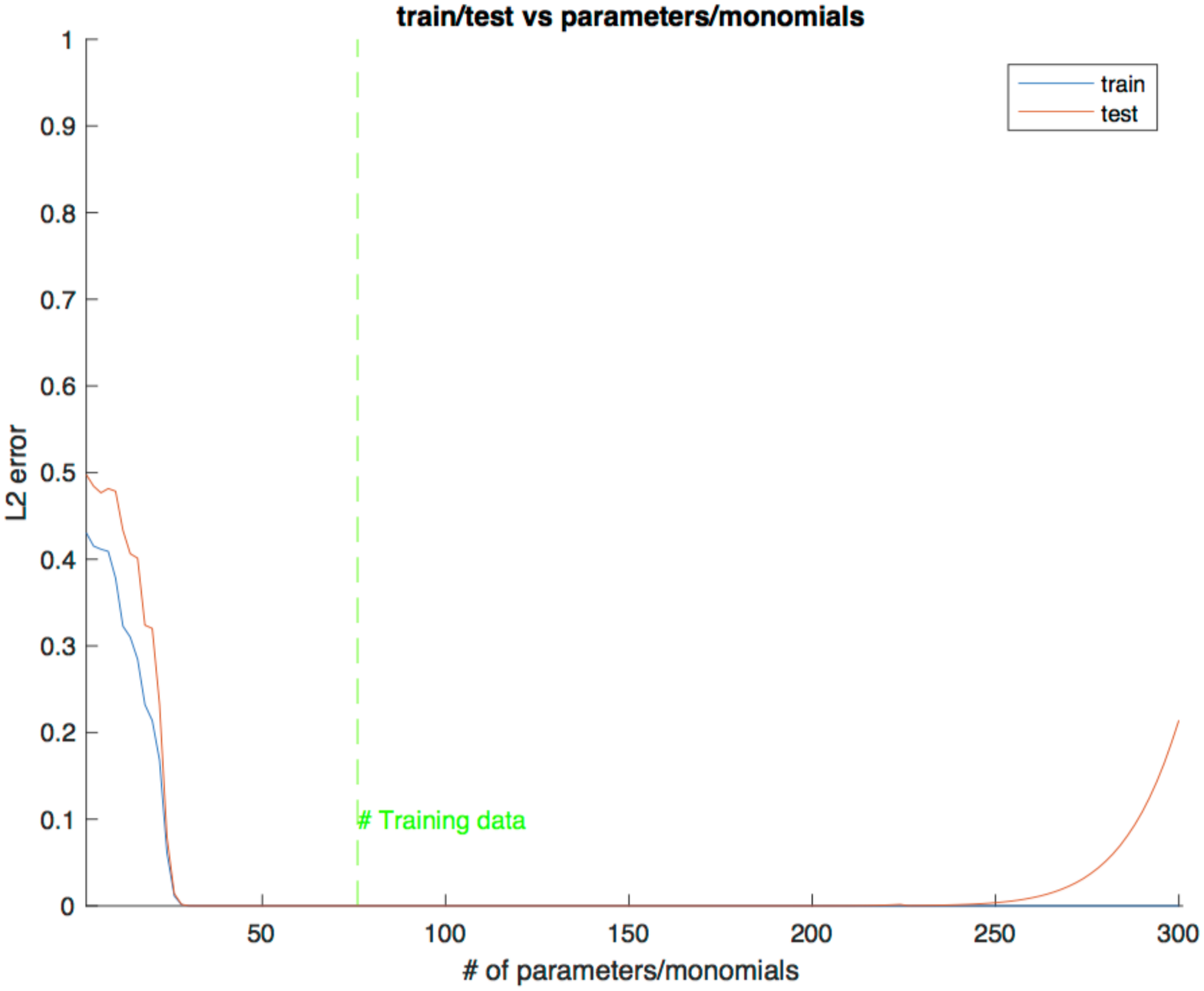}
\caption{\it Training and testing with the square loss for a linear
  network in the feature space (i.e. $y = W\phi(X)$) with a degenerate
  Hessian of the type of Figure \ref{degeneratesquareloss}. The
  feature matrix is a polynomial with increasing degree, from 1 to
  300.  The square loss is plotted vs the number of monomials, that is
  the number of parameters.  The target function is a sine function
  $f(x) = sin(2 \pi f x ) $ with frequency $f=4$ on the interval
  $[-1,1]$.  The number of training points where $76$ and the number
  of test points were $600$.  The solution to the over-parametrized
  system was the minimum norm solution.  More points where sampled at
  the edges of the interval $[-1,1]$ (i.e. using Chebyshev nodes) to
  avoid exaggerated numerical errors.  The figure shows how eventually
  the minimum norm solution overfits.}
\label{Brando3}
\end{figure*}


Figure \ref{CIFARclass} shows the behavior of the loss in CIFAR in the
absense of perturbations. This should be compared with Figure
\ref{Brando1} which shows the case of an overparametrized linear
network under quadratic loss corresponding to the multidimensional
equivalent of the degenerate situation of Figure
\ref{degeneratesquareloss}. The nondegenerate, convex case is shown in
Figure \ref{Brando2}. Figure

\begin{figure*}[h!]\centering
\includegraphics[width=1.0\textwidth]{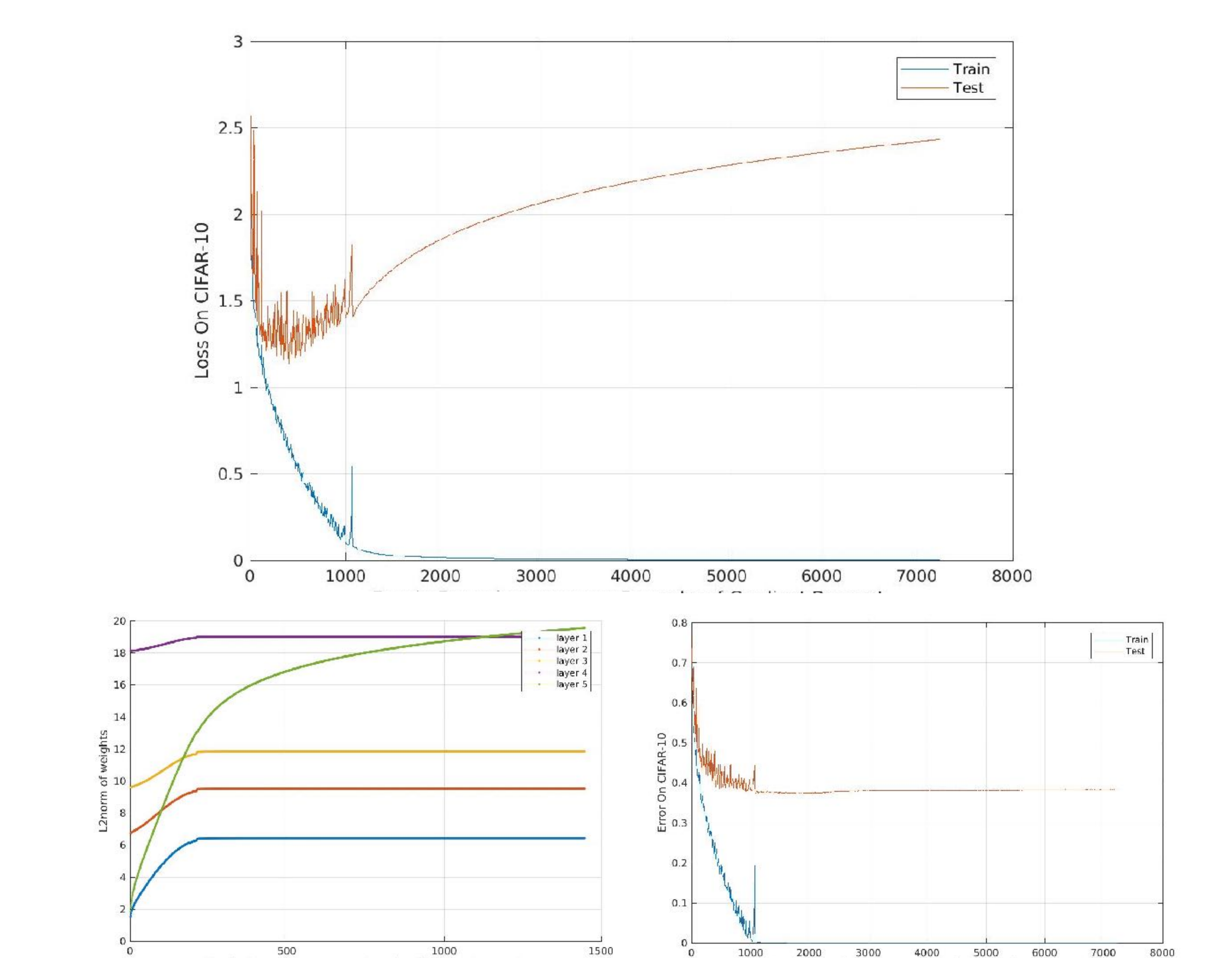}
\caption{\it Same as Figure 4 but without perturbations of weights.
  Notice that there is some overfitting in terms of the testing
  loss. Classification however is robust to this overfitting (see
  text).}
\label{CIFARclass}
\end{figure*}

\begin{figure*}[h!]\centering
\includegraphics[width=1.0\textwidth]{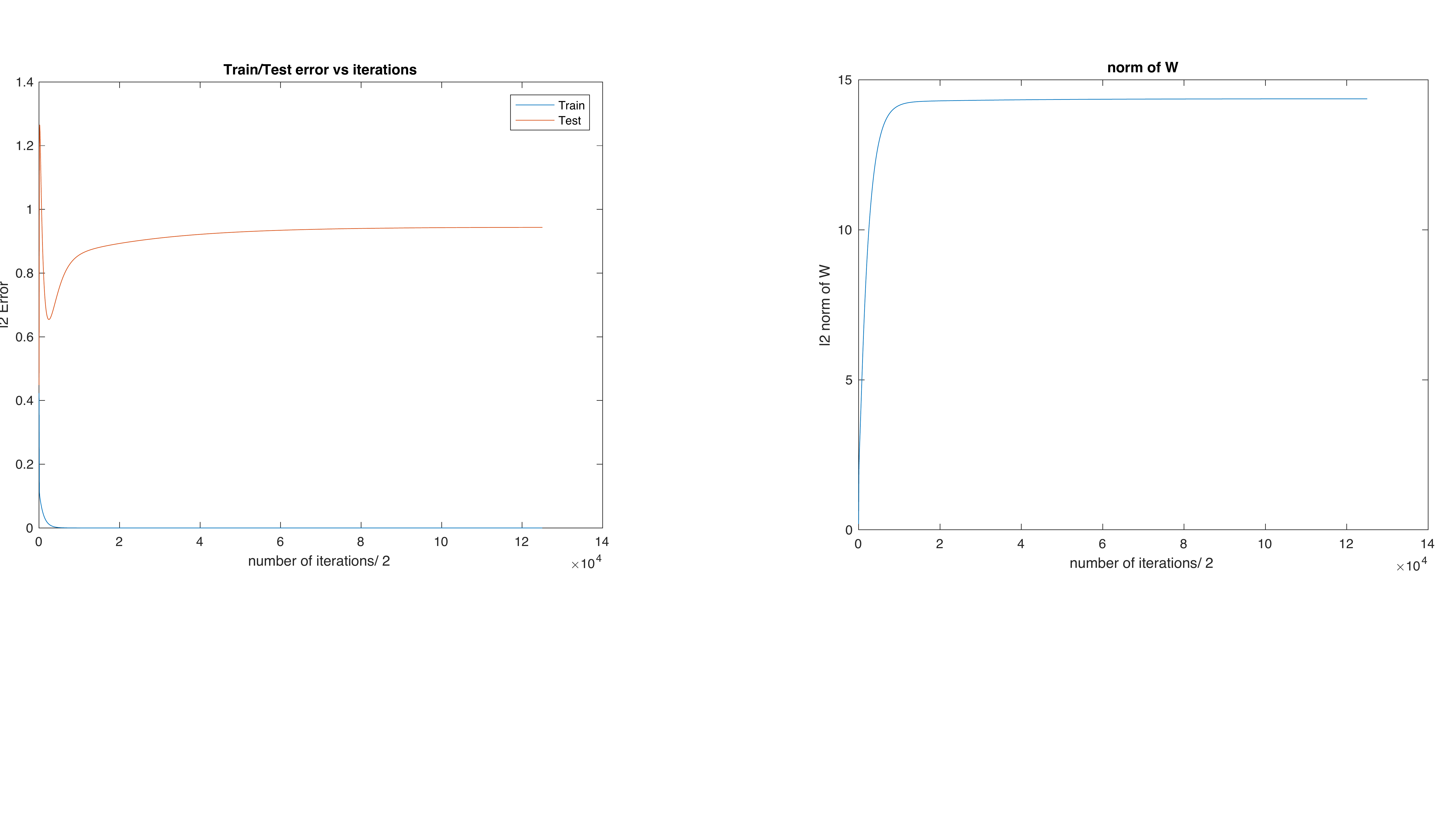}
\caption{\it Training and testing with the square loss for a linear
  network in the feature space (i.e. $y=W\Phi(X)$) with a degenerate
  Hessian of the type of Figure \ref{degeneratesquareloss}.  The
  feature matrix $\phi(X)$ is a polynomial with degree 30.  The target
  function is a sine function $f(x) = sin(2 \pi f x) $ with frequency
  $f=4$ on the interval $[-1,1]$.  The number of training point are
  $9$ while the number of test points are $100$.  The training was
  done with full gradient descent with step size $0.2$ for $250,000$
  iterations.  The weights were not perturbed in this experiment.  The
  $L_2$ norm of the weights is shown on the right.  Note that training
  was repeated 30 times and what is reported in the figure is the
  average train and test error as well as average norm of the weights
  over the 30 repetitions. There is overfitting in the test error.}
\label{Brando1}
\end{figure*}

\begin{figure*}[h!]\centering
\includegraphics[width=1.0\textwidth]{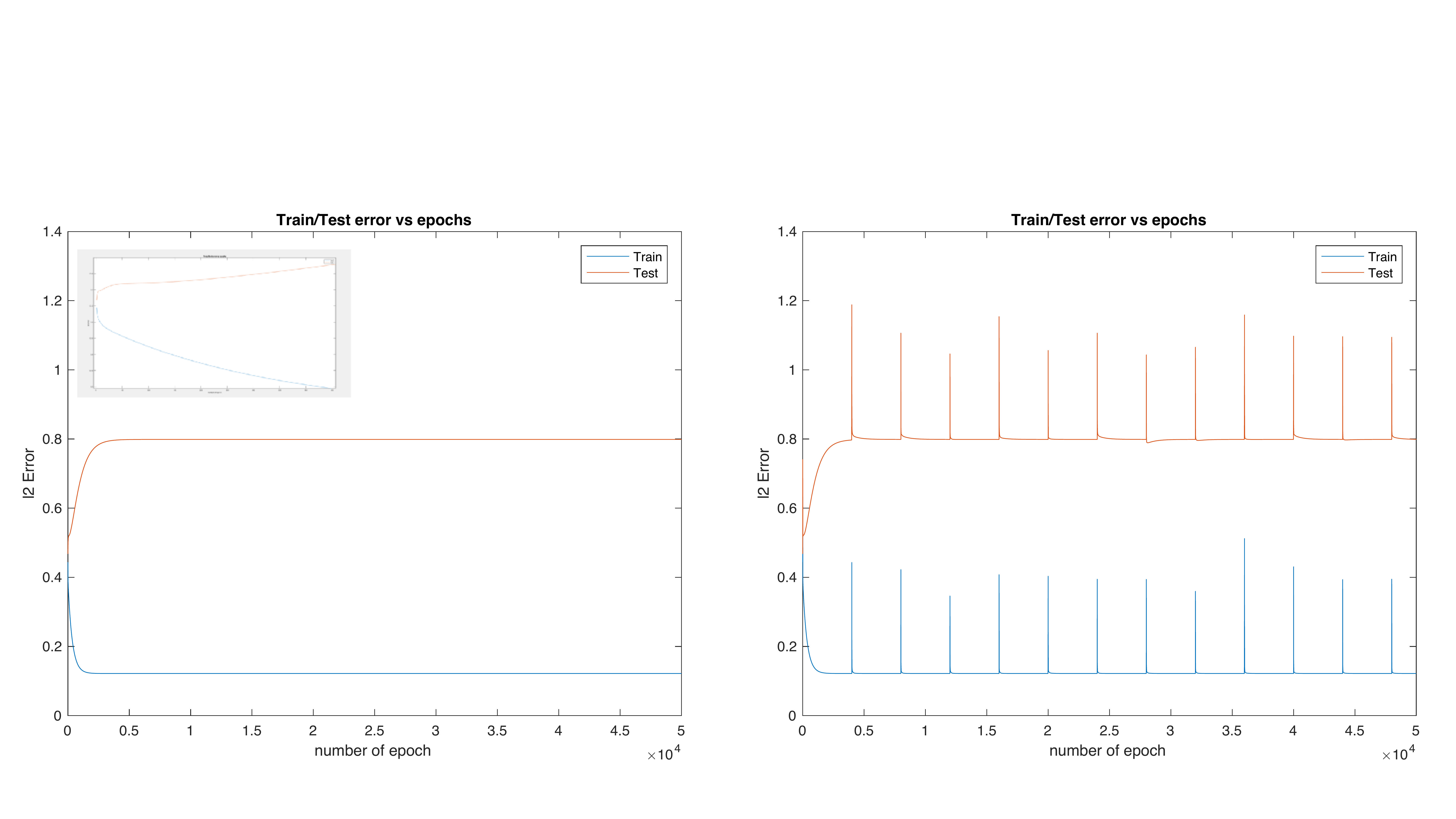}
\caption{\it The graph on the left shows training and testing loss for
  a linear network in the feature space (i.e. $y=W\Phi(X)$) in the
  nondegenerate quadratic convex case.  The feature matrix $\phi(X)$
  is a polynomial with degree 4.  The target function is a sine
  function $f(x) = sin(2 \pi f x) $ with frequency $f=4$ on the
  interval $[-1,1]$.  The number of training point are $9$ while the
  number of test points are $100$.  The training was done with full
  gradient descent with step size $0.2$ for $250,000$ iterations. The
  inset zooms in on plot showing the absense of overfitting. In the
  plot on the right, weights were perturbed every $4000$ iterations
  and then gradient descent was allowed to converge to zero training
  error after each perturbation.  The weights were perturbed by adding
  Gaussian noise with mean $0$ and standard deviation $0.6$.  The plot
  on the left had no perturbation.  The $L_2$ norm of the weights is
  shown on the right.  Note that training was repeated 30 times and
  what is reported in the figure is the average train and test error
  as well as average norm of the weights over the 30 repetitions.}
\label{Brando2}
\end{figure*}

As shown in Figure \ref{SquareLossPol} and Figure \ref{SquareClass}, qualitative
  properties of deep learning networks under the crossentropy loss
  function seem to hold, as expected, under the quadratic loss. 
\begin{figure*}[h!]\centering
\includegraphics[width=1.0\textwidth]{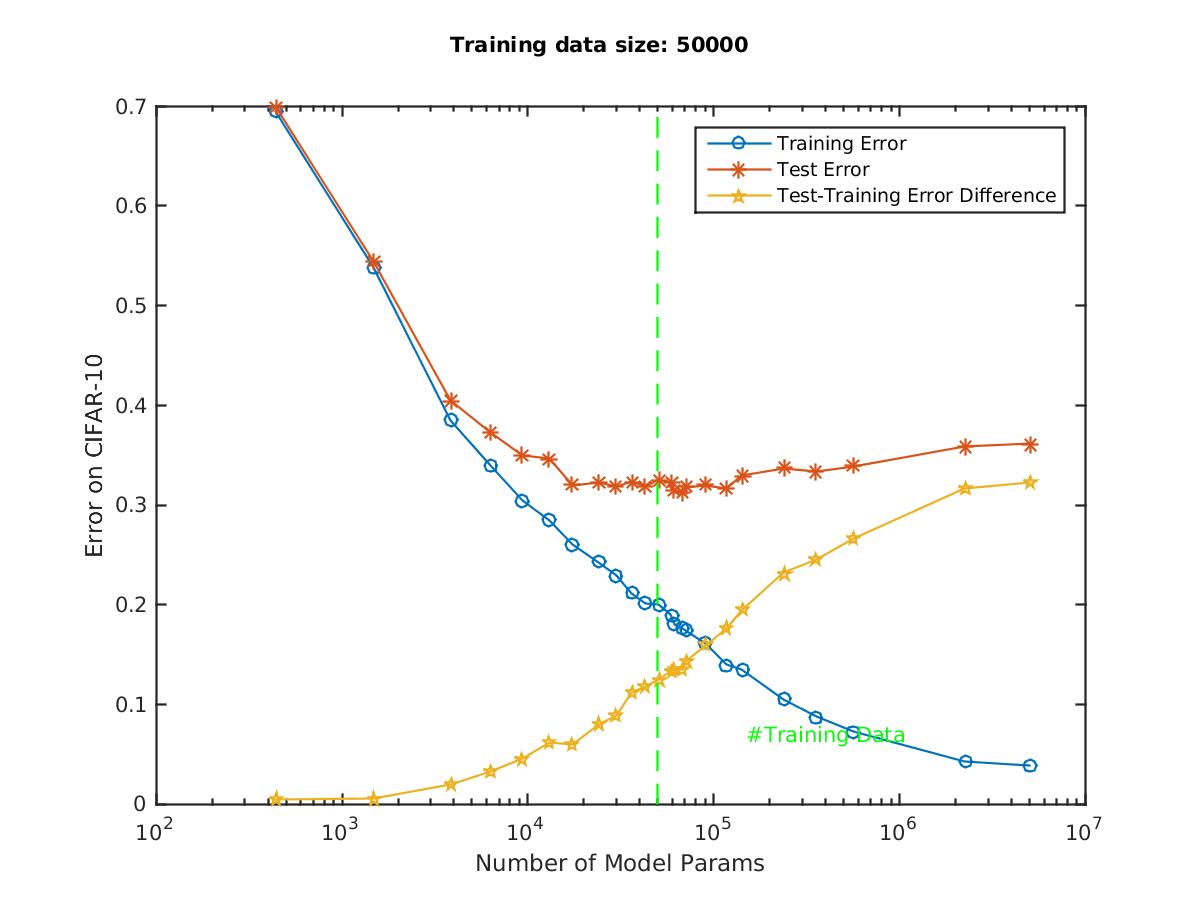}
\caption{\it The figure shows the behavior of a deep polynomial
  network trained on the CIFAR database, using the square loss. To be
  compared with Figure \ref{GreatPlot}}.
\label{SquareLossPol}
\end{figure*}

\begin{figure*}[h!]\centering
\includegraphics[width=1.0\textwidth]{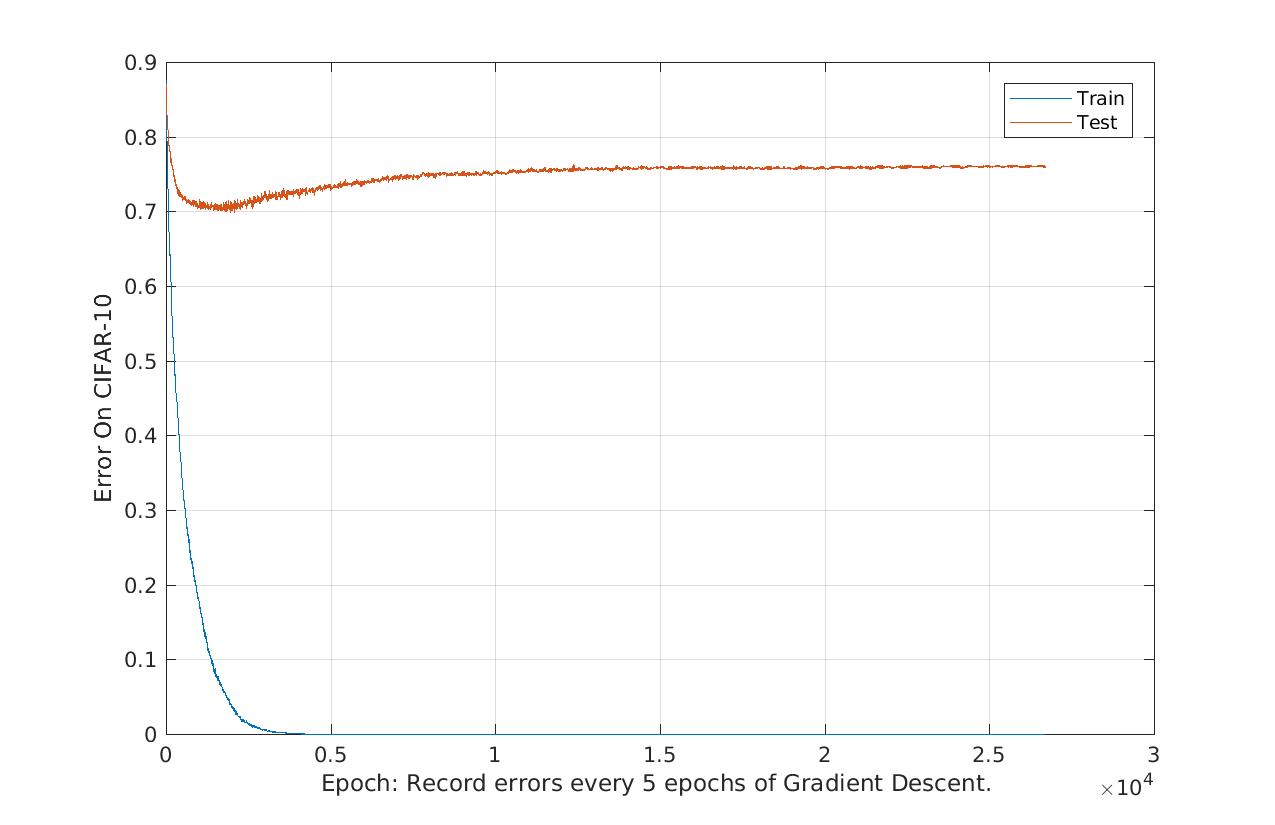}
\caption{\it Classification  error  on CIFAR obtained with GD
  optimizing the square loss risk. The training set has $2000$
  examples and the network has $188810$ parameters. Overfitting
  appears here for the classification error.}
\label{SquareClass}
\end{figure*}

\end{document}